\theoremstyle{plain}
\newtheorem{theorem}{Theorem}[section]
\newtheorem{lemma}[theorem]{Lemma}
\theoremstyle{definition}
\newtheorem{definition}[theorem]{Definition}
\newtheorem{assumption}[theorem]{Assumption}
\theoremstyle{remark}
\def \R {\mathbb{R}}
\newcommand{\calA}{{\mathcal{A}}}
\newcommand{\calX}{{\mathcal{X}}}
\newcommand{\cX}{{\mathcal{X}}}
\newcommand{\cI}{{\mathcal{I}}}
\newcommand{\cO}{{\mathcal{O}}}
\newcommand{\cR}{{\mathcal{R}}}
\newcommand{\cT}{{\mathcal{T}}}
\newcommand{\E}{\mathbb{E}}
\newcommand{\Reg}{{\mathrm{Reg}}}
\newcommand{\N}{\mathbb{N}}
\newcommand{\wt}[1]{\widetilde{#1}}
\newcommand{\inner}[1]{ \left\langle {#1} \right\rangle }
\newcommand{\order}{\mathcal{O}}
\newcommand{\pref}[1]{\prettyref{#1}}
\newcommand{\savehyperref}[2]{\texorpdfstring{\hyperref[#1]{#2}}{#2}}
\newcommand{\drift}{\ensuremath{\mathtt{Drift}}}
\DeclarePairedDelimiter{\inprod}{\langle}{\rangle}
\DeclarePairedDelimiter{\abs}{|}{|}
\DeclarePairedDelimiter{\norm}{\|}{\|}
\DeclarePairedDelimiter{\ceil}{\lceil}{\rceil}
\newcommand{\lrb}[1]{\left(#1\right)}
\newcommand{\brb}[1]{\bigl(#1\bigr)}
\newcommand{\Brb}[1]{\Bigl(#1\Bigr)}
\newcommand{\bbrb}[1]{\biggl(#1\biggr)}
\newcommand{\Bbrb}[1]{\Biggl(#1\Biggr)}
\newcommand{\lsb}[1]{\left[#1\right]}
\newcommand{\bsb}[1]{\bigl[#1\bigr]}
\newcommand{\Bsb}[1]{\Bigl[#1\Bigr]}
\newcommand{\lcb}[1]{\left\{#1\right\}}
\newcommand{\bcb}[1]{\bigl\{#1\bigr\}}
\newcommand{\Bcb}[1]{\Bigl\{#1\Bigr\}}
\DeclareMathOperator*{\argmin}{arg\,min}
\crefname{assumption}{assumption}{assumptions}
\Crefname{assumption}{Assumption}{Assumptions}
\newcommand{\dmax}{\ensuremath{d_{\mathrm{max}}}}
\newcommand{\dtot}{\ensuremath{d_{\mathrm{tot}}}}
\icmltitlerunning{Exploiting Curvature in Online Convex Optimization with Delayed Feedback}
\begin{document}

\twocolumn[
\icmltitle{Exploiting Curvature in Online Convex Optimization with Delayed Feedback}



\icmlsetsymbol{equal}{*}

\begin{icmlauthorlist}
\icmlauthor{Hao Qiu}{equal,unimi}
\icmlauthor{Emmanuel Esposito}{equal,unimi}
\icmlauthor{Mengxiao Zhang}{equal,iowa}
\end{icmlauthorlist}

\icmlaffiliation{unimi}{Università degli Studi di Milano}
\icmlaffiliation{iowa}{University of Iowa}

\icmlcorrespondingauthor{Hao Qiu}{hao.qiu@unimi.it}
\icmlcorrespondingauthor{Emmanuel Esposito}{emmanuel@emmanuelesposito.it}
\icmlcorrespondingauthor{Mengxiao Zhang}{mengxiao-zhang@uiowa.edu}

\icmlkeywords{online learning, delayed feedback, curved losses}

\vskip 0.3in
]



\printAffiliationsAndNotice{\icmlEqualContribution} 

\begin{abstract}
In this work, we study the online convex optimization problem with curved losses and delayed feedback.
When losses are strongly convex, existing approaches
obtain regret bounds of order $d_{\max} \ln T$, where $d_{\max}$ is the maximum delay and $T$ is the time horizon.
However, in many cases, this guarantee can be much worse than $\sqrt{d_{\mathrm{tot}}}$ as obtained by a delayed version of online gradient descent, where $d_{\mathrm{tot}}$ is the total delay.
We bridge this gap by proposing a variant of follow-the-regularized-leader that obtains regret of order $\min\{\sigma_{\max}\ln T, \sqrt{d_{\mathrm{tot}}}\}$, where $\sigma_{\max}$ is the maximum number of missing observations.
We then consider exp-concave losses and extend the Online Newton Step algorithm to handle delays with an adaptive learning rate tuning, achieving regret $\min\{d_{\max} n\ln T, \sqrt{d_{\mathrm{tot}}}\}$ where $n$ is the dimension.
To our knowledge, this is the first algorithm to achieve such a regret bound for exp-concave losses.
We further consider the problem of unconstrained online linear regression and achieve a similar guarantee by designing a variant of the Vovk-Azoury-Warmuth forecaster with a clipping trick.
Finally, we implement our algorithms and conduct experiments under various types of delay and losses, showing an improved performance over existing methods.
\end{abstract}


\section{Introduction}
\label{sec:intro}
Online convex optimization (OCO) is a powerful framework for sequential decision making in uncertain environments \cite{hazan2007logarithmic,orabona2019modern}. In classic OCO, a learner repeatedly makes a decision, incurs a loss for the chosen action, and uses the feedback of the loss function at this round to update her strategy in the next round.
However, in many real-world applications, feedback is not immediately available after the learner's decision but is instead subject to a delay. For instance, in online ad recommendation systems~\citep{he2014practical}, click-through information may be delayed, and during this time the system must continue making recommendations for other users without access to the delayed feedback.
Another crucial element in OCO is given by properties of the loss functions such as the curvature. 
It is indeed often the case that losses have additional curvature properties such as strong convexity or exp-concavity.
For example, exp-concave losses are prevalent in portfolio management \citep{cover1991universal}, in which the learner (investor) needs to distribute her wealth over a set of financial instruments in order to maximize her return.
When the loss functions have a certain curvature, previous works \citep{hazan2007logarithmic} have shown that a significantly better regret guarantee can be achieved (i.e., the so-called fast rates). However, this type of assumption received little attention when assuming that the feedback suffers some delay. Therefore, we are interested in investigating the following question: 
\begin{center}
\emph{Can we design algorithms that exploit the loss curvature to obtain improved guarantees even with delayed feedback?}
\end{center}

There is a line of works studying OCO with delayed feedback. For general convex functions, \citet{quanrud2015online} provided an algorithm called Delayed Online Gradient Descent (DOGD) and achieves a regret of $\order(\sqrt{T+\dtot})$ where $T$ is the time horizon and $\dtot$ is the total delay. Subsequently, \citet{wan2022online,wu2024online} focused on strongly convex losses, introducing DOGD-SC and SDMD-RSC, which achieve a regret bound of $\order((\dmax+1)\ln T)$, where $\dmax$ represents the maximum delay for any single round of feedback. However, the $\order((\dmax+1)\ln T)$ regret bound can sometimes be much worse than $\order(\sqrt{T+\dtot})$. This occurs in scenarios when even a single round of feedback is delayed by $\Theta(T)$ rounds (e.g., missing feedback), undermining the benefits of having both regret guarantees under stronger curvature assumptions. Furthermore, to the best of our knowledge, no prior work has investigated whether improved regret guarantees are achievable for exp-concave losses under delayed feedback, leaving an important gap in the literature.

\paragraph{Contribution.}
\begin{table*}[ht]
    \centering
    \begin{tabular}{|c|c|c|c|}
        \hline
        \multirow{2}{*}{\textbf{Loss type}} & \multicolumn{3}{c|}{\textbf{Regret bound}} \\
        \cline{2-4}
        & \citet{quanrud2015online} & \makecell{\citet{wan2022online};\\\citet{wu2024online}} & Our work \\
        \hline
        Strongly convex & $\sqrt{\dtot + T}$ & $(\dmax+1) \ln T$ & $\min\{\sigma_{\max} \ln T, \sqrt{d_{\mathrm{tot}}}\}+\ln T$ \\
        \hline
        Exp-concave & $\sqrt{\dtot + T}$ & N/A & $\min\{d_{\max} n\ln T, \sqrt{d_{\mathrm{tot}}}\}+ n\ln T$ \\
        \hline
        Online linear regression & N/A & N/A & $\min\{d_{\max} n\ln T, \sqrt{d_{\mathrm{tot}}}\}+n\ln T$ \\
        \hline
    \end{tabular}
    \caption{Main results and comparisons with prior work. Here $T$ is the number of rounds, $n$ is the dimension of the feasible domain, $\dmax$ is the maximum delay, $\sigma_{\max} \leq \dmax$ is the maximum number of missing observations, and $\dtot$ is the total delay. In \pref{tab: results}, we omit the dependency on the curvature parameters, Lipschitz parameters, the norm of the comparator and domain diameter for conciseness. The detailed dependencies are explicitly shown in the respective theorem statements.}
    \label{tab: results}
\end{table*}
To address these gaps, we propose a suite of algorithms and offer a comprehensive analysis for OCO with delayed feedback under both strongly convex and exp-concave losses, and we include a special case of (unconstrained) online linear regression with delays.
The main contributions of this work can be summarized as follows (see also \pref{tab: results}):
\begin{itemize} \setlength{\itemsep}{0pt}
    \item We first consider the class of strongly convex losses in \pref{sec:SC}.  Specifically, we propose an algorithm based on the follow-the-regularized-leader framework and obtain a $\order \left ( \min\lcb{\sigma_{\max}\ln T, \sqrt{\dtot}}+\ln T\right)$ regret, where $\sigma_{\max}$ is the maximum number of missing observations over rounds. Compared with the results obtained by \citet{wan2022online} and \citet{wu2024online}, our results have several advantages. First, since $\sigma_{\max}$ is always no larger than $\dmax$ and can be significantly smaller than it, our $\sigma_{\max}\ln T$ bound improves upon the $\dmax\ln T$ bound in \citet{wan2022online} and \citet{wu2024online}. Second, we prove that our algorithm \emph{simultaneously} achieves a $\cO\lrb{\sqrt{\dtot} + \ln T}$ regret bound, making our algorithm no worse than the bound achieved by DOGD \citep{quanrud2015online} either. Third, compared with the regret bounds obtained in \citet{wan2022online} and \citet{wu2024online}, our regret guarantee \emph{does not depend on} the diameter of the action domain and recovers the one proven in \citet{hazan2007logarithmic} when there is no delay. Additionally, we provide a novel and improved analysis of the OMD-based algorithm originally proposed by \citet{wu2024online} in \Cref{app:exps}, obtaining a regret bound that is again independent of the diameter of the action domain.
    
    \item In \pref{sec:exp_concave}, we consider exp-concave losses, a broader function class compared to the strongly convex one. Specifically, we propose an algorithm based on the Online Newton Step (ONS) method that achieves a $\order\left(\min\left\{d_{\max} n\ln T, \sqrt{\dtot}\right\} + n\ln T\right)$ regret bound. To the best of our knowledge, this is the first algorithm to achieve logarithmic regret for exp-concave losses under delayed feedback, answering an open question proposed in \citet{wan2022online}. While both the bounds $d_{\max} n\ln T$ and $\sqrt{\dtot}$ can be achieved using a simple learning rate within the ONS framework, it is essential to use a delay-adaptive learning rate tuning scheme to achieve the best of these two guarantees within our analysis. 

    \item In \pref{sec:olr}, we investigate online linear regression (OLR) problem, where the feasible domain is \emph{unconstrained}, i.e., it corresponds to the entire $n$-dimensional Euclidean space $\R^n$. Leveraging the specific structure in OLR, we develop an algorithm based on the Vovk-Azoury-Warmuth forecaster, achieving a regret bound of $\order\left(\|u\|_2^2(\min\left\{d_{\max} n\ln T, \sqrt{\dtot}\right\} + n\ln T) \right)$ without requiring any prior knowledge of neither the comparator $u \in \R^n$ nor the data. This result is achieved by incorporating a carefully designed clipping technique and, once again, employing an adaptive tuning of the learning rate.
    \item Finally, in \pref{sec:exps}, we implement all our proposed algorithms and conduct experiments to validate our theoretical results across multiple delayed settings and loss functions with different curvature properties. We also compare our methods with existing approaches to demonstrate their effectiveness.
\end{itemize}

\subsection{Related works}
\paragraph{Online learning with curved losses.} 
While \citet{abernethy2008optimal} have shown that $\Theta(\sqrt{T})$ is the minimax regret for OCO, if the loss functions further enjoy curvature, the minimax regret can be improved. \citet{hazan2007logarithmic} show that OGD with a specific choice of learning rate achieves $\mathcal{O}(\frac{G^2}{\lambda}\ln T)$ regret for strongly convex losses where $G$ is the maximum $\ell_2$ norm of any loss gradient and $\lambda$ is the strong convexity parameter.\footnote{The definitions of these parameters are deferred to \pref{sec:set}.} This upper bound is also minimax optimal as proven in \citet{abernethy2008optimal}. For exp-concave losses, \citet{hazan2007logarithmic} proposed Online Newton Step (ONS) achieving $\mathcal{O}((\frac{1}{\alpha}+GD)\ln T)$ regret where $\alpha$ is the exp-concavity parameter and $D$ is the diameter of the feasible domain. \citet{hazan2007logarithmic} also proposed Exponential Weight Online Optimization (EWOO), achieving diameter and gradient scale independent guarantees. However, the algorithm is less practical due to its sampling complexity. For OLR, \citet{vovk2001competitive} and \citet{azoury2001relative} independently introduced the Vovk-Azoury-Warmuth (VAW) forecaster achieving $\order(\ln T)$ regret without requiring prior knowledge of the data and the comparator. For a more detailed survey on OCO, we recommend the reader to \citet{hazan2016introduction} and \citet{orabona2019modern}. 

\paragraph{Online learning with delayed feedback.} \citet{weinberger2002delayed} initiated the study of online learning with delayed feedback, proposing an algorithm achieving $d\cdot R(T/d)$ where $d$ is the \emph{fixed and known per-round delay} and $R(T)$ is the regret upper bound for some base algorithm that assumes no delay in the feedback. Specifically, their meta-algorithm runs $d+1$ independent copies of the base algorithm on disjoint time lines in a round-robin fashion. However, this meta-algorithm is computationally expensive and does not show good empirical performances. Subsequently, \citet{langford2009slow} proposed a practical algorithm by simply performing the gradient descent step using the observed gradients at each round, and achieved $\order (\sqrt{dT})$ and $\order (d \ln T)$
regret bounds for convex and strongly convex functions, respectively.

When delay is not uniform, \citet{joulani2013online} proposed BOLD (Black-box Online Learning with Delays) extending the method of \citet{weinberger2002delayed} and achieve $\dmax\cdot R(T/\dmax)$ regret, but the algorithm still maintains multiple instances of base algorithms, which could be prohibitive in terms of computational costs. For convex functions, \citet{quanrud2015online} achieved $\mathcal{O}(\sqrt{\dtot}$) where $\dtot$ is the total delay accumulated over $T$ rounds. \citet{wan2022onlinefrankwolfe,wan2023projectionfree} proposed a first Frank-Wolfe-type online algorithm to handle delayed feedback and obtain a regret bound
of $\mathcal{O}(T^{3/4} + \dtot T^{-3/4})$ for general convex loss and $\mathcal{O}(T^{2/3} + \dmax \ln T)$ under strong convexity. There is also an interesting line of works whose focus is to obtain adaptive regret guarantees with delayed feedback \citep{NIPS2014_5cce8ded,joulani2016delay,flaspohler2021online} or variants of delayed feedback \citep{gatmiry2024adversarial,baron2025nonstochastic,ryabchenko2025capacity}.

Two most related works to ours are \citet{wan2022online} and \citet{wu2024online}, which consider strongly convex losses together with delays. Specifically, \citet{wan2022online} first proposed DOGD-SC for strongly convex losses, and establish a regret bound of $\mathcal{O}(\frac{GD+G^2}{\lambda}d_{\max}\ln T)$. Subsequently, \citet{wu2024online} proposed SDMD-RSC and obtained a $\mathcal{O}(\frac{\dmax G^2}{\lambda^2}+\frac{G^2+D}{\lambda}d_{\max}\ln T)$ regret bound.\footnote{\citet{wu2024online} also considers the class of relative strongly convex loss functions.} 

Beyond full-gradient feedback, there exists a growing interest in developing algorithms with delayed bandit feedback for a range of problems, including multi-armed bandits \citep{bandit-pmlr-v49-cesa-bianchi16,bandit-cella2020stochastic,zimmert2020optimal,masoudian2022bobw,bandit-pmlr-v151-van-der-hoeven22a,esposito2023delayed,bandit-pmlr-v195-hoeven23a,masoudian2024best,schlisselberg2025delay,zhang2025}, Markov decision processes \citep{mdp-lancewicki2022learning,mdp-NEURIPS2022_d850b7e0,bandit-pmlr-v195-hoeven23a}, and online convex optimization \citep{heliou2020gradient,bistritz2022no,wan2024improved}.

\section{Problem setting}
\label{sec:set}

Let $T \in \N$ be the time horizon and $n \in \N$ be the dimension.
Denote by $\cX \subset \R^n$ the domain, which we assume to be closed and non-empty.
In each round $t \in [T]$, the learner selects a point $x_t \in \cX$ as its decision and incurs a loss $f_t(x_t)$ given by some unknown function $f_t\colon \cX \to \R$ that we assume to be convex and differentiable.
Normally, in the standard OCO setting, the learner would then immediately observe the gradient $g_t = \nabla f_t(x_t)$.
On the other hand, here we consider the delayed feedback scenario in which such a gradient $g_t$ is only observed at round $t+d_t$ with some \emph{unknown arbitrary delay} $d_t \ge 0$.
We assume $t+d_t \le T$ for all $t\in[T]$ without loss of generality \cite{joulani2013online,joulani2016delay} because the feedback of any round $t$ with $t+d_t\ge T$ cannot be used the learner.
The performance of the learner is then measured via the regret, which is defined as follows:
\[
    \Reg_T = \max_{u\in\cX}\Reg_T(u) = \max_{u\in\cX}\sum_{t=1}^T \brb{f_t(x_t) - f_t(u)} \;.
\]
For convenience, we define $o_t = \{\tau\in \mathbb{N} : \tau+d_\tau < t\} \subseteq [t-1]$ to be the set of rounds whose gradients are observed before round $t$, and let $m_t = [t-1] \setminus o_t$ be the set of rounds whose observation is yet to be received at the beginning of round $t$. Define $\sigma_{\max}=\max_{t\in[T]}|m_t|$ to be the maximum number of missing observations over $T$ rounds, $\dmax = \max_{t\in[T]} d_t$ to be the maximum delay, and $\dtot = \sum_{t} d_t$ to be the total delay. Also define $\dmax^{\le t} = \max_{\tau \le t}\min\{d_\tau, t-\tau\}$ as the maximum delay that has been perceived up to round~$t$.

Before presenting our main results, we must first introduce some definitions about the curvature of the loss functions.
\begin{definition}
    A function $f\colon \cX \to \R$ is \emph{$\lambda$-strongly convex} with respect to $\norm{\cdot}$ for $\lambda > 0$ if, for all $x,y \in \cX$,
    \(
        f(y) \ge f(x) + \inprod{\nabla f(x), y-x} + \frac{\lambda}{2}\norm{y-x}^2 \,.
    \)
\end{definition}
\begin{definition}
    A function $f\colon \cX \to \R$ is \emph{$\alpha$-exp-concave} for $\alpha > 0$ if $x \mapsto \exp(-\alpha f(x))$ is concave over $\cX$.
\end{definition}
We finally introduce some standard boundedness assumptions relating to the gradients and the domain.
\begin{assumption} \label{asm:gradient-bound}
   For every $t \in [T]$, the gradient of $f_t$ has norm bounded by $G \ge 0$, i.e., $\max_{x \in \cX} \norm{\nabla f_t(x)}_2 \le G$.
\end{assumption}
\begin{assumption} \label{asm:diameter}
    The diameter of $\cX$ is bounded by $D \ge 0$, i.e., $\max_{x,y \in \cX} \norm{x-y}_2 \le D$. We also assume $\mathbf{0}\in\cX$.
\end{assumption}

\paragraph{Other notations.}  For a positive semidefinite matrix $A\in \R^{n\times n}$ and $x\in \R^d$, we denote $\norm{x}_A=\sqrt{x^\top Ax}$ to be the Mahalanobis norm induced by $A$ and, if $A$ is positive definite, let $\norm{x}_{A^{-1}} = \sqrt{x^\top A^{-1}x}$ be the dual norm. We denote $\mathbf{1}$ as the all-one vector in an appropriate dimension.

\section{Delayed OCO with strongly convex losses}
\label{sec:SC}

In this section, we consider the problem of delayed OCO with strongly convex losses and propose \pref{alg:delayed-ftrl-strongly-convex}, which is built upon the follow-the-regularized-leader (FTRL) algorithm.
Specifically, after receiving the gradients $g_\tau$ for all $\tau\in o_{t+1}\backslash o_t$ at the end of round $t$, we compute the updated decision $x_{t+1}$ as shown in \pref{eqn:sc_update_1}, which is the minimizer of the cumulative linearized loss with respect to all the currently observed gradients, plus a squared $\ell_2$-regularization term with respect to \emph{all the past decisions}. The following theorem shows that \pref{alg:delayed-ftrl-strongly-convex} achieves $\cO\brb{\frac{G^2}{\lambda}\lrb{\ln T + \min\lcb{\sigma_{\max}\ln T, \sqrt{\dtot}}}}$ regret bound without any diameter assumption on the domain.

\begin{algorithm}[t]
    \caption{Delayed FTRL for strongly convex functions}
    \label{alg:delayed-ftrl-strongly-convex}
    \begin{algorithmic}[1]
        \INPUT strong convexity parameter $\lambda > 0$
        \INITIALIZE $x_1 \in \cX$
        \FOR{$t = 1, 2, \dots$}
            \STATE Play $x_t$; receive $g_\tau = \nabla f_\tau(x_\tau)$ for all $\tau \in o_{t+1} \setminus o_{t}$
            \STATE Update
            \vspace{-6pt}
            {\small
            \begin{align}
                x_{t+1} &= \argmin\limits_{x \in \cX} \sum\limits_{\tau \in o_{t+1}} \inprod{g_\tau, x} + \frac{\lambda}{2} \sum\limits_{s \le t} \norm{x - x_s}_2^2  \label{eqn:sc_update_1}
            \end{align}
            }
            \vspace{-10pt}
        \ENDFOR
    \end{algorithmic}
\end{algorithm}

\begin{restatable}{theorem}{strongcvx}
\label{thm:strongcvx}
    Assume that $f_1, \dots, f_T$ are $\lambda$-strongly convex with respect to the Euclidean norm $\norm{\cdot}_2$.
    Then, under \pref{asm:gradient-bound}, \pref{alg:delayed-ftrl-strongly-convex} guarantees that
    \[
        \Reg_T = \cO\lrb{\frac{G^2}{\lambda}\lrb{\ln T + \min\lcb{\sigma_{\max}\ln T, \sqrt{\dtot}}}} .
    \]
\end{restatable}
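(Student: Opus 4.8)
The plan is to analyze \pref{alg:delayed-ftrl-strongly-convex} as an instance of FTRL with a time-varying regularizer and to control the regret by a standard ``be-the-leader'' argument adapted to the delayed setting. First I would rewrite the update \pref{eqn:sc_update_1} by expanding the squared norms: since $\frac{\lambda}{2}\sum_{s\le t}\norm{x-x_s}_2^2 = \frac{\lambda t}{2}\norm{x}_2^2 - \lambda\inprod{\sum_{s\le t}x_s, x} + \text{const}$, the iterate $x_{t+1}$ is the unconstrained-in-spirit minimizer (projected onto $\cX$) of a quadratic, so it admits the closed form $x_{t+1} = \Pi_{\cX}\!\lrb{\frac1t\sum_{s\le t}x_s - \frac{1}{\lambda t}\sum_{\tau\in o_{t+1}}g_\tau}$. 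This exposes the regularizer at round $t+1$ as $\psi_{t+1}(x) = \frac{\lambda t}{2}\norm{x}_2^2$, which is $\lambda t$-strongly convex, and makes clear why no diameter assumption is needed: the regularizer itself grows and the comparator enters only through $\norm{u-x_1}_2^2$-type terms that telescope against the curvature of the losses.

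Next I would set up the regret decomposition in two pieces. Write $F_t(x) = \sum_{\tau\le t}\inprod{g_\tau,x} + \frac{\lambda}{2}\sum_{\tau\le t}\norm{x-x_\tau}_2^2$ for the ``fully informed'' objective and $\widehat F_{t}(x)$ for the one actually optimized at round $t$ (using only $o_t$). By $\lambda$-strong convexity, $\sum_t (f_t(x_t)-f_t(u)) \le \sum_t \inprod{g_t, x_t-u} - \frac{\lambda}{2}\sum_t\norm{x_t-u}_2^2$, so it suffices to bound the linearized regret plus absorb the negative quadratic terms. The linearized regret splits as (i) the regret the algorithm would incur if it had no delay — here the standard FTRL/FTL analysis with regularizer $\frac{\lambda t}{2}\norm\cdot_2^2$ against $\lambda$-strongly convex losses yields the $\frac{G^2}{\lambda}\ln T$ term, with the negative quadratics $-\frac{\lambda}{2}\norm{x_t-u}_2^2$ exactly cancelling the regularizer growth so the comparator norm drops out — plus (ii) a \emph{drift} term accounting for the difference between $x_t$ and the ``clairvoyant'' iterate that would have used all gradients in $[t-1]$. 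The drift term is of the form $\sum_t \inprod{g_t, x_t - x_t^{\mathrm{clair}}}$ (or equivalently $\sum_t \sum_{\tau\in m_t}\inprod{g_\tau, x_t-x_{t+1}}$-type cross terms), and this is where both branches of the $\min$ come from.

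For the drift term I would bound the per-step movement $\norm{x_t - x_{t+1}}_2$. Because the regularizer is $\lambda t$-strongly convex and each newly arriving gradient has norm $\le G$, adding the gradients revealed at round $t$ moves the unconstrained minimizer by at most $\frac{G|o_{t+1}\setminus o_t|}{\lambda t}$, and the stability against the change of regularizer contributes another $O(\norm{x_t}/t)$ term; composing with the nonexpansive projection gives $\norm{x_t - x_{t+1}}_2 = O\!\lrb{\frac{G\,|o_{t+1}\setminus o_t|}{\lambda t} + \frac{\norm{x_t}_2}{t}}$. Summed against the missing-gradient contributions, one route gives $\sum_t |m_t| \cdot \frac{G^2}{\lambda t}$-style terms, which after bounding $|m_t|\le\sigma_{\max}$ and $\sum_t \frac1t = O(\ln T)$ yields the $\frac{G^2}{\lambda}\sigma_{\max}\ln T$ bound; the alternative route groups the cross terms by delay and applies Cauchy–Schwarz together with the identity $\sum_t |m_t| = \sum_t d_t = \dtot$ (each round $\tau$ contributes to $|m_t|$ for exactly $d_\tau$ rounds) to extract $\sqrt{\dtot}$, analogously to the analysis of DOGD in \citet{quanrud2015online}. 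Taking whichever is smaller gives the $\min$.

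The main obstacle I anticipate is the simultaneous handling of both branches with a \emph{single} algorithm and learning rate: the $\sqrt{\dtot}$ analysis typically wants to leverage the negative quadratic terms $-\frac{\lambda}{2}\norm{x_t-u}_2^2$ to cancel a $\lambda\norm{x_t-x_{t+1}}_2^2$-type penalty (a ``strong convexity vs. stability'' tradeoff), while the $\sigma_{\max}\ln T$ analysis instead exploits the $1/t$ decay of the effective step size; reconciling these so that the final bound is genuinely the minimum — rather than a sum or a max — without introducing a diameter dependence in the $\sqrt{\dtot}$ term is the delicate part. Concretely, the challenge is to show the drift cross-terms $\inprod{g_\tau, x_t - x_{t+1}}$ can be charged either to $\frac{G^2}{\lambda t}$ (for the logarithmic bound) or, after pairing with the negative quadratic slack, to a telescoping/Cauchy–Schwarz bound of order $\sqrt{\dtot}$, and to verify both derivations go through under the \emph{same} closed-form iterates. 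I would handle this by proving the two bounds as separate lemmas on the same iterate sequence and only combining at the very end.
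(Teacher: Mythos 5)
Your decomposition into a clairvoyant/cheating regret term plus a drift term, and your treatment of the cheating regret via standard FTRL with the $\frac{\lambda t}{2}\norm{\cdot}_2^2$-type regularizer so that the $\frac{\lambda}{2}\sum_t\norm{x_t-u}_2^2$ term cancels the regularizer growth, both match the paper's argument. The $\sigma_{\max}\ln T$ branch is also on the right track.

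Where you have a genuine gap is the $\sqrt{\dtot}$ branch and, closely related, the tension you yourself flag at the end. You propose to extract $\sqrt{\dtot}$ by a DOGD-style Cauchy--Schwarz argument that pairs cross terms $\inprod{g_\tau, x_t - x_{t+1}}$ against negative quadratic slack, and you correctly worry that this risks reintroducing a diameter (or $\norm{x_1}$) dependence and that reconciling it with the $1/t$-decay argument is unclear. The paper's proof avoids this difficulty entirely and makes the $\min$ essentially free. The key is to bound \emph{once and for all} the quantity $\norm{x_t - x_t^\star}_2$ (distance to the clairvoyant iterate $x_t^\star$, not the per-step movement $\norm{x_t - x_{t+1}}_2$ you focus on), via the stability lemma for FTRL applied to two optimizers with the \emph{same} regularizer $\frac{\lambda}{2}\sum_{s\le t-1}\norm{x - x_s}_2^2$ — this is exactly why the update~\pref{eqn:sc_update_1} sums over all $s\le t$ rather than only over $o_{t+1}$, and it is what yields the diameter-free bound
\[
    \norm{x_t - x_t^\star}_2 \le \frac{1}{\lambda(t-1)}\Bno{\sum_{\tau \in m_t} g_\tau}_2 \le \frac{G\abs{m_t}}{\lambda(t-1)} \;.
\]
Moreover, the cheating-regret stability term is related to the same quantity, since the stability lemma also gives $\norm{x_t^\star - x_{t+1}^\star}_2 \le \frac{2\norm{g_t}_2}{\lambda(2t-1)} + \norm{x_t^\star - x_t}_2$, so the entire delay contribution collapses to a single scalar sum $\sum_{t=2}^T \frac{\abs{m_t}}{t-1}$. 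Both branches of the $\min$ are then obtained from this one sum with no change to the algorithm or analysis: $\abs{m_t}\le\sigma_{\max}$ plus the harmonic sum gives $\sigma_{\max}\ln T$, and the elementary inequality $\sum_{\tau\le t}\abs{m_\tau}\le (t-1)^2$ (valid because $m_\tau\subseteq[\tau-1]$) lets one write $\frac{\abs{m_t}}{t-1}\le \frac{\abs{m_t}}{\sqrt{\sum_{\tau\le t}\abs{m_\tau}}}$ and invoke \citet[Lemma~4.13]{orabona2019modern} to get $2\sqrt{\dtot}$. No DOGD-style pairing, no separate learning rate, and no diameter dependence is needed; the ``delicate reconciliation'' you anticipate simply does not arise.
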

\pref{thm:strongcvx} highlights two advantages over previous works. From the perspective of the delay-related term, while both DOGD-SC \citep{wan2022online} and SDMD-RSC \citep{wu2024online} achieve a $\order\left(\dmax\ln T\right)$ regret bound, the terms $\sigma_{\max}$ and $\sqrt{\dtot}$ in our regret bound can be substantially smaller than $\dmax$, with $\sigma_{\max} \le \dmax$ always being true \citep{masoudian2022bobw}.\footnote{In fact, we also show in \pref{lem:sigma-max-bound-dtot} that $\sigma_{\max} \lesssim \sqrt{\dtot}$, and in \pref{lem:sigma_dtot} that there are delay sequences such that $\sigma_{\max}\ll \sqrt{\dtot}$ and $\sigma_{\max}\approx\sqrt{\dtot}$, respectively.}
Second, while both DOGD-SC and SDMD-RSC have polynomial dependence on the diameter $D$ of the action set $\cX$, we remark that our bound \emph{does not} depend on $D$ and recovers the optimal $\order\brb{\frac{G^2}{\lambda}\ln T}$ regret in the no-delay setting.

\subsection{Regret analysis}\label{sec: analysis_sc}
Here we provide a proof sketch of \pref{thm:strongcvx}, whereas the full proof is deferred to \pref{app:strong_cvx}. Specifically, using the strong convexity property, we first decompose the regret:
\begin{align}
    &\Reg_T(u)
    \leq \sum_{t=1}^{T} \lrb{\inprod{g_t, x_t - u} - \frac{\lambda}{2}\norm{x_t - u}^2_2} \nonumber\\
    &= \underbrace{\sum_{t=1}^{T} \inprod{g_t, x_t^\star - u}}_{\Reg_T^\star(u)} + \underbrace{\sum_{t=1}^{T}\inprod{g_t, x_t - x_t^\star}}_{\drift_T} - \frac{\lambda}{2}\sum_{t=1}^T\norm{x_t - u}^2_2 ,
    \label{eqn:reg-decompose-strcvx}
\end{align}
where $x_t^\star=\argmin_{x\in\cX}\sum_{\tau=1}^{t-1}(\inner{g_\tau,x}+\frac{\lambda}{2}\|x-x_{\tau}\|_2^2)$ for $t\geq 2$ and $x_1^\star=x_1$ are the decisions assuming that all gradients before round $t$ are observed.

Next, we analyze the term $\Reg_T^\star(u)$ and $\drift_T$ separately. For the term $\drift_T$, applying the Cauchy-Schwarz inequality and using the fact that $\|g_t\|_2\leq G$ for all $t\in[T]$ by \Cref{asm:gradient-bound}, we can obtain that
\begin{align}\label{eqn:drift-main}
    \drift_T \leq G\sum_{t=1}^T\|x_t^\star-x_t\|_2 \;.
\end{align}
For the term $\Reg_T^\star(u)$, following a standard FTRL analysis and using the optimality of $x_t^\star$, we are able to obtain that
\begin{align*}
    &\Reg_T^\star(u) \leq \frac{\lambda}{2}\sum_{t=1}^T\|x_t-u\|_2^2
    + \sum_{t=1}^T \inprod{g_t, x_t^\star - x_{t+1}^\star} \;.
\end{align*}
Since the first term can be canceled by the last negative term shown in \pref{eqn:reg-decompose-strcvx}, we only need to control the second term $\inprod{g_t, x_t^\star - x_{t+1}^\star}$, which is further bounded by $G\|x_t^\star-x_{t+1}^\star\|_2$ via Cauchy-Schwarz and the fact that $\norm{g_t}_2 \le G$. Then, using a stability lemma for FTRL (\pref{lem:ftrl_sc}), we can show that

\begin{align*}
    \|x_t^\star-x_{t+1}^\star\|_2 \leq \frac{2G}{\lambda(2t-1)} + {\|x_t^\star-x_t\|_2} \;.
\end{align*}
Interestingly, this inequality relates the Euclidean distance between adjacent ``cheating'' iterates $(x_t^\star)_{t \ge 1}$ in the stability term of FTRL to the distance between $x_t$ and $x_t^\star$, which is also present in the $\drift_T$ term and intuitively quantifies the influence of delays on the regret.

Combining the inequalities involving $\Reg_T^\star(u)$ and $\drift_T$, we can finally bound the regret from above as follows:
\begin{align*}
    \Reg_T &\leq \sum_{t=1}^T\frac{2G^2}{\lambda(2t-1)} + 2G\sum_{t=1}^T\|x_t^\star-x_t\|_2 \\
    &\leq \frac{G^2}{\lambda} \ln(2T+1) + 2G\sum_{t=1}^T\|x_t^\star-x_t\|_2 \;.
\end{align*}
It remains to show how to bound $\|x_t^\star-x_t\|_2$ by $\order\brb{\frac{G}{\lambda}\min\{\sigma_{\max}\ln T,\sqrt{\dtot}\}}$, which is the key novelty in our analysis compared to previous works.

Recalling the definitions of $x_t$ and $x_t^\star$, we can apply the stability lemma of FTRL (\pref{lem:ftrl_sc}) again and show for all $t\geq 2$ that
\begin{align}\label{eqn:stab-1}
    \frac{\lambda(t-1)}{2}\|x_t^\star-x_t\|_2^2 \leq \frac{\left\|\sum_{\tau\in m_t}g_\tau\right\|_2^2}{2\lambda(t-1)} \;,
\end{align}
meaning that $
    \sum_{t=1}^T\|x_t^\star-x_t\|_2\leq \sum_{t=2}^T\frac{\norm*{\sum_{\tau\in m_t}g_\tau}_2}{\lambda(t-1)}
    \leq \sum_{t=2}^{T}\frac{G|m_t|}{\lambda(t-1)}
$, where we also use the fact that $x_1^\star=x_1$. Here, we highlight the importance of including all previous decisions $x_\tau$ for $\tau\leq t$, instead of $\tau\in o_{t+1}$ only, in the regularization term of the update rule of $x_{t+1}$ shown in \pref{eqn:sc_update_1}.
Doing so particularly ensures that the updates of $x_t$ and $x_t^\star$ share the same regularization terms, which is crucial in leading to a diameter-free upper bound for $\|x_t^\star-x_t\|_2$ using the stability lemma.

Finally, we study the term $\sum_{t=2}^T\frac{|m_t|}{t-1}$. Directly bounding $|m_t|$ from above by $\sigma_{\max}$ leads to the first $\sigma_{\max}\ln T$ bound. To further obtain the $\sqrt{\dtot}$ bound, it is crucial to observe that $\sum_{\tau\leq t}|m_\tau|\leq (t-1)^2$ since $m_\tau\subseteq[\tau-1]$. Therefore, by also using \citet[Lemma~4.13]{orabona2019modern} we are able to prove that
$
    \sum_{t=2}^T\frac{|m_t|}{t-1}
    \leq \sum_{t=2}^T\frac{|m_t|}{\sqrt{\sum_{\tau\leq t}|m_{\tau}|}}
    \leq 2\sqrt{\dtot}
$, which concludes the regret analysis.

\section{Delayed OCO with exp-concave losses}
\label{sec:exp_concave}

In this section, we consider the delayed OCO problem with exp-concave losses.
Exp-concave losses are a more general class of loss functions that require more sophisticated techniques to be tackled.
To address this problem, we design \pref{alg:delayed-ons-exp-concave}, a variant of Online Newton Step (ONS) which effectively handles delayed feedback.
Specifically, after receiving the gradients $g_\tau$ for all $\tau\in o_{t+1}\backslash o_t$, we select $x_{t+1}$ as the minimizer of the cumulative surrogate loss over all the already observed gradients and the past actions, with an additive squared $\ell_2$-regularization term.
For simplicity, in this section we omit dependencies on curvature parameters, Lipschitz constants, and domain diameter; they appear explicitly in the theorem statements.
The following result provides a first regret bound for \Cref{alg:delayed-ons-exp-concave}.

\begin{algorithm}[h]
    \caption{Delayed ONS for exp-concave functions}
    \label{alg:delayed-ons-exp-concave}
    \begin{algorithmic}[1]
        \INPUT $\beta > 0$, learning rate rule $\{\eta_t\}_{t \ge 1}$, 
        \INITIALIZE $x_1 \in \cX$
        \FOR{$t = 1, 2, \dots$}
            \STATE Play $x_t$; receive $g_\tau = \nabla f_\tau(x_\tau)$ for all $\tau \in o_{t+1} \setminus o_{t}$
            \STATE $x_{t+1} = \argmin\limits_{x \in \cX} \sum\limits_{\tau \in o_{t+1}} \Brb{\inprod{g_\tau, x} + \frac{\beta}{2} \inprod{g_\tau, x-x_\tau}^2}$\label{eqn:exp_update}\\ \qquad\qquad\qquad\quad$+ \frac{\eta_t}{2} \norm{x}_2^2$
        \ENDFOR
    \end{algorithmic}
\end{algorithm}

\begin{restatable}{theorem}{expconcave}
\label{thm:expconcvae}
    Assume that $f_1, \dots, f_T$ are $\alpha$-exp-concave and let $\beta = \frac{1}{2}\min\{\frac{1}{4GD},\alpha\}$.
    Then, under \Cref{asm:gradient-bound,asm:diameter}, \pref{alg:delayed-ons-exp-concave} with $0<\eta_0\le\eta_1\leq \dots\leq\eta_T$ guarantees that
    \[
        \Reg_T = \order\Brb{\frac{n}{\beta}\ln\Brb{1 + \frac{\beta G^2 T }{\eta_0 n}} + \eta_T D^2 + \min \lcb{B_1 , B_2}} ,
    \]
    where $B_1 = \left(\frac{G^2}{\eta_0} + \frac{1}{\beta}\right)  n\dmax\ln\left(1+\frac{\beta G^2T}{\eta_0 n}\right)$ and $B_2 = G^2 \sum_{t=1}^T \frac{|m_t|}{\eta_{t-1}}$\,.
\end{restatable}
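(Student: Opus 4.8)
The plan is to mimic the structure of the strongly convex analysis (\pref{thm:strongcvx}) but replace the $\ell_2$-regularization/strong-convexity bookkeeping with the ONS/exp-concavity machinery. First I would introduce the ``cheating'' iterates $x_t^\star = \argmin_{x\in\cX}\sum_{\tau=1}^{t-1}\brb{\inprod{g_\tau,x}+\frac{\beta}{2}\inprod{g_\tau,x-x_\tau}^2} + \frac{\eta_{t-1}}{2}\norm{x}_2^2$, i.e.\ the ONS decision one would play with all gradients available. Using the standard exp-concavity surrogate inequality (as in \citet{hazan2007logarithmic}: $\alpha$-exp-concavity plus bounded gradients implies $f_t(x_t)-f_t(u) \le \inprod{g_t,x_t-u} - \frac{\beta}{2}\inprod{g_t,x_t-u}^2$ for the chosen $\beta$), I would decompose
\begin{align*}
    \Reg_T(u) &\le \sum_{t=1}^T\brb{\inprod{g_t,x_t-u}-\tfrac{\beta}{2}\inprod{g_t,x_t-u}^2}\\
    &= \underbrace{\sum_{t=1}^T\brb{\inprod{g_t,x_t^\star-u}-\tfrac{\beta}{2}\inprod{g_t,x_t^\star-u}^2}}_{\Reg_T^\star(u)} + \drift_T,
\end{align*}
where $\drift_T$ collects the cross terms $\inprod{g_t,x_t-x_t^\star}$ and the difference of the quadratic surrogate terms evaluated at $x_t$ versus $x_t^\star$. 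The first term $\Reg_T^\star(u)$ is handled by the usual ONS telescoping/potential argument with the time-varying ridge $\frac{\eta_t}{2}\norm{x}_2^2$, yielding the $\frac{n}{\beta}\ln(1+\frac{\beta G^2 T}{\eta_0 n}) + \eta_T D^2$ part of the bound (the $\eta_T D^2$ coming from the comparator's regularization penalty, the log-determinant term from $\sum_t \norm{g_t}_{A_t^{-1}}^2$ with $A_t = \eta_{t-1}I + \beta\sum_{\tau\le t-1}g_\tau g_\tau^\top$).

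Next I would bound $\drift_T$. By Cauchy--Schwarz and boundedness, $\drift_T \lesssim G\sum_t \norm{x_t-x_t^\star}_2$ plus lower-order quadratic contributions (which are controlled since $\abs{\inprod{g_t,x-x_t}}\le GD$). The crux is then to bound $\norm{x_t - x_t^\star}_2$ — the perturbation caused by the missing gradients $\{g_\tau : \tau\in m_t\}$. Both $x_t$ and $x_t^\star$ are minimizers of strongly convex objectives that differ only in the linear term (and only in the quadratic surrogate terms of the missing rounds), where the objective of $x_t^\star$ has curvature at least $\eta_{t-1} I$ (and more, from the observed surrogate quadratics). An ONS stability lemma — the exp-concave analogue of \pref{lem:ftrl_sc} — gives $\norm{x_t - x_t^\star}_2 \lesssim \frac{1}{\eta_{t-1}}\lno{\sum_{\tau\in m_t}g_\tau}_2 + (\text{quadratic-surrogate correction}) \lesssim \frac{G\abs{m_t}}{\eta_{t-1}}$; the surrogate correction can be absorbed because $\beta \le \frac{1}{8GD}$ makes each quadratic term a small perturbation, or alternatively one measures distance in the Mahalanobis norm induced by $A_t$ and then converts. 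Summing gives $\drift_T \lesssim G^2\sum_t \frac{\abs{m_t}}{\eta_{t-1}}$, which is exactly $B_2$.

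Finally, to also obtain $B_1$ (so that the bound is $\min\{B_1,B_2\}$ \emph{for any valid learning-rate rule}, in particular when $\eta_t$ is taken constant or only mildly increasing), I would instead bound the drift using the Mahalanobis-norm / log-determinant machinery: $\sum_t \norm{x_t-x_t^\star}_{A_t}$-type terms telescope against the potential, and the number of rounds at which $x_t\ne x_t^\star$ or at which a given gradient is ``missing'' is controlled by $\dmax$ — each gradient $g_\tau$ is missing for at most $d_\tau \le \dmax$ rounds — so the perturbation accumulates over at most $\dmax$ rounds per gradient, producing $B_1 = (\frac{G^2}{\eta_0}+\frac{1}{\beta})n\dmax\ln(1+\frac{\beta G^2 T}{\eta_0 n})$ after another application of the $\sum_t \norm{g_t}_{A_t^{-1}}^2$ log-determinant bound. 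Taking the better of the two drift estimates yields $\min\{B_1,B_2\}$.

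I expect the main obstacle to be the ONS stability lemma controlling $\norm{x_t-x_t^\star}_2$: unlike the pure-ridge FTRL case, the regularizer here is the data-dependent matrix $A_t$, so the two iterates share a \emph{curved} regularizer that itself differs between the two problems (the observed surrogate quadratics are common, but the missing ones are not), and one must carefully argue that dropping the missing quadratic terms only helps (they are convex, so removing them from $x_t^\star$'s objective cannot increase curvature — this is why one gets an upper bound in terms of $m_t$) while simultaneously keeping the bound diameter-robust and compatible with the adaptive $\eta_t$. Getting the quadratic-surrogate corrections in $\drift_T$ to be genuinely lower-order — rather than spoiling the $n\ln T$ scaling — is the delicate accounting step, and is presumably where the specific choice $\beta = \frac12\min\{\frac{1}{4GD},\alpha\}$ and the monotonicity $\eta_0\le\eta_1\le\cdots$ get used.
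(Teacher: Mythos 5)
Your proposal is essentially the paper's proof: the same cheating iterates $x_t^\star$, the same FTRL-telescoping plus log-determinant bound for the cheating regret, the same stability-lemma control of $\norm{x_t - x_t^\star}_{A_{t-1}}$ by $\sum_{\tau\in m_t}\norm{g_\tau}_{A_{t-1}^{-1}}$ (with the $\beta GD \le 1/8$ trick absorbing the quadratic-surrogate correction), and the same two ways of bounding the drift — a delayed elliptical-potential lemma for $B_1$ and the crude $\norm{g}^2_{A_{t-1}^{-1}} \le G^2/\eta_{t-1}$ estimate for $B_2$. The only cosmetic difference is that you place the quadratic surrogate $-\frac{\beta}{2}\inprod{g_t,\cdot-u}^2$ evaluated at $x_t^\star$ inside the cheating regret and compensate in the drift, whereas the paper leaves it evaluated at $x_t$ so that it cancels directly against the $\frac{\beta}{2}\sum_t\inprod{g_t,u-x_t}^2$ coming out of the regularizer in the FTRL telescoping; the two bookkeepings are algebraically equivalent.
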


We can now introduce two careful tunings of the time-varying learning rates $(\eta_t)_{t \ge 1}$ to derive the regret bounds $\order(\dmax n \ln T)$ and $\order(\sqrt{\dtot})$ individually.

\paragraph{Simple tuning.} With a constant learning rate constant $\eta_t = 1$ for all $t\in[T]$ , \Cref{alg:delayed-ons-exp-concave} directly obtains $\order(\dmax n \ln T)$ regret. Alternatively, setting $\eta_t = \frac{G}{D} \sqrt{\sum_{s\le t} |m_s| + |m_{t}|+1}$ for all $t \ge 1$, \Cref{alg:delayed-ons-exp-concave} achieves $\order(\sqrt{\dtot})$ regret; here, the $|m_{t}|+1$ term is an essentially tight worst-case estimation of $|m_{t+1}|$, since $m_{t+1} \subseteq m_t \cup \{t\}$.

Note that either of these two bounds can be significantly better than the other under different delay sequences, e.g., as shown by our \pref{lem:dmax_dtot} in the appendix.
Therefore, we ideally want to achieve $\order(\min\{\dmax n\ln T,\sqrt{\dtot}\})$ regret via a single choice of the learning rates.
In fact, we can show that it is indeed possible to obtain such a bound by a careful delay-adaptive learning rate tuning.

\paragraph{Adaptive tuning.} The adaptive learning rate is given by $\eta_0 = 1$ and $\eta_t = \min \{a_t,b_t\}+1$ for all $t\geq 1$, where
\begin{align}
\label{eq:eta_a}
     a_t &= \frac{2}{GD}\left(G^2 + \frac{1}{\beta}\right) n\dmax^{\leq t}\ln\left(1+\frac{ \beta G^2T}{n}\right) \;,\\
    b_t &= \frac{G}{D} \sqrt{\sum_{s\le t} |m_s| + |m_{t}|+1} \;.\label{eq:eta_b}
\end{align}
The overall idea behind this learning rate tuning is to keep track of both the $\dmax n\ln T$ and $\sqrt{\dtot}$ regret guarantees over the rounds via $a_t$ and $b_t$, respectively.
Then, $\eta_t$ is set depending on the best of the two, i.e., $\min\{a_t, b_t\}$, which then leads to achieve the best of both regret bounds.
Note that this adaptive tuning requires the knowledge of the time-stamps of the received gradients since we need to compute $\dmax^{\le t} = \max_{\tau \le t}\min\{d_\tau, t-\tau\}$ which, we recall, is the maximum delay that has been perceived up to round $t$. The following corollary provides a regret bound for \Cref{alg:delayed-ons-exp-concave} with this adaptive tuning. The full proof of \Cref{cor:expconcvae_dmin} can be found in \Cref{app:exp_concave}. 

\begin{restatable}{corollary}{expconcavedmin}
\label{cor:expconcvae_dmin}
    Assume that $f_1, \dots, f_T$ are $\alpha$-exp-concave and let $\beta = \frac{1}{2}\min\{\frac{1}{4GD},\alpha\}$.
    Then, under \Cref{asm:gradient-bound,asm:diameter}, \pref{alg:delayed-ons-exp-concave} with the adaptive learning rate  $\eta_t = \min \{a_t,b_t\}+1$, where $a_t$ and $b_t$ are defined in \Cref{eq:eta_a,eq:eta_b}, guarantees that
    \begin{align*}
    \Reg_T = \order\bbrb{\frac{n}{\beta}\ln\Brb{1 + \frac{\beta G^2 T }{n}} + D^2 + \min\left\{C_1,C_2\right\}} ,
    \end{align*}
    where
    $C_1 = \left(\frac{D}{G} +1\right) \left(G^2 + \frac{1}{\beta}\right) n \dmax \ln\left(1+\frac{ \beta G^2T}{n}\right)$ and
    $C_2 = \left(G^2 +GD\right) \left(\sqrt{\dtot} + 1 \right)$.
\end{restatable}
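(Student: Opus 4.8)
The plan is to obtain \pref{cor:expconcvae_dmin} as a direct consequence of \pref{thm:expconcvae} by substituting the adaptive learning rate $\eta_t = \min\{a_t, b_t\} + 1$ with $\eta_0 = 1$ and bounding the three terms appearing in the regret bound of \pref{thm:expconcvae} individually. Plugging in $\eta_0 = 1$ makes its leading term equal to $L := \frac n\beta\ln(1 + \frac{\beta G^2 T}{n})$, which already matches the corollary, so it suffices to show $\eta_T D^2 + \min\{B_1, B_2\} = \order(D^2 + \min\{C_1, C_2\})$. Since $\Reg_T \le X + \min\{C_1, C_2\}$ holds if and only if both $\Reg_T \le X + C_1$ and $\Reg_T \le X + C_2$ hold, I would establish the two bounds $\Reg_T = \order(L + D^2 + C_1)$ and $\Reg_T = \order(L + D^2 + C_2)$ separately.

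The bound against $C_1$ is routine. Using $\eta_T \le a_T + 1$ and $\dmax^{\le T} \le \dmax$ gives $a_T D^2 = \frac{2D}{G}(G^2 + \frac1\beta)n\dmax^{\le T}\ln(1 + \frac{\beta G^2 T}{n}) \le 2C_1$, hence $\eta_T D^2 \le 2C_1 + D^2$; moreover $\min\{B_1, B_2\} \le B_1 = (G^2 + \frac1\beta)n\dmax\ln(1 + \frac{\beta G^2 T}{n}) \le C_1$. For the bound against $C_2$, using $\eta_T \le b_T + 1$ together with the elementary facts $\sum_{s\le T}|m_s| = \dtot$ (each round $\tau$ lies in exactly $d_\tau$ of the sets $m_t$) and $\dmax \le \dtot$ gives $b_T D^2 = GD\sqrt{\sum_{s\le T}|m_s| + |m_T| + 1} \le GD\sqrt{2\dtot + 1} = \order(C_2)$, so $\eta_T D^2 = \order(D^2 + C_2)$. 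What remains --- and this is the crux --- is to show $\min\{B_1, B_2\} = \order(C_2)$.

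To bound $B_2 = G^2\sum_t \frac{|m_t|}{\eta_{t-1}}$, I would use $\frac{1}{\min\{a_{t-1},b_{t-1}\}+1} \le \frac{1}{a_{t-1}+1} + \frac{1}{b_{t-1}+1}$ to write $B_2 \le G^2\sum_t \frac{|m_t|}{a_{t-1}+1} + G^2\sum_t \frac{|m_t|}{b_{t-1}+1}$; call these the \emph{$a$-part} and \emph{$b$-part} (the $t=1$ term vanishes since $|m_1| = 0$). For the $b$-part, the key inclusion $m_t \subseteq m_{t-1}\cup\{t-1\}$ yields $|m_t| \le |m_{t-1}| + 1$, hence $b_{t-1} \ge \frac GD\sqrt{\sum_{s\le t}|m_s|}$; by \citet[Lemma~4.13]{orabona2019modern} the $b$-part is then at most $GD\sum_t\frac{|m_t|}{\sqrt{\sum_{s\le t}|m_s|}} \le 2GD\sqrt{\dtot} = \order(C_2)$. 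For the $a$-part I would use $|m_t| \le \dmax^{\le t} \le \dmax^{\le t-1}+1$, the explicit form of $a_{t-1}$, and the monotonicity of $t\mapsto\dmax^{\le t}$: a round contributes non-negligibly only when $\dmax^{\le t-1}$, and hence $|m_t|$, is small. Here a case distinction is needed. If the factor $(G^2+\frac1\beta)n\ln(1+\frac{\beta G^2T}{n})$ is small relative to $GD$, one shows $B_1 = \order(C_2)$ directly and concludes via $\min\{B_1,B_2\}\le B_1$; otherwise $a_{t-1}$ grows fast enough as $\dmax^{\le t-1}$ increases that the $a$-part is itself $\order(C_2)$. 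Combining gives $\min\{B_1, B_2\} = \order(C_2)$, which together with the $C_1$ bound yields the corollary.

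I expect the main obstacle to be precisely this last step, bounding the $a$-part of $B_2$ against $C_2$. The difficulty is intrinsic to the adaptive tuning: $\eta_{t-1} = \min\{a_{t-1},b_{t-1}\}+1$ is deliberately kept \emph{small}, calibrated to whichever of the two guarantees is currently tighter, so it does not by itself control $\sum_t|m_t|/\eta_{t-1}$. The resolution is the observation that whenever the $a$-part accumulates a large value, the maximum perceived delay $\dmax^{\le t}$ must have stayed small over the relevant rounds, which forces $B_1$ to be small; thus $\min\{B_1,B_2\}$ is always $\order(C_2)$. Everything else reduces to bookkeeping with the facts $\sum_t|m_t| = \dtot$, $\dmax^{\le T}\le\dmax$, $\dmax\le\dtot$, $m_t\subseteq m_{t-1}\cup\{t-1\}$, and one application of the standard sum lemma.
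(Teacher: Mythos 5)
Your reduction to two bounds ($\order(L + D^2 + C_1)$ and $\order(L + D^2 + C_2)$) is the right high-level framing, and the $C_1$ direction together with the $b$-part of your decomposition of $B_2$ are both sound. The gap is exactly where you flagged it: the ``$a$-part'' $G^2\sum_t\frac{|m_t|}{a_{t-1}+1}$ is \emph{not} $\order(C_2)$ in general, and neither of your two proposed rescues works. Concretely, take $d_t = 1$ for all $t$. Then $\dmax^{\le t-1} = 1$ and $|m_t| = 1$ for all $t\ge 3$, so the $a$-part equals roughly $\frac{G^2 T}{K+1}$ with $K = \frac{2}{GD}(G^2 + \frac1\beta)n\ln(1+\frac{\beta G^2 T}{n})$, while $C_2 \approx (G^2+GD)\sqrt{T}$; for $T$ large (with $n$ and the other constants fixed) the $a$-part is $\Theta(T/\ln T)$, which dominates $\sqrt{T}$. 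Your Case 1 escape route is vacuous because $\beta \le \frac{1}{8GD}$ forces $G^2 + \frac1\beta \ge 8GD$, so $(G^2+\frac1\beta)n\ln(\cdot)$ is never ``small relative to $GD$''; and even setting that aside, $B_1 \propto \dmax$ while $C_2 \propto \sqrt{\dtot}$, and $\dmax \gg \sqrt{\dtot}$ is possible (e.g.\ one delay of size $T/2$), so $B_1 = \order(C_2)$ is not a property you can obtain from a constant-level comparison. Your Case 2 is refuted by the $d_t\equiv 1$ example above. The intuition ``when the $a$-part is large, $B_1$ must be small'' is correct in spirit, but $B_1$ and $B_2$ are \emph{global} quantities, and you cannot trade one off against the other after the fact.

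The paper avoids this by never decomposing $\frac{1}{\eta_{t-1}}$. Instead it splits the \emph{drift sum itself} at $\tau^\star$, the last round with $a_{\tau^\star}\le b_{\tau^\star}$. On $[1,\tau^\star]$ it applies the $\dmax$-style elliptical-potential bound (the bound that led to $B_1$, but restricted to rounds $\le\tau^\star$, hence proportional to $\dmax^{\le\tau^\star}$ rather than $\dmax^{\le T}$); the crossover condition $a_{\tau^\star}\le b_{\tau^\star}$ then converts this into a $\sqrt{\dtot}$-type quantity. On $(\tau^\star,T]$, every $\eta_{t-1}$ equals $b_{t-1}+1$, so the $\sum|m_t|/\eta_{t-1}$ bound goes through cleanly via \citet[Lemma~4.13]{orabona2019modern}. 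This per-round split is the missing ingredient: it lets you use the $\dmax$-style argument precisely on the prefix where the perceived maximum delay is still small, and the $b$-style argument where $\eta_t$ is genuinely tracking $\sqrt{\sum_s|m_s|}$. Your write-up as given cannot be repaired without importing this split (or something equivalent) --- pushing the $\min$ inside via $\frac{1}{\min\{a,b\}+1}\le\frac1{a+1}+\frac1{b+1}$ loses the information about \emph{when} the crossover happens, which is what the argument needs.
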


\Cref{cor:expconcvae_dmin} shows \Cref{alg:delayed-ons-exp-concave} with the adaptive learning rate obtains regret $\order\brb{\min\lcb{\dmax n\ln T, \sqrt{\dtot}}}$. The main advantage of an adaptive learning rate is that it requires no prior knowledge of \(\dtot\) or \(\dmax\), nor does it rely on a doubling trick that would throw away information via resets.

\subsection{Regret analysis}\label{sec: analysis_exp}
In this section, we provide a proof sketch of \pref{thm:expconcvae} and \Cref{cor:expconcvae_dmin}, while their full proofs are deferred to \Cref{app:exp_concave}. Specifically, using the exp-concavity property and \pref{lem: exp_concave}, we decompose the overall regret as follows:
\begin{align}
    \Reg_T(u) =
      \underbrace{\sum_{t=1}^{T} 
    \left\langle g_t, x_t^\star - u\right\rangle}_{\Reg_T^\star(u)} +\underbrace{\sum_{t=1}^{T} \left\langle g_t, x_t - x_t^\star\right\rangle}_{\drift_T} \nonumber
    \\ - \frac{\beta}{2}\sum_{t=1}^{T} \brb{\inprod{g_t, x_t - u}}^2 \;, \label{eqn:reg-decompose-exp}
\end{align}
where we define $x_1^\star=x_1$ and, for $t \ge 2$, $x_t^\star=\argmin_{x\in\cX}\sum_{\tau=1}^{t-1}(\inner{g_\tau,x}+\frac{\beta}{2}\left(\left\langle g_{\tau}, x-x_{\tau}\right\rangle\right)^2) +\frac{\eta_{t-1}}{2}\|x\|_2^2$ to be the decisions assuming that all gradients before round $t$ are observed.

For the term $\Reg_T^\star(u)$, following a standard FTRL analysis, we are able to obtain that
\begin{align}
    \Reg_T^{\star}(u)
    &\leq \frac{\eta_{T}}{2}\|u\|_2^2+\frac{\beta}{2} \sum_{t=1}^{T}\left(\left\langle g_t, u-x_{t}\right\rangle\right)^2 \nonumber\\[-.1em]
    &\quad + \sum_{t=1}^T \min\left\{GD,\left \| g_t\right\|_{A_{t-1}^{-1}}^{2}\right\}.
    \label{eqn:reg-cheat-exp}
\end{align}
where $A_{t-1} = \eta_{t-1} I + \beta \sum_{\tau = 1}^{t-1} g_{\tau} g_{\tau}^{\top}$. Applying \citet[Lemma~19.4]{lattimore2020bandit}, the last sum on the right-hand side of the above inequality satisfies
\begin{equation}
    \sum_{t=1}^T \min\left\{GD,\left \| g_t\right\|_{A_{t-1}^{-1}}^{2}\right\}
    = \order \left (\frac{n}{\beta}\ln \left (1 + \frac{\beta G^2 T }{n}\right)\right) .
    \label{eqn:reg-ellip}
\end{equation}

Now we consider the $\drift_T$ term.
By applying the Cauchy-Schwarz inequality followed by the stability lemma (\pref{lem:ftrl_sc}) again, it follows that for all $t\geq 1$,
\begin{align}
       \drift_T &\leq \sum_{t=1}^T \| g_t\|_{A_{t-1}^{-1}} \| x_t - x_t^\star\|_{A_{t-1}}\nonumber
       \\&\leq 4\sum_{t=1}^T\|g_t\|_{A_{t-1}^{-1}}\Bbrb{\sum_{\tau \in m_t} \|g_{\tau}\|_{A_{t-1}^{-1}}}
        \label{eqn:reg-drift-exp}.
\end{align}

By applying \Cref{lem:elliptical-potential-delays}, it holds that
\begin{equation}
    \drift_T = \order\left(\left(G^2 + \frac{1}{\beta}\right) n\dmax\ln\left(1+\frac{\beta G^2T}{n}\right) \right).
    \label{eqn:O_dmax}
\end{equation}

At the same time, we can also prove that
\begin{equation}
    \drift_T = \order \left ( G^2 \sum_{t=1}^T \frac{|m_t|}{\eta_{t-1}} \right).
    \label{eqn:O_dtot}
\end{equation}
Combing \Cref{eqn:reg-decompose-exp,eqn:reg-cheat-exp,eqn:reg-ellip,eqn:reg-drift-exp,eqn:O_dmax,eqn:O_dtot} concludes the proof of \Cref{thm:expconcvae}.
To prove \Cref{cor:expconcvae_dmin}, we carefully consider the adaptive learning rate tuning and separate the analysis into two cases.
In case $a_T \le b_T$ at the end of the $T$ rounds, we utilize a delayed version of the elliptical potential lemma (\Cref{lem:elliptical-potential-delays}) to achieve the logarithmic regret.
On the other hand, if $b_T < a_T$ we split the regret analysis at the last round $\tau^\star$ at which $a_{\tau^\star} \le b_{\tau^\star}$. Then, we use again the logarithmic bound up to round $\tau^\star$ and the $\sqrt{\dtot}$ bound for the remaining rounds. It suffices to observe that the first bound is no worse than $\sqrt{\dtot}$ since $a_{\tau^\star} \le b_{\tau^\star}$ to conclude the proof.

\section{Online linear regression with delayed labels} \label{sec:olr}

Here we consider the problem of online linear regression (OLR) with delays.
This setting essentially corresponds to a variant of OCO where the domain is $\mathcal{X} = \R^n$ and loss functions are $f_t(x) = \frac{1}{2}(\langle z_t, x\rangle -y_t)^2$ comparing any point $x \in \R^n$ to a label $y_t \in \R$ given some feature vector $z_t \in \R^n$; to be precise, the predicted label by a given point $x$ corresponds to the inner product $\inprod{z_t, x}$.
At each round $t$, the learner first observes an $n$-dimensional feature vector $z_t$ before performing its prediction $x_t$, but the true label $y_t$ is only revealed at a later round $t + d_t$.
A common assumption on feature vectors and labels in this setting, analogous to the ones we introduced in \Cref{sec:set} for instance, is their boundedness.

\begin{assumption} \label{asm:olr}
    The feature vectors $z_1, \dots, z_T$ and the labels $y_1, \dots, y_T$ are bounded, i.e., $\norm{z_t}_2 \le Z$ and $\abs{y_t} \le Y$ for any $t \in [T]$, given $Y,Z \ge 0$.
\end{assumption}

\begin{algorithm}[h]
    \caption{Delayed VAW forecaster with clipping}
    \label{alg:delayed-vaw}
    \begin{algorithmic}[1]
        \INPUT learning rate rule $\{\eta_t\}_{t \ge 1}$
        \INITIALIZE $\rho_1 = 0$ 
        \FOR{$t = 1, 2, \dots$}
            \STATE Observe $z_t$
            \STATE Set $x_t = \argmin\limits_{x \in \R^n} \sum\limits_{\tau \in o_t} -y_{\tau} \inprod{z_\tau, x} + \frac{\eta_t}{2}\norm{x}_2^2$\\
            \qquad\qquad\qquad\quad$+ \frac{1}{2}\sum_{\tau \le t} \brb{\inprod{z_{\tau}, x}}^2$
            \vskip4pt
            \STATE Play $\wt x_t = x_t \cdot \min\bcb{\frac{\rho_t}{\abs{\inprod{z_t, x_t}}}, 1}$
            \STATE Receive $y_\tau$ for all $\tau \in o_{t+1}\setminus o_t$
            \STATE Set $\rho_{t+1} = \max_{\tau \in o_{t+1}} \abs{y_\tau}$
        \ENDFOR
    \end{algorithmic}
\end{algorithm}

Note that the loss $f_t$ becomes exp-concave when the domain is also \emph{bounded}.
If this were the case, we could solve this problem by designing a version of ONS that can handle delayed labels.
In OLR, however, the domain is unconstrained as it corresponds to the whole $n$-dimensional Euclidean space, which makes it particularly challenging to simply adapt one of the techniques seen so far without further assumptions.
We instead design an algorithm for this problem (see \Cref{alg:delayed-vaw}) that corresponds to an adaptation of the Vovk-Azoury-Warmuth (VAW) forecaster \citep{azoury2001relative,vovk2001competitive} in order to handle delayed labels.
We can then prove that the regret guarantee for this algorithm in the delayed OLR setting becomes as stated in \Cref{thm:onlinelr} below (whose proof is in \Cref{app:olr}).

\begin{restatable}{theorem}{onlinelr}
\label{thm:onlinelr}
    In the OLR problem with delayed labels under \pref{asm:olr}, \pref{alg:delayed-vaw} guarantees for any $0 < \eta_0 \leq \eta_1 \le \dots \leq \eta_T$ that
    \begin{align*}
        \Reg_T(u) &\leq \frac{\eta_T}{2} \norm{u}_2^2 + nY^2 \ln\bbrb{1 + \frac{Z^2 T}{\eta_0n}} \\[-.1em]
        &\quad + \order\Brb{Y^2\brb{\sigma_{\max} + \min \left\{M_1 , M_2 \right\}}} \;,
    \end{align*}
    where $M_1 = n\dmax \ln\brb{1 + \frac{Z^2 T}{\eta_0 n}}$ and $M_2 = Z^2\sum_{t=1}^T \frac{|m_t|}{\eta_t}$.
\end{restatable}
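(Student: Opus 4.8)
The plan is to mirror the FTRL-with-cheating-iterates template used for the exp-concave case in \Cref{sec:exp_concave}, but exploiting the quadratic structure of OLR to absorb the regularization into the Hessian-like matrix $A_t = \eta_t I + \sum_{\tau \le t} z_\tau z_\tau^\top$, and to handle the unbounded domain via the clipping step defining $\wt x_t$. First I would introduce the ``cheating'' predictor $x_t^\star$ obtained by running the same VAW update as in \Cref{alg:delayed-vaw} but with $o_t$ replaced by $[t-1]$ (all labels available), i.e.\ $x_t^\star = \argmin_{x} \sum_{\tau \le t-1} -y_\tau \inprod{z_\tau,x} + \frac{1}{2}\sum_{\tau \le t}(\inprod{z_\tau,x})^2 + \frac{\eta_t}{2}\norm{x}_2^2$. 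The regret against $u$ then splits into three pieces: (i) the clipping error $\sum_t \brb{f_t(\wt x_t) - f_t(x_t)}$; (ii) the ``cheating regret'' $\Reg_T^\star(u) = \sum_t \brb{f_t(x_t) - f_t(u)}$ measured at the unclipped $x_t$, which is exactly the quantity the standard VAW analysis controls and which I would bound by $\frac{\eta_T}{2}\norm{u}_2^2 + \sum_t \min\{2Y^2, y_t^2 \norm{z_t}_{A_t^{-1}}^2\} + (\text{curvature gain terms}) = \frac{\eta_T}{2}\norm{u}_2^2 + nY^2\ln(1 + Z^2 T/(\eta_0 n)) + \dots$ via \citet[Lemma~19.4]{lattimore2020bandit} as in \Cref{eqn:reg-ellip}; and (iii) the drift $\drift_T = \sum_t \brb{f_t(x_t) - f_t(x_t^\star)}$ created by the delay.

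For the clipping error (i), the key observation is that $\rho_t = \max_{\tau \in o_t}\abs{y_\tau}$, so clipping $\inprod{z_t, x_t}$ to magnitude $\rho_t$ only changes the prediction when $\abs{\inprod{z_t,x_t}} > \rho_t$; since the true comparator satisfies $\abs{\inprod{z_t,u}}$ can be large but the \emph{label} $\abs{y_t}\le Y$, I would argue that $\abs{f_t(\wt x_t) - f_t(x_t)} \le 0$ whenever $\rho_t \ge Y$... more carefully, the clipping can only help once $\rho_t$ has ``seen'' a label at least as large as $\abs{y_t}$, and each round where this fails can be charged to a still-missing observation, giving $\sum_t \brb{f_t(\wt x_t) - f_t(x_t)} = \order(Y^2 \sigma_{\max})$ — this is precisely the source of the additive $Y^2\sigma_{\max}$ term. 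I would formalize this by noting that if $\abs{\inprod{z_t,x_t}} > \rho_t$ and the clipping strictly increases the loss, then $t$ must be a round for which no earlier round $\tau$ with $\abs{y_\tau} \ge \abs{y_t}$ has been observed; bounding the number of such ``bad'' rounds by $\abs{m_t}+1 \le \sigma_{\max}+1$ via a peeling/counting argument over the order statistics of the $\abs{y_\tau}$'s.

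For the drift (iii), I would follow \Cref{eqn:reg-drift-exp} almost verbatim: expand $f_t(x_t) - f_t(x_t^\star) = \inprod{\nabla f_t(\xi), x_t - x_t^\star}$ for the line gradient, apply Cauchy–Schwarz in the $A_{t-1}$-norm to get $\le \norm{\nabla f_t}_{A_{t-1}^{-1}}\norm{x_t - x_t^\star}_{A_{t-1}}$, and then invoke the FTRL stability lemma (\pref{lem:ftrl_sc}) — which applies because, exactly as emphasized in \Cref{sec:SC,sec:exp_concave}, the quadratic terms $\frac12\sum_{\tau\le t}(\inprod{z_\tau,x})^2$ are shared between the $x_t$ and $x_t^\star$ updates, so their difference is driven only by the missing linear terms $\sum_{\tau \in m_t} y_\tau z_\tau$ — to get $\norm{x_t - x_t^\star}_{A_{t-1}} \lesssim \sum_{\tau \in m_t}\norm{y_\tau z_\tau}_{A_{t-1}^{-1}} \le Y\sum_{\tau \in m_t}\norm{z_\tau}_{A_{t-1}^{-1}}$. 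From here the two bounds $M_1$ and $M_2$ come out exactly as in the exp-concave case: the $M_2 = Z^2\sum_t \abs{m_t}/\eta_t$ bound follows by crudely bounding $\norm{z_\tau}_{A_{t-1}^{-1}}^2 \le Z^2/\eta_{t-1}$ and summing, while the $M_1 = n\dmax \ln(1 + Z^2 T/(\eta_0 n))$ bound follows from the delayed elliptical potential lemma (\Cref{lem:elliptical-potential-delays}) applied to the features $z_\tau$. Combining (i), (ii), (iii) and taking the minimum of the two drift estimates gives the claimed bound.

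The main obstacle I anticipate is making the clipping argument for (i) airtight: one must show that clipping \emph{never hurts by more than} $\order(Y^2)$ on each of at most $\order(\sigma_{\max})$ rounds, and in particular that on all other rounds $f_t(\wt x_t) \le f_t(x_t)$ (clipping is harmless or beneficial). This requires carefully relating $\rho_t$ to the relevant order statistic of the already-observed labels and arguing that once $\rho_t \ge \abs{y_t}$ the projection of $\inprod{z_t, x_t}$ onto $[-\rho_t,\rho_t]$ can only move it toward $y_t$ — a one-dimensional convexity fact — while the rounds with $\rho_t < \abs{y_t}$ are few because each such round witnesses a delayed label that is a running maximum among the unobserved ones. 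The rest of the proof is a routine transcription of the exp-concave analysis with $\beta=1$ and gradients $\nabla f_t(x) = (\inprod{z_t,x}-y_t)z_t$ replaced appropriately.
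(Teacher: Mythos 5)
There is a genuine gap in your handling of the delay term, and it sits exactly at the point the paper flags as the main difficulty. You split the regret into (i) a clipping error $\sum_t\brb{f_t(\wt x_t)-f_t(x_t)}$, (ii) a cheating regret, and (iii) a drift $\sum_t\brb{f_t(x_t)-f_t(x_t^\star)}$ measured at the \emph{unclipped} iterate, and you propose to bound (iii) by $\norm{\nabla f_t(\xi)}_{A_{t-1}^{-1}}\norm{x_t-x_t^\star}_{A_{t-1}}$ "exactly as in the exp-concave case". But $\nabla f_t(\xi)=(\inprod{z_t,\xi}-y_t)z_t$, and to extract the $Y^2$ prefactor in $M_1,M_2$ you need $\abs{\inprod{z_t,\xi}-y_t}=\order(Y)$. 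On the unconstrained domain the unclipped VAW prediction $\inprod{z_t,x_t}$ is \emph{not} $\order(Y)$ (e.g.\ with near-collinear features the ridge solution can predict arbitrarily far outside the label range), so the secant factor $\abs{\inprod{z_t,x_t}+\inprod{z_t,x_t^\star}-2y_t}$ in $f_t(x_t)-f_t(x_t^\star)=\frac12\inprod{z_t,x_t-x_t^\star}\brb{\inprod{z_t,x_t}+\inprod{z_t,x_t^\star}-2y_t}$ is uncontrolled. This is precisely the blow-up the clipping is introduced to prevent, and your decomposition forfeits its benefit by routing the drift through $x_t$: the terms (i) and (iii) can each be large with opposite signs, and only their sum $f_t(\wt x_t)-f_t(x_t^\star)$ is controllable. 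The paper's proof therefore bounds this combined quantity directly via $\inprod{\nabla f_t(\wt x_t),\wt x_t-x_t^\star}$, where the gradient factor is $\abs{\inprod{z_t,\wt x_t}-y_t}\le\rho_t+Y\le 2Y$ thanks to the clipping, and then runs a case analysis: on rounds where clipping does not worsen $f_t$ relative to $x_t$, a sign argument replaces $\inprod{z_t,\wt x_t-x_t^\star}$ by $\inprod{z_t,x_t-x_t^\star}$ so that the FTRL stability lemma and the delayed elliptical potential lemma apply; the remaining rounds necessarily have $\abs{y_t}>\rho_t$ and are charged $\order(\abs{y_t}^2)$ each.

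Your accounting of those remaining rounds is also not airtight as stated: the set $\cR=\{t:\abs{y_t}>\rho_t\}$ can have cardinality $\Theta(T)$ even when $\sigma_{\max}=1$ (take slowly increasing labels with unit delays), so "at most $\order(\sigma_{\max})$ bad rounds, each costing $\order(Y^2)$" is false. What one actually needs, and what the paper proves, is the weighted statement $\sum_{t}\abs{y_t}^2=\order(Y^2\sigma_{\max})$ over the harmful rounds, obtained by a dyadic peeling over the values of the running threshold $\rho_t$: within each dyadic level the harmful rounds are all missing at the time the threshold first enters that level, and the geometric weighting makes the top level dominate. Your pieces (ii) for the cheating regret and the $M_1$/$M_2$ dichotomy via \Cref{lem:elliptical-potential-delays} match the paper, but the drift/clipping argument needs to be restructured as above before the claimed bound follows.
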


The idea behind the regret analysis is once again to decompose the regret into a cheating regret term and a drift term:
\[
    \Reg_T(u) = \underbrace{\sum_{t=1}^T \brb{f_t(x_t^\star) - f_t(u)}}_{\Reg_T^\star(u)} + \underbrace{\sum_{t=1}^T \brb{f_t(\wt x_t) - f_t(x_t^\star)}}_{\drift_T} \,,
\]
where $(\wt x_t)_{t \ge 1}$ are the actions played by \Cref{alg:delayed-vaw}, while $(x_t^\star)_{t \ge 1}$ are the ``cheating'' iterates that assume to have knowledge about all labels from past rounds.
To bound the cheating regret $\Reg_T^\star(u)$, it is important to leverage the curvature of the squared loss. Specifically, by definition,
\[
    \Reg_T^\star(u) = \sum_{t=1}^T \frac{\inprod{z_t,x_t^\star}^2 - \inprod{z_t,u}^2}{2} + \sum_{t=1}^T\inprod{-y_t z_t, x_t^\star - u} \,.
\]
Then, we can study the second sum via the standard tools for the regret analysis of FTRL with respect to the linear losses $x \mapsto -y_t\inprod{z_t,x}$, which yields
\[
    \Reg_T^\star(u) \le \frac{\eta_T}{2}\norm{u}_2^2 + nY^2\ln\bbrb{1 + \frac{Z^2 T}{\eta_0 n}} \;.
\]
This is exactly the first line in the regret guarantee presented in \Cref{thm:onlinelr}, and it corresponds to the part that does not depend on delays.

On the other hand, the drift term $\drift_T$ requires much more care and novel techniques.
By the convexity of $f_t$, we have that
$
    \drift_T \le \sum_{t=1}^T \inprod{\nabla f_t(\wt x_t), \wt x_t - x_t^\star}
$.
Here we immediately observe the importance of the additional clipping of $x_t$ to define the selected point $\wt x_t$, which is inspired from the clipping ideas by \citet{cutkosky2019artificial,mayo2022scalefree}.
Its scope is to guarantee that the predicted label $\inprod{z_t,\wt x_t}$ falls within the range of true labels; the reason for this is to avoid the gradient of $f_t$ evaluated at $\wt x_t$ to blow up, otherwise obstructing an attempt to nicely bound $\drift_T$.
We also remark that, differently form \citet{mayo2022scalefree}, we do not require to clip the labels used in the iterates update too.
If we had knowledge of $Y$, we could use it to clip to the interval $[-Y,Y]$, thus guaranteeing $f_t(\wt x_t) \le Y$.
However, since we want to assume \emph{no prior knowledge of $Y$}, the best clipping we can do at any time $t$ is via $\rho_t$.
Doing so requires to handle possible rounds when the label falls outside the clipping interval, which in turn requires a careful analysis that accounts for the feedback to be revealed only after some delay (as $\rho_t$ could possibly be updated much later in time).
We are then able to prove that
\[
    \drift_T = \cO\brb{Y^2\sigma_{\max} + Y^2\min\bcb{M_1,M_2}} \;.
\]
which is the delay-dependent part of the regret; the $Y^2 \sigma_{\max}$ term, in particular, is the one due to clipping mistakes.

Given any $\gamma > 0$, we may now set $\eta_0 = \gamma$ and $\eta_t = \gamma(\min \{a_t,b_t\}+1)$ for all $t\geq 1$, where
\begin{align} \label{eq:eta_a_b_olr}
    a_t &= 2n\dmax^{\le t}\ln\lrb{1 + \frac{Z^2 T}{\gamma n}} ,\,
    b_t = Z \sqrt{\textstyle{\sum}_{s \le t} \abs{m_s}} \,.
\end{align}
By doing so, we obtain the following \Cref{cor:onlinelr} which provides a regret bound for \Cref{alg:delayed-vaw} with this adaptive tuning, and whose proof is deferred to \Cref{app:olr}.

\begin{restatable}{corollary}{onlinelrcor}
\label{cor:onlinelr}
    In the OLR problem with delayed labels under \pref{asm:olr}, \pref{alg:delayed-vaw} with the adaptive learning rate  $\eta_t = \gamma(\min \{a_t,b_t\}+1)$, where $a_t$ and $b_t$ are defined in \Cref{eq:eta_a_b_olr} for any $\gamma > 0$ guarantees that
    \[
        \Reg_T \leq \frac{\gamma \norm{u}_2^2}{2} + nY^2 \ln\bbrb{1 + \frac{Z^2 T}{\gamma n}} + \order\brb{\min \left\{Q_1 , Q_2 \right\}} ,
    \]
    where \(Q_1 = \brb{\gamma \norm{u}_2^2 + Y^2} n \dmax \ln\lrb{1 + \frac{Z^2 T}{\gamma n}}\)
    and \(Q_2 = \left(\gamma Z\norm{u}_2^2 + (Z+1)Y^2\right) \sqrt{\dtot} \;\).
\end{restatable}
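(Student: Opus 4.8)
The plan is to instantiate \Cref{thm:onlinelr} with the stated adaptive learning rate and show that the resulting bound collapses to the claimed form. I would first record two consequences of the standing assumption $t+d_t\le T$: since $\min\{d_\tau,T-\tau\}=d_\tau$ for every $\tau$, we get $\dmax^{\le T}=\dmax$ and $\sum_{t=1}^T|m_t|=\sum_{\tau=1}^T\min\{d_\tau,T-\tau\}=\dtot$, so that $a_T=2n\dmax\ln(1+\tfrac{Z^2T}{\gamma n})$ and $b_T=Z\sqrt{\dtot}$. Because $\dmax^{\le t}$ and $\sum_{s\le t}|m_s|$ are non-decreasing, so is $\min\{a_t,b_t\}$, hence $\gamma=\eta_0\le\eta_1\le\dots\le\eta_T$ and \Cref{thm:onlinelr} applies; moreover $\eta_0=\gamma$ makes its second term exactly $nY^2\ln(1+\tfrac{Z^2T}{\gamma n})$, matching the statement, so it remains only to absorb $\tfrac{\eta_T}{2}\norm{u}_2^2$ and $\order(Y^2(\sigma_{\max}+\min\{M_1,M_2\}))$ into $\tfrac{\gamma}{2}\norm{u}_2^2+\order(\min\{Q_1,Q_2\})$.

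For the $Q_1$ side of the minimum I would use \Cref{thm:onlinelr} as a black box with the $M_1$ branch: bound $\min\{M_1,M_2\}\le M_1=n\dmax\ln(1+\tfrac{Z^2T}{\gamma n})$, bound $\sigma_{\max}\le\dmax$ so that $Y^2\sigma_{\max}\lesssim Y^2M_1$, and bound $\tfrac{\eta_T}{2}\norm{u}_2^2\le\tfrac{\gamma}{2}\norm{u}_2^2(a_T+1)=\tfrac{\gamma}{2}\norm{u}_2^2+\gamma\norm{u}_2^2 n\dmax\ln(1+\tfrac{Z^2T}{\gamma n})$. Collecting terms yields $\Reg_T\le\tfrac{\gamma}{2}\norm{u}_2^2+nY^2\ln(1+\tfrac{Z^2T}{\gamma n})+\order(Q_1)$, valid for every delay sequence.

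For the $Q_2$ side I would split on whether $a_T\le b_T$. If $a_T\le b_T$, then $M_1=\tfrac12 a_T\le\tfrac12 b_T=\tfrac{Z}{2}\sqrt{\dtot}$, so the $M_1$ branch already gives $Y^2\min\{M_1,M_2\}\lesssim Y^2Z\sqrt{\dtot}$; moreover $\tfrac{\eta_T}{2}\norm{u}_2^2=\tfrac{\gamma}{2}\norm{u}_2^2(a_T+1)\le\tfrac{\gamma}{2}\norm{u}_2^2+\tfrac{\gamma Z}{2}\norm{u}_2^2\sqrt{\dtot}$, and $Y^2\sigma_{\max}\lesssim Y^2\sqrt{\dtot}$ by \Cref{lem:sigma-max-bound-dtot}, so the bound becomes $\tfrac{\gamma}{2}\norm{u}_2^2+nY^2\ln(1+\tfrac{Z^2T}{\gamma n})+\order(Q_2)$. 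If instead $b_T<a_T$, then $\eta_T=\gamma(b_T+1)=\gamma(Z\sqrt{\dtot}+1)$ already controls the comparator term by $\order(\gamma Z\norm{u}_2^2\sqrt{\dtot})$, and for the delay-dependent part I would split at $\tau^\star:=\max\{t\le T:a_t\le b_t\}$ (taking $\tau^\star=0$ if no such $t$ exists). For $t>\tau^\star$ we have $a_t>b_t$, hence $\eta_t=\gamma(b_t+1)\ge\gamma Z\sqrt{\sum_{s\le t}|m_s|}$, so by \citet[Lemma~4.13]{orabona2019modern} the contribution of those rounds to $Z^2\sum_t|m_t|/\eta_t$ is $\order(\tfrac1\gamma\sqrt{\dtot})$. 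For $t\le\tau^\star$ I would not use $M_2$ but instead re-enter the drift analysis of \Cref{thm:onlinelr} and apply the delayed elliptical potential lemma (\Cref{lem:elliptical-potential-delays}) restricted to the first $\tau^\star$ rounds, producing a bound of order $Y^2 n\dmax^{\le\tau^\star}\ln(1+\tfrac{Z^2T}{\gamma n})=\tfrac{Y^2}{2}a_{\tau^\star}\le\tfrac{Y^2}{2}b_{\tau^\star}\le\tfrac{Y^2Z}{2}\sqrt{\dtot}$, where the inequality $a_{\tau^\star}\le b_{\tau^\star}$ is the definition of $\tau^\star$. Adding the $\order(Y^2\sigma_{\max})=\order(Y^2\sqrt{\dtot})$ clipping term, this case again yields $\Reg_T\le\tfrac{\gamma}{2}\norm{u}_2^2+nY^2\ln(1+\tfrac{Z^2T}{\gamma n})+\order(Q_2)$. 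Taking the minimum of the two established bounds finishes the proof.

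The step I expect to be the genuine obstacle is the $t\le\tau^\star$ part of the $Q_2$ case: one cannot invoke \Cref{thm:onlinelr} as a black box there, because over those rounds $\eta_t$ tracks $a_t$ (the $\dmax$-surrogate) rather than $b_t$, so $Z^2\sum_{t\le\tau^\star}|m_t|/\eta_t$ need not be $\order(\sqrt{\dtot})$. It is necessary instead to establish a partial-horizon version of the delayed elliptical potential bound — summing $\norm{z_t}_{A_t^{-1}}\sum_{\tau\in m_t}\norm{z_\tau}_{A_t^{-1}}$ only up to $\tau^\star$ and controlling it by $\dmax^{\le\tau^\star}$ — and then trade this logarithmic-in-$T$ quantity for $\sqrt{\dtot}$ via $a_{\tau^\star}\le b_{\tau^\star}$. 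Carefully bookkeeping the clipping corrections across this split (since $\rho_t$ may lag behind the true labels, so a clipping mistake at round $t$ may only be ``repaired'' after a delay) and the precise $\gamma$-dependence of the constants is the delicate part; the overall structure mirrors, but is somewhat more involved than, the corresponding step in the proof of \Cref{cor:expconcvae_dmin}.
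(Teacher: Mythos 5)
Your proposal is correct and follows essentially the same route as the paper: instantiate \Cref{thm:onlinelr} with the adaptive tuning, refine the drift bound to a partial-horizon version split at the last round $\tau^\star$ with $a_{\tau^\star}\le b_{\tau^\star}$ (this is exactly \Cref{eq:olr-drift-second-step-2}, obtained by applying \Cref{lem:elliptical-potential-delays} up to $\tau^\star$ and \citet[Lemma~4.13]{orabona2019modern} afterwards), and absorb the clipping term via $\sigma_{\max}\lesssim\sqrt{\dtot}$ from \Cref{lem:sigma-max-bound-dtot}. The only cosmetic difference is bookkeeping — you establish $Q_1$ unconditionally and $Q_2$ by a case split before taking the minimum, whereas the paper derives $Q_1$ in the case $a_T\le b_T$ and $Q_2$ in the case $a_T>b_T$ — and the obstacle you correctly flag (that the rounds $t\le\tau^\star$ cannot be handled by the $M_2$ branch and require the restricted elliptical-potential argument traded for $\sqrt{\dtot}$ via $a_{\tau^\star}\le b_{\tau^\star}$) is precisely the step the paper addresses.
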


To achieve this final result, we leverage similar ideas from the adaptive tuning for delayed ONS in \Cref{cor:expconcvae_dmin}, as mentioned above, together with a nontrivial relation between $\sigma_{\max}$ and $\sqrt{\dtot}$ to handle the additive $Y^2\sigma_{\max}$ term from the clipping errors (see \Cref{lem:sigma-max-bound-dtot}).
We remark that here we used directly $Z$ for the tuning, which requires its knowledge since the first round; we could easily do without this prior knowledge by using $Z_t = \max_{\tau \le t} \norm{z_\tau}_2$ instead because we always observe all the previous and the current feature vectors by the beginning of round $t$.

\begin{figure*}[t]
    \centering
    \subfloat{%
        \centering
        \includegraphics[width=0.27\textwidth]{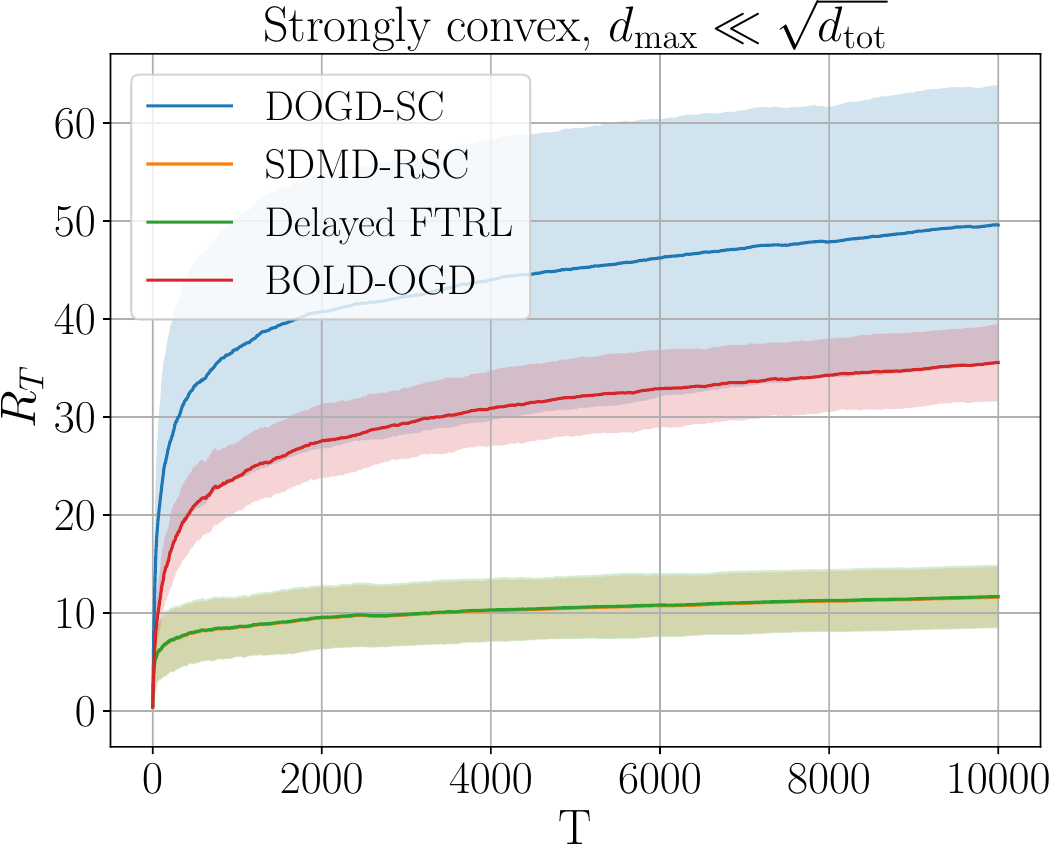}
        \label{fig:sc_dmax}
    }
    \hfill
    \subfloat{%
        \centering
        \includegraphics[width=0.27\textwidth]{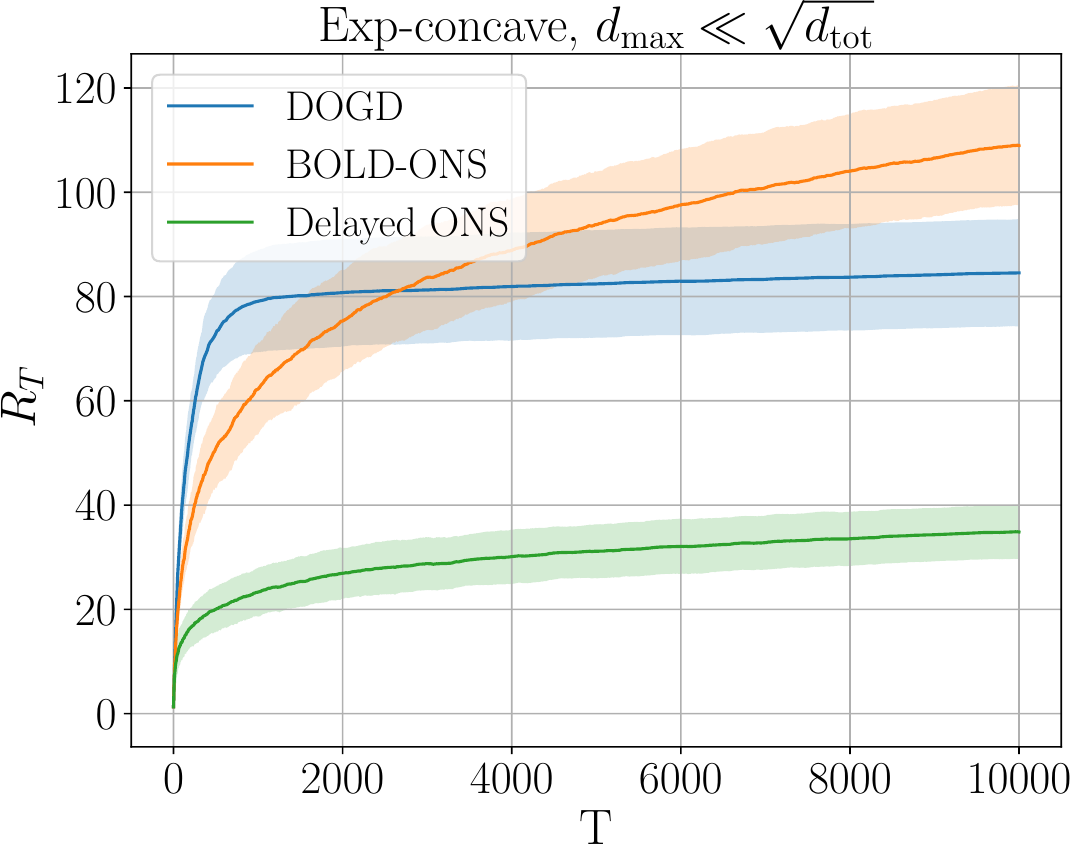}
        \label{fig:exp_dmax}
    }
    \hfill
    \subfloat{%
        \centering
        \includegraphics[width=0.27\textwidth]{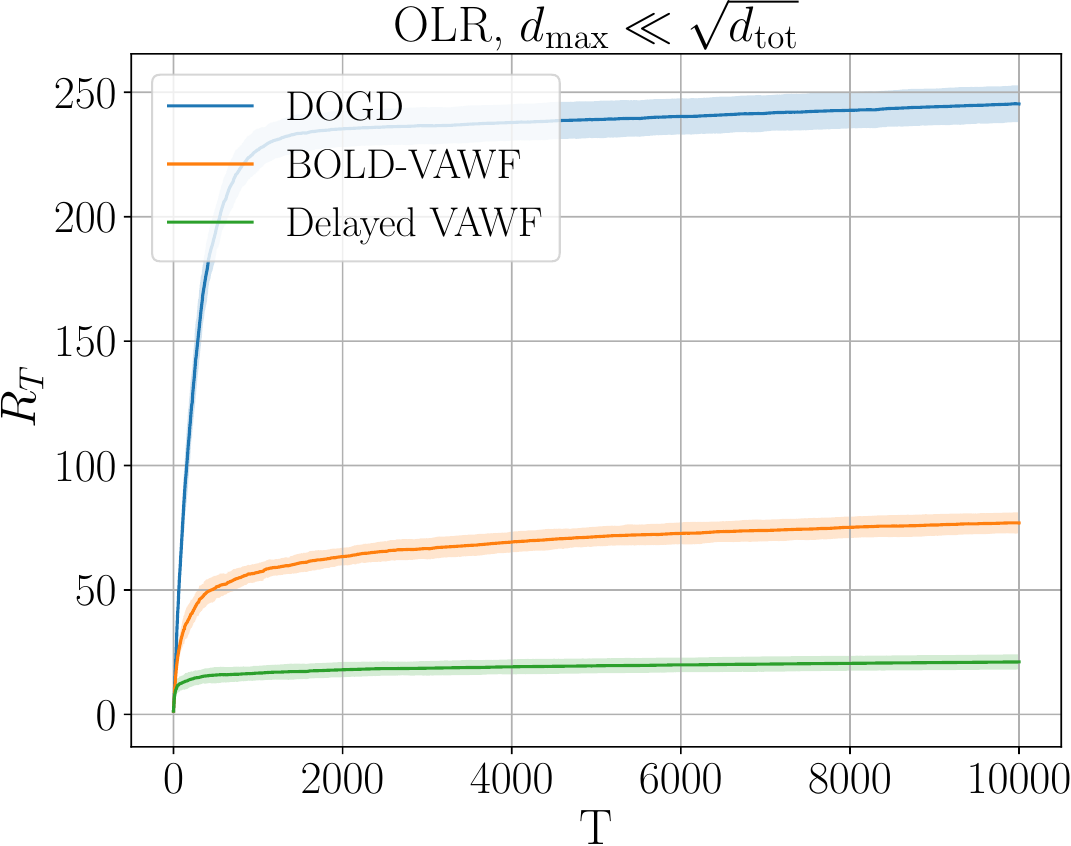}
        \label{fig:olr_dmax}
    }
    \vspace{3pt}
    
    \subfloat{%
        \centering
        \includegraphics[width=0.27\textwidth]{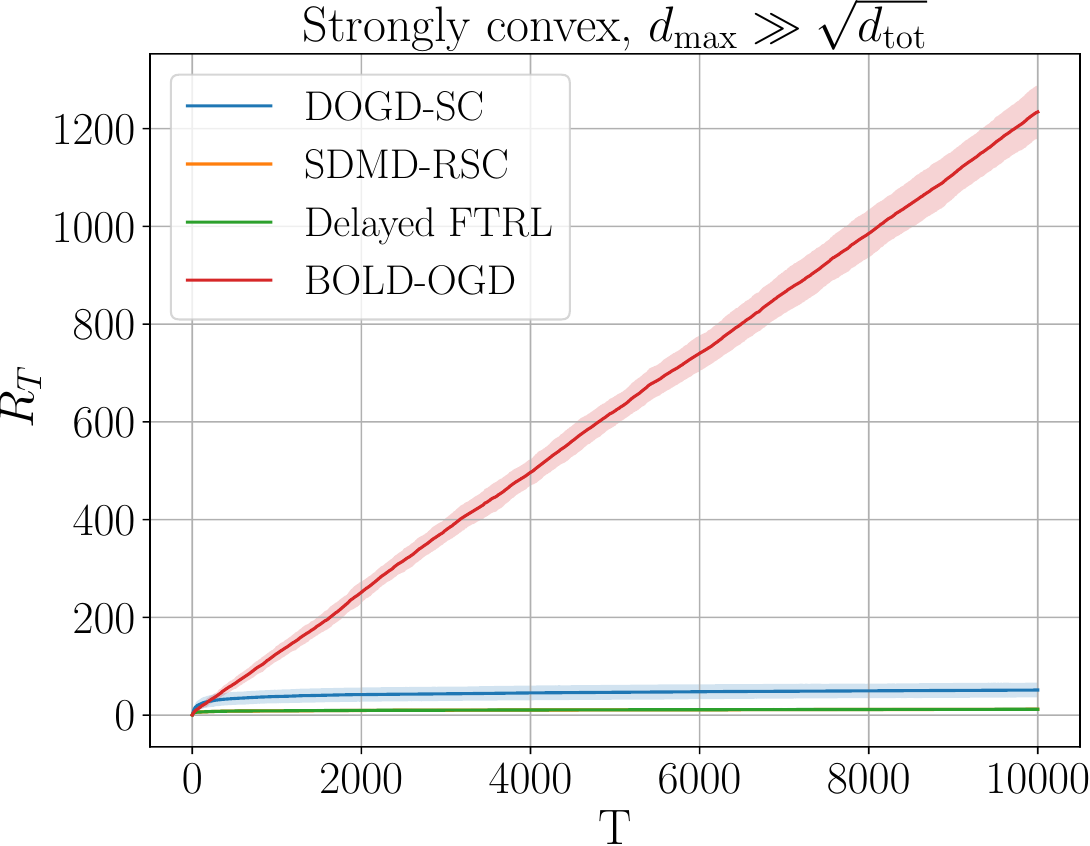}
        \label{fig:sc_dtot}
    }
    \hfill
    \subfloat{%
        \centering
        \includegraphics[width=0.27\textwidth]{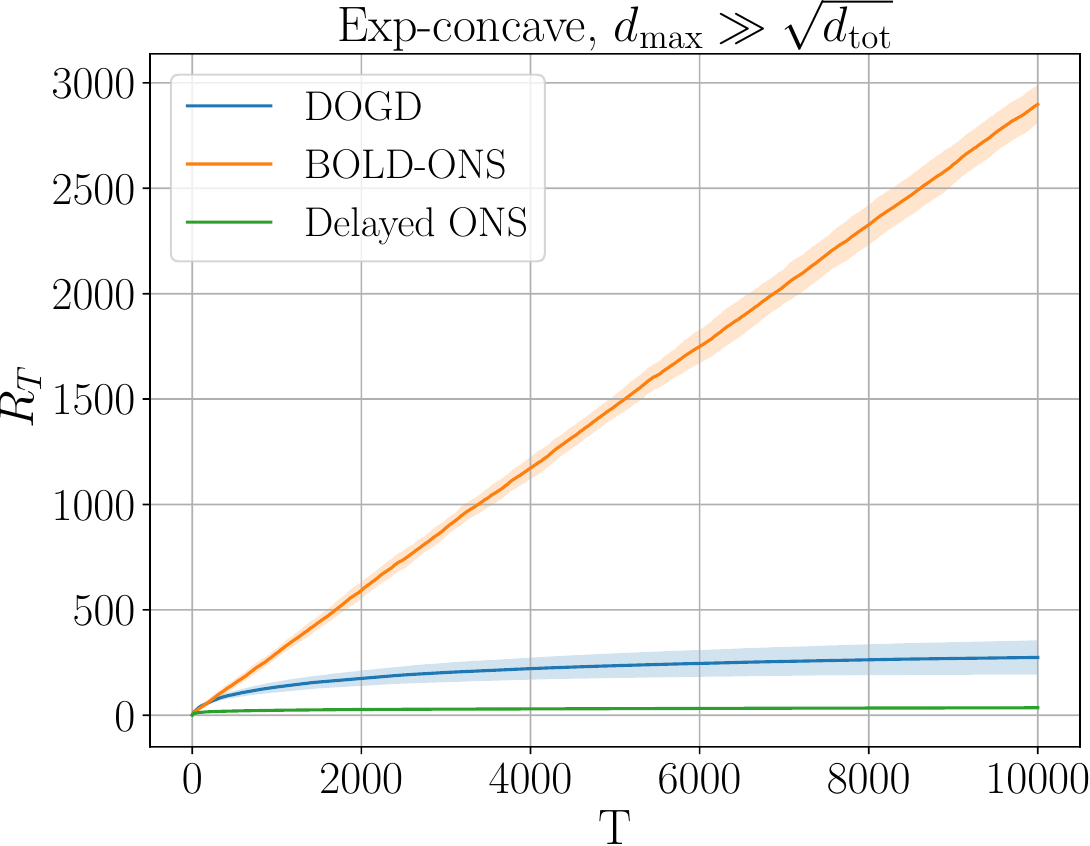}
        \label{fig:exp_dtot}
        }
         \hfill
    \subfloat{%
        \centering
        \includegraphics[width=0.27\textwidth]{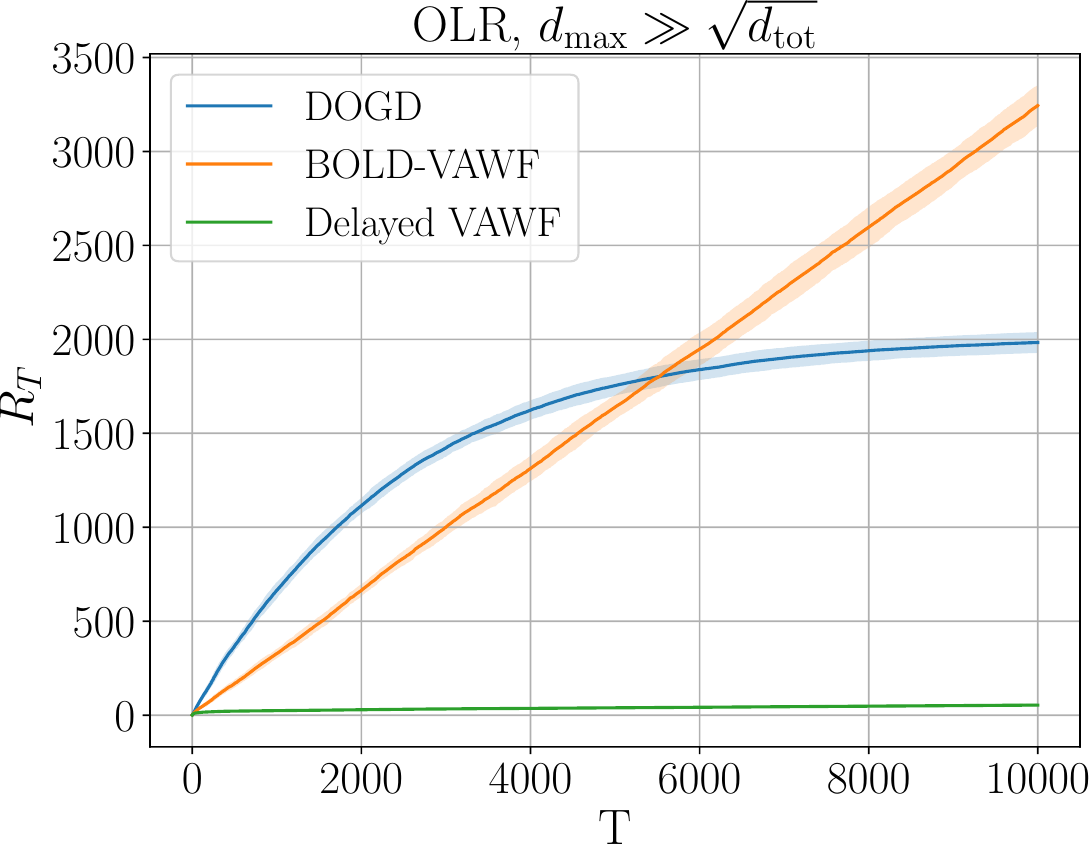}
        \label{fig:olr_dtot}}
    \caption{Comparison with relevant baselines. The shaded areas consider a range centered around the mean with half-width corresponding to the empirical standard deviation over $20$ repetitions.}
    \label{fig:exps}
\end{figure*}

\section{Experiments}
\label{sec:exps}

In this section, we evaluate the performance of the proposed algorithms on three types of loss functions in the delayed OCO setting.\footnote{The code for the experiments is available at \href{https://github.com/haoqiu95/DOCO}{https://github.com/haoqiu95/DOCO}.}
All experiments are conducted over $T = 10000$ round and results are averaged over $20$ independent trials.
To showcase the advantage of our algorithms, we consider two delay regimes.
For the first case, each delay $d_t$ is independently and uniformly sampled from the set $\{0,1,\dots,5\}$, thus leading to $\E[\sqrt{\dtot}]=\Theta(\sqrt{T})$ and $\E[\sigma_{\max}]\leq \E[\dmax]\leq 5$.
In the second case, we define $p= T^{-1/3}=0.1$.
Then, for each $t$, $d_t$ is sampled from the same distribution with probability $1-p$, and it is set to be $d_t = T-t$ with probability $p$. In this case, $\E[\sqrt{\dtot}]=o(T)$, $\E[\dmax]\geq T(1-(1-p)^{T})$, and $\E[\sigma_{\max}]=\order(pT)$. We compare our algorithms against several baselines designed for delayed feedback settings. Below, we describe how we construct losses, together with the baseline algorithms we compare against. We provide additional experiments in \Cref{appendix: additional_exps}.

\paragraph{Strongly convex loss.} We consider the following strongly convex losses $f_t(x) = \frac{1}{2}(\langle z_t,x\rangle - y_t)^2 + \frac{1}{2}\norm{x}_2^2 $. The feasible set is the ball $\calX=\{x\in \R^5, \|x\|_2\leq 2\}$. Each coordinate of the feature vector $z_t\in\R^5$ at round $t$ is uniformly chosen from $[-1,1]$ while $y_t=\inner{z_t, \mathbf{1}}+\epsilon_t$, where $\epsilon_t$ is an i.i.d.\ standard Gaussian noise.
We evaluate \pref{alg:delayed-ftrl-strongly-convex} on this loss sequence and compare its performance with DOGD-SC \citep{wan2022online}, SDMD-RSC \citep[Algorithm~6]{wu2024online}, and BOLD-OGD which applies the reduction proposed by \citet{joulani2013online} to OGD.

\paragraph{Exp-concave loss.} The loss functions we consider for exp-concave ones are $f_t(x) = \frac{1}{2}\big(\langle z_t,x\rangle - y_t\big)^2$. The other configurations are the same as the experiments in the strongly convex case. We evaluate our \pref{alg:delayed-ons-exp-concave} and compare its performance with that of DOGD \citep{quanrud2015online} and BOLD-ONS, which applies the reduction proposed in \citet{joulani2013online} to ONS \citep{hazan2007logarithmic}.

\paragraph{Online linear regression.} We still consider the loss function $f_t(x) = \frac{1}{2}\big(\langle z_t,x\rangle - y_t\big)^2$ for all $t\in[T]$, the same one as used in the exp-concave setting. The only difference is that the action space is now unconstrained ($\cX = \R^5$). 
We empirically evaluate \pref{alg:delayed-vaw} on this loss sequence and compare the performance with DOGD \citep{quanrud2015online} and BOLD-VAW, which is again a combination of the reduction in \citet{joulani2013online} and the VAW forecaster \citep{azoury2001relative,vovk2001competitive}.

\paragraph{Experimental results.} \Cref{fig:exps} shows the mean cumulative regret and its standard deviation over $20$ rounds for the instances with strong convexity, exp-concavity, and OLR under the two previously mentioned delay regimes. For strongly convex losses, we find that our algorithm performs much better than DOGD-SC \citep{wan2022online} and have similar performances compared to SDMD-RSC, which is proven to only achieve $\order(\dmax\ln T)$ regret \citep{wu2024online}. However, we point out that this mismatch in the empirical performance and the theoretical guarantee of SDMD-RSC is due to a loose analysis of this algorithm. In fact, we show that SDMD-RSC can also achieve the same $\order(\min\{\sigma_{\max}\ln T, \sqrt{\dtot}\})$ regret via a refined analysis. The proof is deferred to \Cref{app:exps}.

For both exp-concave and OLR settings, our algorithms consistently outperform DOGD, which does not leverage the curvature of the loss function, as well as the reduction-based algorithms proposed in~\citet{joulani2013online}, under both delay regimes, showing the effectiveness of our algorithms under different delay conditions.

\section{Conclusions}
In this paper, we study how to leverage the curvature of the loss functions in online convex optimization with delayed feedback so as to improve regret guarantees.
For strongly convex functions, we derive an algorithm achieving $\order(\min\{\sigma_{\max}\ln T,\sqrt{\dtot}\})$ regret, improving upon previous work \citep{wan2022online,wu2024online}, which only obtain $\order(\dmax\ln T)$ regret.
We also derive $\order(\min\{\dmax n\ln T,\sqrt{\dtot}\})$ for exp-concave losses and online linear regression, answering an open question proposed in \citet{wan2022online}.
It is still left open whether $\order(\min\{\sigma_{\max} n\ln T,\sqrt{\dtot}\})$ is achievable for exp-concave losses.

\section*{Acknowledgements}

EE and HQ acknowledge the financial support from the FAIR (Future Artificial Intelligence Research) project, funded by the NextGenerationEU program within the PNRR-PE-AI scheme (M4C2, investment 1.3, line on Artificial Intelligence), the EU Horizon CL4-2022-HUMAN-02 research and innovation action under grant agreement 101120237, project ELIAS (European Lighthouse of AI for Sustainability), and the One Health Action Hub, University Task Force for the resilience of territorial ecosystems, funded by Università degli Studi di Milano (PSR 2021-GSA-Linea 6).

\section*{Impact Statement}
This paper presents work whose goal is to advance the field
of Machine Learning. The contribution of our work is primarily theoretical in nature. Thus, we do not foresee any societal consequences.


\bibliography{example_paper}

\begin{thebibliography}{43}
\providecommand{\natexlab}[1]{#1}
\providecommand{\url}[1]{\texttt{#1}}
\expandafter\ifx\csname urlstyle\endcsname\relax
  \providecommand{\doi}[1]{doi: #1}\else
  \providecommand{\doi}{doi: \begingroup \urlstyle{rm}\Url}\fi

\bibitem[Abernethy et~al.(2008)Abernethy, Bartlett, Rakhlin, and Tewari]{abernethy2008optimal}
Abernethy, J., Bartlett, P.~L., Rakhlin, A., and Tewari, A.
\newblock Optimal strategies and minimax lower bounds for online convex games.
\newblock In \emph{Proceedings of the 21st annual conference on learning theory}, pp.\  414--424, 2008.

\bibitem[Azoury \& Warmuth(2001)Azoury and Warmuth]{azoury2001relative}
Azoury, K.~S. and Warmuth, M.~K.
\newblock Relative loss bounds for on-line density estimation with the exponential family of distributions.
\newblock \emph{Machine learning}, 43:\penalty0 211--246, 2001.

\bibitem[Bar-On \& Mansour(2025)Bar-On and Mansour]{baron2025nonstochastic}
Bar-On, Y. and Mansour, Y.
\newblock Non-stochastic bandits with evolving observations.
\newblock In Kamath, G. and Loh, P.-L. (eds.), \emph{Proceedings of The 36th International Conference on Algorithmic Learning Theory}, volume 272 of \emph{Proceedings of Machine Learning Research}, pp.\  204--227. PMLR, 2025.

\bibitem[Bistritz et~al.(2022)Bistritz, Zhou, Chen, Bambos, and Blanchet]{bistritz2022no}
Bistritz, I., Zhou, Z., Chen, X., Bambos, N., and Blanchet, J.
\newblock No weighted-regret learning in adversarial bandits with delays.
\newblock \emph{Journal of Machine Learning Research}, 23\penalty0 (139):\penalty0 1--43, 2022.

\bibitem[Cella \& Cesa-Bianchi(2020)Cella and Cesa-Bianchi]{bandit-cella2020stochastic}
Cella, L. and Cesa-Bianchi, N.
\newblock Stochastic bandits with delay-dependent payoffs.
\newblock In \emph{International Conference on Artificial Intelligence and Statistics}, pp.\  1168--1177. PMLR, 2020.

\bibitem[Cesa-Bianchi et~al.(2016)Cesa-Bianchi, Gentile, Mansour, and Minora]{bandit-pmlr-v49-cesa-bianchi16}
Cesa-Bianchi, N., Gentile, C., Mansour, Y., and Minora, A.
\newblock Delay and cooperation in nonstochastic bandits.
\newblock In Feldman, V., Rakhlin, A., and Shamir, O. (eds.), \emph{29th Annual Conference on Learning Theory}, volume~49 of \emph{Proceedings of Machine Learning Research}, pp.\  605--622, Columbia University, New York, New York, USA, 23--26 Jun 2016. PMLR.

\bibitem[Chang \& Lin(2011)Chang and Lin]{chang2011libsvm}
Chang, C.-C. and Lin, C.-J.
\newblock {LIBSVM}: {A} library for support vector machines.
\newblock \emph{ACM Transactions on Intelligent Systems and Technology (TIST)}, 2\penalty0 (3):\penalty0 1--27, 2011.

\bibitem[Cover(1991)]{cover1991universal}
Cover, T.~M.
\newblock Universal portfolios.
\newblock \emph{Mathematical finance}, 1\penalty0 (1):\penalty0 1--29, 1991.

\bibitem[Cutkosky(2019)]{cutkosky2019artificial}
Cutkosky, A.
\newblock Artificial constraints and hints for unbounded online learning.
\newblock In Beygelzimer, A. and Hsu, D. (eds.), \emph{Proceedings of the Thirty-Second Conference on Learning Theory}, volume~99 of \emph{Proceedings of Machine Learning Research}, pp.\  874--894. PMLR, 2019.

\bibitem[Esposito et~al.(2023)Esposito, Masoudian, Qiu, {\VAN{Hoeven}{Van der}{van der}}~Hoeven, Cesa-Bianchi, and Seldin]{esposito2023delayed}
Esposito, E., Masoudian, S., Qiu, H., {\VAN{Hoeven}{Van der}{van der}}~Hoeven, D., Cesa-Bianchi, N., and Seldin, Y.
\newblock {D}elayed bandits: {W}hen do intermediate observations help?
\newblock In Krause, A., Brunskill, E., Cho, K., Engelhardt, B., Sabato, S., and Scarlett, J. (eds.), \emph{Proceedings of the 40th International Conference on Machine Learning}, volume 202 of \emph{Proceedings of Machine Learning Research}, pp.\  9374--9395. PMLR, 2023.

\bibitem[Flaspohler et~al.(2021)Flaspohler, Orabona, Cohen, Mouatadid, Oprescu, Orenstein, and Mackey]{flaspohler2021online}
Flaspohler, G.~E., Orabona, F., Cohen, J., Mouatadid, S., Oprescu, M., Orenstein, P., and Mackey, L.
\newblock Online learning with optimism and delay.
\newblock In \emph{International Conference on Machine Learning}, pp.\  3363--3373. PMLR, 2021.

\bibitem[Gatmiry \& Schneider(2024)Gatmiry and Schneider]{gatmiry2024adversarial}
Gatmiry, K. and Schneider, J.
\newblock Adversarial online learning with temporal feedback graphs.
\newblock In Agrawal, S. and Roth, A. (eds.), \emph{Proceedings of Thirty Seventh Conference on Learning Theory}, volume 247 of \emph{Proceedings of Machine Learning Research}, pp.\  4548--4572. PMLR, 2024.

\bibitem[Hazan(2016)]{hazan2016introduction}
Hazan, E.
\newblock Introduction to online convex optimization.
\newblock \emph{Foundations and Trends{\textregistered} in Optimization}, 2\penalty0 (3-4):\penalty0 157--325, 2016.
\newblock \doi{10.1561/2400000013}.

\bibitem[Hazan et~al.(2007)Hazan, Agarwal, and Kale]{hazan2007logarithmic}
Hazan, E., Agarwal, A., and Kale, S.
\newblock Logarithmic regret algorithms for online convex optimization.
\newblock \emph{Machine Learning}, 69\penalty0 (2):\penalty0 169--192, 2007.

\bibitem[He et~al.(2014)He, Pan, Jin, Xu, Liu, Xu, Shi, Atallah, Herbrich, Bowers, et~al.]{he2014practical}
He, X., Pan, J., Jin, O., Xu, T., Liu, B., Xu, T., Shi, Y., Atallah, A., Herbrich, R., Bowers, S., et~al.
\newblock Practical lessons from predicting clicks on ads at facebook.
\newblock In \emph{Proceedings of the eighth international workshop on data mining for online advertising}, pp.\  1--9, 2014.

\bibitem[H{\'e}liou et~al.(2020)H{\'e}liou, Mertikopoulos, and Zhou]{heliou2020gradient}
H{\'e}liou, A., Mertikopoulos, P., and Zhou, Z.
\newblock Gradient-free online learning in continuous games with delayed rewards.
\newblock In \emph{International conference on machine learning}, pp.\  4172--4181. PMLR, 2020.

\bibitem[{\VAN{Hoeven}{Van der}{van der}}~Hoeven \& Cesa-Bianchi(2022){\VAN{Hoeven}{Van der}{van der}}~Hoeven and Cesa-Bianchi]{bandit-pmlr-v151-van-der-hoeven22a}
{\VAN{Hoeven}{Van der}{van der}}~Hoeven, D. and Cesa-Bianchi, N.
\newblock Nonstochastic bandits and experts with arm-dependent delays.
\newblock In Camps-Valls, G., Ruiz, F. J.~R., and Valera, I. (eds.), \emph{Proceedings of The 25th International Conference on Artificial Intelligence and Statistics}, volume 151 of \emph{Proceedings of Machine Learning Research}, pp.\  2022--2044. PMLR, 28--30 Mar 2022.

\bibitem[{\VAN{Hoeven}{Van der}{van der}}~Hoeven et~al.(2023){\VAN{Hoeven}{Van der}{van der}}~Hoeven, Zierahn, Lancewicki, Rosenberg, and Cesa-Bianchi]{bandit-pmlr-v195-hoeven23a}
{\VAN{Hoeven}{Van der}{van der}}~Hoeven, D., Zierahn, L., Lancewicki, T., Rosenberg, A., and Cesa-Bianchi, N.
\newblock A unified analysis of nonstochastic delayed feedback for combinatorial semi-bandits, linear bandits, and mdps.
\newblock In Neu, G. and Rosasco, L. (eds.), \emph{Proceedings of Thirty Sixth Conference on Learning Theory}, volume 195 of \emph{Proceedings of Machine Learning Research}, pp.\  1285--1321. PMLR, 12--15 Jul 2023.

\bibitem[Jin et~al.(2022)Jin, Lancewicki, Luo, Mansour, and Rosenberg]{mdp-NEURIPS2022_d850b7e0}
Jin, T., Lancewicki, T., Luo, H., Mansour, Y., and Rosenberg, A.
\newblock Near-optimal regret for adversarial mdp with delayed bandit feedback.
\newblock In Koyejo, S., Mohamed, S., Agarwal, A., Belgrave, D., Cho, K., and Oh, A. (eds.), \emph{Advances in Neural Information Processing Systems}, volume~35, pp.\  33469--33481. Curran Associates, Inc., 2022.

\bibitem[Joulani et~al.(2013)Joulani, Gyorgy, and Szepesv{\'a}ri]{joulani2013online}
Joulani, P., Gyorgy, A., and Szepesv{\'a}ri, C.
\newblock Online learning under delayed feedback.
\newblock In \emph{International conference on machine learning}, pp.\  1453--1461. PMLR, 2013.

\bibitem[Joulani et~al.(2016)Joulani, Gyorgy, and Szepesv{\'a}ri]{joulani2016delay}
Joulani, P., Gyorgy, A., and Szepesv{\'a}ri, C.
\newblock Delay-tolerant online convex optimization: Unified analysis and adaptive-gradient algorithms.
\newblock In \emph{Proceedings of the AAAI Conference on Artificial Intelligence}, volume~30, 2016.

\bibitem[Lancewicki et~al.(2022)Lancewicki, Rosenberg, and Mansour]{mdp-lancewicki2022learning}
Lancewicki, T., Rosenberg, A., and Mansour, Y.
\newblock Learning adversarial markov decision processes with delayed feedback.
\newblock In \emph{Proceedings of the AAAI Conference on Artificial Intelligence}, volume~36, pp.\  7281--7289, 2022.

\bibitem[Langford et~al.(2009)Langford, Smola, and Zinkevich]{langford2009slow}
Langford, J., Smola, A., and Zinkevich, M.
\newblock Slow learners are fast.
\newblock \emph{arXiv preprint arXiv:0911.0491}, 2009.

\bibitem[Lattimore \& Szepesv{\'a}ri(2020)Lattimore and Szepesv{\'a}ri]{lattimore2020bandit}
Lattimore, T. and Szepesv{\'a}ri, C.
\newblock \emph{Bandit algorithms}.
\newblock Cambridge University Press, 2020.

\bibitem[Masoudian et~al.(2022)Masoudian, Zimmert, and Seldin]{masoudian2022bobw}
Masoudian, S., Zimmert, J., and Seldin, Y.
\newblock A best-of-both-worlds algorithm for bandits with delayed feedback.
\newblock In Koyejo, S., Mohamed, S., Agarwal, A., Belgrave, D., Cho, K., and Oh, A. (eds.), \emph{Advances in Neural Information Processing Systems}, volume~35, pp.\  11752--11762. Curran Associates, Inc., 2022.

\bibitem[Masoudian et~al.(2024)Masoudian, Zimmert, and Seldin]{masoudian2024best}
Masoudian, S., Zimmert, J., and Seldin, Y.
\newblock A best-of-both-worlds algorithm for bandits with delayed feedback with robustness to excessive delays.
\newblock In \emph{The Thirty-eighth Annual Conference on Neural Information Processing Systems}, 2024.

\bibitem[Mayo et~al.(2022)Mayo, Hadiji, and {\VAN{Erven}{Van}{van}}~Erven]{mayo2022scalefree}
Mayo, J.~J., Hadiji, H., and {\VAN{Erven}{Van}{van}}~Erven, T.
\newblock Scale-free unconstrained online learning for curved losses.
\newblock In Loh, P.-L. and Raginsky, M. (eds.), \emph{Proceedings of Thirty Fifth Conference on Learning Theory}, volume 178 of \emph{Proceedings of Machine Learning Research}, pp.\  4464--4497. PMLR, 2022.

\bibitem[McMahan \& Streeter(2014)McMahan and Streeter]{NIPS2014_5cce8ded}
McMahan, B. and Streeter, M.
\newblock Delay-tolerant algorithms for asynchronous distributed online learning.
\newblock In Ghahramani, Z., Welling, M., Cortes, C., Lawrence, N., and Weinberger, K. (eds.), \emph{Advances in Neural Information Processing Systems}, volume~27. Curran Associates, Inc., 2014.

\bibitem[Orabona(2025)]{orabona2019modern}
Orabona, F.
\newblock A modern introduction to online learning.
\newblock \emph{arXiv preprint arXiv:1912.13213v7}, 2025.

\bibitem[Quanrud \& Khashabi(2015)Quanrud and Khashabi]{quanrud2015online}
Quanrud, K. and Khashabi, D.
\newblock Online learning with adversarial delays.
\newblock \emph{Advances in neural information processing systems}, 28, 2015.

\bibitem[Ryabchenko et~al.(2025)Ryabchenko, Attias, and Roy]{ryabchenko2025capacity}
Ryabchenko, A., Attias, I., and Roy, D.~M.
\newblock Capacity-constrained online learning with delays: {S}cheduling frameworks and regret trade-offs.
\newblock \emph{arXiv preprint arXiv:2503.19856v1}, 2025.

\bibitem[Schlisselberg et~al.(2025)Schlisselberg, Cohen, Lancewicki, and Mansour]{schlisselberg2025delay}
Schlisselberg, O., Cohen, I., Lancewicki, T., and Mansour, Y.
\newblock Delay as payoff in {MAB}.
\newblock \emph{Proceedings of the AAAI Conference on Artificial Intelligence}, 39\penalty0 (19):\penalty0 20310--20317, 2025.

\bibitem[Vovk(2001)]{vovk2001competitive}
Vovk, V.
\newblock Competitive on-line statistics.
\newblock \emph{International Statistical Review}, 69\penalty0 (2):\penalty0 213--248, 2001.

\bibitem[Wan et~al.(2022{\natexlab{a}})Wan, Tu, and Zhang]{wan2022online}
Wan, Y., Tu, W.-W., and Zhang, L.
\newblock Online strongly convex optimization with unknown delays.
\newblock \emph{Machine Learning}, 111\penalty0 (3):\penalty0 871--893, 2022{\natexlab{a}}.

\bibitem[Wan et~al.(2022{\natexlab{b}})Wan, Tu, and Zhang]{wan2022onlinefrankwolfe}
Wan, Y., Tu, W.-W., and Zhang, L.
\newblock Online frank-wolfe with arbitrary delays.
\newblock In Koyejo, S., Mohamed, S., Agarwal, A., Belgrave, D., Cho, K., and Oh, A. (eds.), \emph{Advances in Neural Information Processing Systems}, volume~35, pp.\  19703--19715. Curran Associates, Inc., 2022{\natexlab{b}}.

\bibitem[Wan et~al.(2023)Wan, Wang, Yao, Tu, and Zhang]{wan2023projectionfree}
Wan, Y., Wang, Y., Yao, C., Tu, W.-W., and Zhang, L.
\newblock Projection-free online learning with arbitrary delays, 2023.

\bibitem[Wan et~al.(2024)Wan, Yao, Song, and Zhang]{wan2024improved}
Wan, Y., Yao, C., Song, M., and Zhang, L.
\newblock Improved regret for bandit convex optimization with delayed feedback.
\newblock In Globerson, A., Mackey, L., Belgrave, D., Fan, A., Paquet, U., Tomczak, J., and Zhang, C. (eds.), \emph{Advances in Neural Information Processing Systems}, volume~37, pp.\  169--196. Curran Associates, Inc., 2024.

\bibitem[Wei et~al.(2021)Wei, Lee, Zhang, and Luo]{wei2021linear}
Wei, C.-Y., Lee, C.-W., Zhang, M., and Luo, H.
\newblock Linear last-iterate convergence in constrained saddle-point optimization.
\newblock In \emph{International Conference on Learning Representations}, 2021.

\bibitem[Weinberger \& Ordentlich(2002)Weinberger and Ordentlich]{weinberger2002delayed}
Weinberger, M.~J. and Ordentlich, E.
\newblock On delayed prediction of individual sequences.
\newblock \emph{IEEE Transactions on Information Theory}, 48\penalty0 (7):\penalty0 1959--1976, 2002.

\bibitem[Wu et~al.(2024)Wu, Huang, and Liu]{wu2024online}
Wu, P., Huang, H., and Liu, Z.
\newblock Online sequential decision-making with unknown delays.
\newblock In \emph{Proceedings of the ACM on Web Conference 2024}, pp.\  4028--4036, 2024.

\bibitem[Xu \& Zeevi(2023)Xu and Zeevi]{pmlr-v202-xu23u}
Xu, Y. and Zeevi, A.
\newblock {B}ayesian design principles for frequentist sequential learning.
\newblock In Krause, A., Brunskill, E., Cho, K., Engelhardt, B., Sabato, S., and Scarlett, J. (eds.), \emph{Proceedings of the 40th International Conference on Machine Learning}, volume 202 of \emph{Proceedings of Machine Learning Research}, pp.\  38768--38800. PMLR, 23--29 Jul 2023.

\bibitem[Zhang et~al.(2025)Zhang, Wang, and Luo]{zhang2025}
Zhang, M., Wang, Y., and Luo, H.
\newblock Contextual linear bandits with delay as payoff.
\newblock \emph{arXiv preprint arXiv:2502.12528v2}, 2025.

\bibitem[Zimmert \& Seldin(2020)Zimmert and Seldin]{zimmert2020optimal}
Zimmert, J. and Seldin, Y.
\newblock An optimal algorithm for adversarial bandits with arbitrary delays.
\newblock In Chiappa, S. and Calandra, R. (eds.), \emph{Proceedings of the Twenty Third International Conference on Artificial Intelligence and Statistics}, volume 108 of \emph{Proceedings of Machine Learning Research}, pp.\  3285--3294. PMLR, 2020.

\end{thebibliography}
\bibliographystyle{icml2025}

\newpage

\allowdisplaybreaks

\appendix
\onecolumn

\section{Auxiliary results}\label{app:aux}

In this section, we show several auxiliary lemmas that will be helpful throughout the paper.

\subsection{General results for the regret analysis}

The following lemma is a standard result for the regret of FTRL.

\begin{lemma}[{\citet[Lemma~7.1]{orabona2019modern}}]\label{lem:ftrl_book}
    Let $\cX\subseteq \R^n$ be closed and non-empty. Denote by $F_t(x)=\psi_t(x)+\sum_{\tau=1}^{t-1}\ell_{\tau}(x)$. Assume that $\argmin_{x\in \cX}F_t(x)$ is not empty and $x_t\in\argmin_{x\in\cX}F_t(x)$. Then, for any $u\in \cX$, 
    \begin{align*}
        \sum_{t=1}^T\brb{\ell_t(x_t)-\ell_t(u)} = \psi_{T+1}(u)-\min_{x\in\cX}\psi_1(x) + \sum_{t=1}^T\bsb{F_t(x_t)-F_{t+1}(x_{t+1})+\ell_t(x_t)} + F_{T+1}(x_{T+1}) - F_{T+1}(u) \;.
    \end{align*}
\end{lemma}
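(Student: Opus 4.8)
The plan is to establish this identity by pure algebraic rearrangement: expanding the definition of $F_{T+1}$, telescoping, and invoking the optimality of $x_1$. No convexity or inequalities are needed beyond the hypothesis that the relevant minimizers exist.

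First I would use $F_{T+1}(u) = \psi_{T+1}(u) + \sum_{t=1}^T \ell_t(u)$ to rewrite $\psi_{T+1}(u) - F_{T+1}(u) = -\sum_{t=1}^T \ell_t(u)$. Substituting this into the claimed right-hand side and cancelling the common term $\sum_{t=1}^T \ell_t(x_t)$ against the $\ell_t(x_t)$ appearing inside the bracketed sum, the asserted identity reduces to showing
\[
    \min_{x\in\cX}\psi_1(x) = \sum_{t=1}^T\bsb{F_t(x_t) - F_{t+1}(x_{t+1})} + F_{T+1}(x_{T+1}) \;.
\]
The sum on the right telescopes, collapsing to $F_1(x_1) - F_{T+1}(x_{T+1})$, so the right-hand side equals $F_1(x_1)$. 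Finally, by the empty-sum convention $F_1(x) = \psi_1(x) + \sum_{\tau=1}^{0}\ell_\tau(x) = \psi_1(x)$, and since $x_1 \in \argmin_{x\in\cX} F_1(x) = \argmin_{x\in\cX}\psi_1(x)$ by assumption, we get $F_1(x_1) = \psi_1(x_1) = \min_{x\in\cX}\psi_1(x)$, which proves the reduced identity and hence the lemma.

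The only point requiring care — rather than a genuine obstacle — is the index bookkeeping in the telescoping step and the convention that makes $F_1 = \psi_1$ (an empty running loss); everything else is an exact rearrangement valid for arbitrary functions $\psi_t, \ell_t$ under the stated existence-of-minimizer hypotheses.
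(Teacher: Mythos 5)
Your proof is correct: the identity is a pure algebraic rearrangement, and your telescoping plus the observation that $F_1=\psi_1$ and $x_1\in\argmin_{x\in\cX}\psi_1(x)$ is exactly the argument behind the cited result (the paper itself states this lemma without proof, attributing it to \citet[Lemma~7.1]{orabona2019modern}, whose proof proceeds the same way). No gaps.
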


The next lemma bounds the distance between two FTRL iterates with different linear losses and possibly different regularizers.
It also shows a simplified upper bound in the case when the two iterates have the same regularizer.
\begin{lemma}[Stability lemma]\label{lem:ftrl_sc}
    Let $\cX \subseteq \R^n$ be closed and non-empty.
    Let $A_1,A_2\succeq 0$ be two positive semidefinite matrices, $b_1,b_2\in \R^n$, and $c_1,c_2\in \R$.
    Define $\psi_1(x) = x^\top A_1x + b_1^\top x + c_1$ and $\psi_2(x) = x^\top A_2x + b_2^\top x + c_2$.
    Suppose that $z_1 \in \argmin_{x\in\calX} \bcb{\inner{w_1,x}+\psi_1(x)}$ and $z_2 \in \argmin_{x\in\calX}\bcb{\inner{w_2,x}+\psi_2(x)}$.
    Then, we have
    \begin{align*}
        \|z_1 - z_2\|_{A_1}^2 + \|z_1 - z_2\|_{A_2}^2 \leq \inner{w_1 - w_2, z_2 - z_1} + \left(\psi_1(z_2)-\psi_2(z_2)\right) - \left(\psi_1(z_1)-\psi_2(z_1)\right) \;.
    \end{align*}
    Furthermore, if $\psi_1(x)=\psi_2(x)=x^\top Ax+b^\top x+c$ with positive definite $A \succ 0$, we have
    \begin{align*}
        \|z_1-z_2\|_{A}\leq \frac{1}{2}\|w_1-w_2\|_{A^{-1}} \;.
    \end{align*}
\end{lemma}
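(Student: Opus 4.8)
\textbf{Proof plan for the Stability Lemma (Lemma~\ref{lem:ftrl_sc}).}
The plan is to derive both inequalities from first-order optimality conditions, treating the general-regularizer case and the common-regularizer case in turn. For the first inequality, I would start from the fact that $z_1$ minimizes $x\mapsto \inner{w_1,x}+\psi_1(x)$ over $\calX$ and $z_2$ minimizes $x\mapsto \inner{w_2,x}+\psi_2(x)$ over $\calX$. Since $\psi_i$ is quadratic with Hessian $2A_i$, the key is to use the ``two-point'' form of optimality: for a convex function $h$ with $z=\argmin_{x\in\calX} h(x)$, one has $h(z') \ge h(z) + \inner{\nabla h(z), z'-z}$ for all $z'\in\calX$, and the first-order condition $\inner{\nabla h(z), z'-z}\ge 0$. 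Concretely, I would write $h_1(x) = \inner{w_1,x}+\psi_1(x)$ and $h_2(x) = \inner{w_2,x}+\psi_2(x)$, and use the exact second-order expansion (no remainder, since the functions are quadratic):
\begin{align*}
    h_1(z_2) &= h_1(z_1) + \inner{\nabla h_1(z_1), z_2-z_1} + \|z_2-z_1\|_{A_1}^2 \ge h_1(z_1) + \|z_2-z_1\|_{A_1}^2 \,,\\
    h_2(z_1) &= h_2(z_2) + \inner{\nabla h_2(z_2), z_1-z_2} + \|z_1-z_2\|_{A_2}^2 \ge h_2(z_2) + \|z_1-z_2\|_{A_2}^2 \,,
\end{align*}
where the two inequalities use the optimality conditions $\inner{\nabla h_1(z_1), z_2-z_1}\ge 0$ and $\inner{\nabla h_2(z_2), z_1-z_2}\ge 0$. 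Adding these two inequalities and cancelling the $h_1(z_1)+h_2(z_2)$ terms gives $\|z_1-z_2\|_{A_1}^2 + \|z_1-z_2\|_{A_2}^2 \le h_1(z_2) - h_1(z_1) + h_2(z_1) - h_2(z_2)$, and expanding the right-hand side using the definitions of $h_1,h_2$ produces exactly $\inner{w_1-w_2, z_2-z_1} + (\psi_1(z_2)-\psi_2(z_2)) - (\psi_1(z_1)-\psi_2(z_1))$ after grouping terms; this is the claimed bound.

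For the second inequality, I specialize to $\psi_1=\psi_2=\psi$ with $\psi(x)=x^\top A x + b^\top x + c$ and $A\succ 0$. Then the right-hand side of the first inequality collapses to $\inner{w_1-w_2, z_2-z_1}$, and the left-hand side becomes $2\|z_1-z_2\|_A^2$. So I get $2\|z_1-z_2\|_A^2 \le \inner{w_1-w_2, z_2-z_1}$. Applying the Cauchy--Schwarz inequality in the pair of dual norms $\|\cdot\|_A$ and $\|\cdot\|_{A^{-1}}$, the right-hand side is at most $\|w_1-w_2\|_{A^{-1}}\|z_2-z_1\|_A$. Dividing through by $\|z_1-z_2\|_A$ (the inequality is trivial if $z_1=z_2$) yields $2\|z_1-z_2\|_A \le \|w_1-w_2\|_{A^{-1}}$, i.e.\ $\|z_1-z_2\|_A \le \tfrac12\|w_1-w_2\|_{A^{-1}}$, as desired.

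I do not anticipate a serious obstacle here: the lemma is essentially a careful bookkeeping exercise with quadratic functions and optimality conditions. The one point requiring mild care is the handling of the constraint set $\calX$ — one must use the variational inequality $\inner{\nabla h_i(z_i), z-z_i}\ge 0$ for $z\in\calX$ rather than $\nabla h_i(z_i)=0$, since the minimizers may lie on the boundary; the exact quadratic expansion then makes everything go through cleanly without convexity estimates or remainder terms. A secondary subtlety is that in the general case $A_1,A_2$ are only assumed positive \emph{semi}definite, so $\|\cdot\|_{A_i}$ is a seminorm; this causes no problem for the first inequality (which only needs nonnegativity of $\|z_1-z_2\|_{A_i}^2$), while the second inequality correctly invokes $A\succ 0$ so that $\|\cdot\|_A$ and $\|\cdot\|_{A^{-1}}$ are genuine dual norms and the division step is valid.
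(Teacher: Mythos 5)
Your proof is correct and mirrors the paper's argument exactly: define the quadratic objectives $h_1, h_2$, apply the exact second-order expansion together with the variational first-order optimality condition at each minimizer, add the two resulting inequalities, and then specialize to $\psi_1 = \psi_2$ with a Cauchy–Schwarz step for the second bound. You spell out the handling of the constraint set and the semidefinite-versus-definite distinction a bit more explicitly than the paper, but the underlying route is the same.
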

\begin{proof}
    Let $h_1(x) = \inprod{w_1,x}+\psi_1(x)$ and $h_2(x) = \inprod{w_2,x}+\psi_2(x)$ be twice-differentiable functions with Hessians $A_1 + A_1^\top$ and $A_2 + A_2^\top$, respectively.
    Note that $z_1 \in \argmin_{x \in \cX} h_1(x)$ and $z_2 \in \argmin_{x \in \cX} h_2(x)$.
    By Taylor's theorem and first-order optimality conditions, we know that
    \begin{align*}
        (\inner{w_1,z_2}+\psi_1(z_2)) - (\inner{w_1,z_1}+\psi_1(z_1)) = h_1(z_2) - h_1(z_1) \geq \|z_1-z_2\|_{A_1}^2 \;, \\
        (\inner{w_2,z_1}+\psi_2(z_1)) - (\inner{w_2,z_2}+\psi_2(z_2)) = h_2(z_1) - h_2(z_2) \geq \|z_1-z_2\|_{A_2}^2 \;.
    \end{align*}
    Summing up the above two inequalities, we obtain
    \begin{align*}
        \|z_1-z_2\|_{A_1}^2 + \|z_1-z_2\|_{A_2}^2 \leq \inner{w_1-w_2,z_2-z_1} + \left(\psi_1(z_2)-\psi_2(z_2)\right) - \left(\psi_1(z_1)-\psi_2(z_1)\right) \;.
    \end{align*}
    The second result is directly obtained by applying the Cauchy-Schwarz inequality when $\psi_1(x)=\psi_2(x)$. 
\end{proof}

The following lemma is the quadratic bound of $\alpha$-exp-concave functions.
\begin{lemma}[{\citet[Lemma~2]{hazan2007logarithmic}}]
\label{lem: exp_concave}
    Let $f\colon \cX \to \R$ be an $\alpha$-exp-concave function.
    Then, under \Cref{asm:gradient-bound,asm:diameter}, we have that
    \[
        f(x) \ge f(y) + \inprod{\nabla f(y), x-y} + \frac{\beta}{2} \brb{\inprod{\nabla f(y), x-y}}^2
    \]
    for any $x,y \in \cX$, where $\beta = \frac12\min\bcb{\frac{1}{4GD}, \alpha}$.
\end{lemma}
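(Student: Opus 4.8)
The plan is to reproduce the classical argument relating exp-concavity to a quadratic lower bound, paying attention to the precise constant $\beta$. The first ingredient is a monotonicity remark: if $f$ is $\alpha$-exp-concave and $0 < \alpha' \le \alpha$, then $f$ is also $\alpha'$-exp-concave, because $\exp(-\alpha' f) = \brb{\exp(-\alpha f)}^{\alpha'/\alpha}$ and $t \mapsto t^{\alpha'/\alpha}$ is concave and nondecreasing on $[0,\infty)$, so the composition is concave. Since $2\beta = \min\bcb{\frac{1}{4GD},\alpha} \le \alpha$, this shows that $g := \exp(-2\beta f)$ is concave (and differentiable, as $f$ is) on $\cX$.

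Next I would invoke the first-order characterization of concavity for $g$: for all $x,y \in \cX$, $g(x) \le g(y) + \inprod{\nabla g(y), x-y}$. Computing $\nabla g(y) = -2\beta\, g(y)\, \nabla f(y)$ and dividing through by $g(y) = \exp(-2\beta f(y)) > 0$ gives
\[
    \exp\brb{-2\beta\brb{f(x)-f(y)}} \le 1 - 2\beta \inprod{\nabla f(y), x-y} \,.
\]
The key quantitative step is then to ``delinearize'' this exponential inequality into the desired quadratic one. Writing $v = \inprod{\nabla f(y), x-y}$, Cauchy–Schwarz together with \Cref{asm:gradient-bound,asm:diameter} gives $\abs{v} \le GD$, and the choice of $\beta$ ensures $\abs{2\beta v} \le 2\beta GD \le \frac14$; in particular the right-hand side above lies in $\bsb{\frac34,\frac54}$ and is positive, so we may take logarithms to obtain $f(x)-f(y) \ge -\frac{1}{2\beta}\ln(1-2\beta v)$. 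It then suffices to establish the elementary scalar inequality $-\ln(1-a) \ge a + \frac{a^2}{4}$ for all $\abs{a} \le \frac14$ and apply it with $a = 2\beta v$, since $-\frac{1}{2\beta}\brb{a + \frac{a^2}{4}} = v + \frac{\beta}{2}v^2$. For $a \ge 0$ this follows from the power series $-\ln(1-a) = \sum_{k \ge 1} a^k/k \ge a + \frac{a^2}{2}$; for $a < 0$, writing $a = -w$ with $0 < w \le \frac14$, it reduces to $\ln(1+w) \le w - \frac{w^2}{4}$, which holds since $\ln(1+w) \le w - \frac{w^2}{2} + \frac{w^3}{3}$ and $\frac{w^3}{3} \le \frac{w^2}{4}$ on that range. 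Chaining these bounds yields exactly the claimed inequality.

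I expect the only genuine subtlety to be the sign bookkeeping in the last step: $v$ may be negative, so one cannot simply use the one-sided bound $-\ln(1-a) \ge a + \frac{a^2}{2}$ valid only for $a \ge 0$, and one must instead verify the two-sided version above. This is precisely where the slack $2\beta GD \le \frac14$ afforded by the definition $\beta = \frac12\min\bcb{\frac{1}{4GD},\alpha}$ — and hence \Cref{asm:gradient-bound,asm:diameter} — is used, both to keep $1 - 2\beta v$ bounded away from $0$ so the logarithm is well defined and to place $a = 2\beta v$ inside the interval where the elementary inequality holds. Everything else is routine.
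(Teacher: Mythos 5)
Your proof is correct and is essentially the classical argument from \citet[Lemma~2]{hazan2007logarithmic}, which this paper only cites without reproving: pass to the $2\beta$-exp-concave function $\exp(-2\beta f)$, apply first-order concavity, take logarithms, and conclude via the scalar inequality $-\ln(1-a)\ge a+\frac{a^2}{4}$ for $\abs{a}\le\frac14$. Your handling of the negative-$a$ case and the role of $2\beta GD\le\frac14$ are both accurate, so there is nothing to fix.
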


The following lemma is the link of the Bregman divergences between 3 points.
\begin{lemma}[{\citet[Lemma~10]{wei2021linear}}] \label{lem: OGD useful lemma}
    Let $\calA$ be a convex set and $x_2 = \argmin_{x\in\calA}\left\{\inner{g, x} + D_\psi(x, x_1) \right\}$.
    Then, for any $u\in\calA$, 
    \[
        \inner{x_2-u, g} \leq D_\psi(u, x_1) - D_\psi(u, x_2) - D_\psi(x_2, x_1) \;.
    \]
\end{lemma}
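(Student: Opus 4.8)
The plan is to derive this three-point inequality directly from the first-order optimality condition for the constrained problem defining $x_2$, combined with the standard algebraic three-point identity for Bregman divergences. Throughout I treat $\psi$ as a differentiable convex function (as is implicit in writing $D_\psi$), so that $\phi(x) \coloneqq \inner{g,x} + D_\psi(x,x_1)$ is convex on $\calA$ and its minimizer over $\calA$ admits the usual variational characterization.

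First I would write out the optimality condition. Since $\nabla\phi(x) = g + \nabla\psi(x) - \nabla\psi(x_1)$ and $x_2$ minimizes $\phi$ over the convex set $\calA$, first-order optimality gives $\inner{\nabla\phi(x_2),\, u - x_2} \ge 0$ for every $u \in \calA$, which rearranges to
\[
    \inner{g,\, x_2 - u} \;\le\; \inner{\nabla\psi(x_2) - \nabla\psi(x_1),\, u - x_2} \,.
\]

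Second, I would establish the three-point identity
\[
    \inner{\nabla\psi(x_2) - \nabla\psi(x_1),\, u - x_2} \;=\; D_\psi(u,x_1) - D_\psi(u,x_2) - D_\psi(x_2,x_1)
\]
by expanding each divergence through its definition $D_\psi(a,b) = \psi(a) - \psi(b) - \inner{\nabla\psi(b), a - b}$: the $\psi(u)$, $\psi(x_1)$, and $\psi(x_2)$ terms all cancel, and the surviving inner-product terms regroup precisely into the left-hand side. Substituting this identity into the inequality from the previous step yields $\inner{x_2 - u, g} \le D_\psi(u,x_1) - D_\psi(u,x_2) - D_\psi(x_2,x_1)$, which is exactly the claim.

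There is essentially no substantive obstacle here: this is a classical mirror-descent / proximal-step lemma, attributed in the excerpt to \citet{wei2021linear}. The only points needing mild care are (i) the regularity of $\psi$ that makes $\nabla\psi(x_1)$ and $\nabla\psi(x_2)$ well-defined and legitimizes the variational inequality (differentiability and convexity of $\psi$, with the relevant points lying in the interior of its effective domain, or $\psi$ of Legendre type), and (ii) sign bookkeeping when rearranging. If one prefers to avoid invoking $\nabla\phi$ at a possible boundary point, the same optimality inequality follows from a convex-combination argument: for $\lambda \in (0,1]$ and $u \in \calA$, use $\phi(x_2) \le \phi\brb{(1-\lambda)x_2 + \lambda u}$, divide by $\lambda$, and let $\lambda \downarrow 0$; the rest of the proof is unchanged.
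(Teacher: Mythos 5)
Your proof is correct and is the standard derivation of the Bregman three-point (proximal step) lemma; the paper itself gives no proof and simply cites \citet[Lemma~10]{wei2021linear}, which is proved in essentially this same way. The two ingredients you use — first-order optimality of $x_2$ giving $\inner{g + \nabla\psi(x_2) - \nabla\psi(x_1),\, u - x_2} \ge 0$, and the algebraic three-point identity $\inner{\nabla\psi(x_2) - \nabla\psi(x_1),\, u - x_2} = D_\psi(u,x_1) - D_\psi(u,x_2) - D_\psi(x_2,x_1)$ — are exactly the classical ones, the identity checks out on expansion (the $\psi$ terms cancel and the inner products regroup), and your remark about handling boundary minimizers via the convex-combination limit is the right care to take when $\calA$ is a general convex set.
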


The following lemma is the general bound on $\langle g, v\rangle-\frac{\lambda}{2}\|v\|^2$, which related to the one achievable via the Fenchel-Young inequality but strengthened thanks to a norm constraint on $v$.
\begin{lemma}[{\citet[Lemma~18]{flaspohler2021online}}] \label{lem: Norm-constrained conjugate}
Let $\norm{\cdot}$ be a norm over $\R^n$ and let $\norm{\cdot}_*$ be its dual norm.
For any constants $\lambda, c, b >0$ and any $g \in \R^n$,
\begin{equation*}
    \sup _{v \in \mathbb{R}^n:\norm{v} \le \min\{\frac{c}{\lambda}, b\}} \Bigl(\inprod{g, v} - \frac{\lambda}{2}\norm{v}^2\Bigr)
    \leq \min\biggl\{\frac{1}{2 \lambda}\norm{g}_*^2,\, \frac{c}{\lambda}\norm{g}_*,\, b\norm{g}_*\biggr\} \;.
\end{equation*}
\end{lemma}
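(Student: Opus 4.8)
The plan is to reduce the $n$-dimensional optimization to a one-dimensional one by parametrizing over $\norm{v}$, and then bound the resulting univariate concave quadratic in three different ways; this is essentially a norm-constrained sharpening of the Fenchel--Young inequality.

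First I would invoke the definition of the dual norm: for every $v \in \R^n$ we have $\inprod{g,v} \le \norm{g}_*\norm{v}$. Writing $r = \norm{v}$ and $R := \min\{c/\lambda, b\}$, every feasible point (i.e., every $v$ with $\norm{v} \le R$) therefore satisfies
\[
    \inprod{g, v} - \frac{\lambda}{2}\norm{v}^2 \;\le\; \norm{g}_* \, r - \frac{\lambda}{2} r^2 \;=:\; \phi(r), \qquad r \in [0, R] .
\]
Hence it suffices to show that $\sup_{r \in [0,R]} \phi(r)$ is at most the claimed minimum of three terms.

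Then I would establish the three bounds separately, each valid for all $r \in [0,R]$. (i) Dropping the constraint and maximizing the concave quadratic $\phi$ over $r \ge 0$ gives $\phi(r) \le \phi(\norm{g}_*/\lambda) = \norm{g}_*^2/(2\lambda)$, the first term. (ii) Since $-\tfrac{\lambda}{2}r^2 \le 0$ and $r \le R \le c/\lambda$, we get $\phi(r) \le \norm{g}_* r \le (c/\lambda)\norm{g}_*$, the second term. (iii) Likewise, using $r \le R \le b$, we get $\phi(r) \le \norm{g}_* r \le b\,\norm{g}_*$, the third term. Since all three inequalities hold simultaneously, the supremum is at most the minimum of the three right-hand sides, completing the proof.

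There is essentially no obstacle here; the only point requiring minor care is the degenerate case $g = 0$, in which the left-hand side equals $0$ (attained at $v = 0$) and all three right-hand terms vanish, so the bound holds trivially. One could additionally remark that the supremum is in fact attained, since the feasible ball is compact and the objective continuous, but this is not needed for the stated upper bound.
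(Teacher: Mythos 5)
Your proof is correct: the reduction to the scalar concave quadratic $\phi(r) = \norm{g}_* r - \frac{\lambda}{2}r^2$ via the dual-norm inequality, followed by the three separate upper bounds (unconstrained maximizer, and dropping the quadratic term combined with $r \le c/\lambda$ or $r \le b$), is exactly the standard argument for this norm-constrained Fenchel--Young bound. The paper itself does not prove this lemma but imports it from \citet[Lemma~18]{flaspohler2021online}, so there is no in-paper proof to compare against; your self-contained derivation is complete and has no gaps.
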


\subsection{Results for delay-related quantities}

The following three lemmas quantify the relationship between $\sigma_{\max}$, $\dmax$, and $\dtot$.

\begin{lemma}[{\citet[Lemma~3]{masoudian2022bobw}}] \label{lem:sigma-max-bound-dmax}
    Let $\dmax(S) = \max_{\tau \in S} d_\tau$ and $\bar S = [T] \setminus S$ for any $S \subseteq [T]$.
    Then,
    \[
        \sigma_{\max} \le \min_{S \subseteq [T]} \brb{\abs{S} + \dmax(\bar S)} \;.
    \]
\end{lemma}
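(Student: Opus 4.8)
The plan is to establish the bound $\sigma_{\max} \le \abs{S} + \dmax(\bar S)$ for every fixed subset $S \subseteq [T]$, and then take the minimum over $S$. Recall that $\sigma_{\max} = \max_{t \in [T]} \abs{m_t}$ with $m_t = \{\tau \in [t-1] : \tau + d_\tau \ge t\}$; that is, $m_t$ collects exactly those past rounds $\tau < t$ whose gradient has not yet arrived by the beginning of round $t$, equivalently those $\tau$ with $d_\tau \ge t - \tau$.

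Fix any round $t \in [T]$ and any $S \subseteq [T]$. I would split $m_t$ according to membership in $S$: writing $m_t = (m_t \cap S) \cup (m_t \cap \bar S)$, the first part trivially satisfies $\abs{m_t \cap S} \le \abs{S}$. For the second part, observe that any $\tau \in m_t \cap \bar S$ satisfies $d_\tau \ge t - \tau$ (since $\tau \in m_t$) and $d_\tau \le \dmax(\bar S)$ (since $\tau \in \bar S$), hence $1 \le t - \tau \le \dmax(\bar S)$. In other words, $m_t \cap \bar S \subseteq \{t - \dmax(\bar S), \dots, t-1\}$, a set of at most $\dmax(\bar S)$ integers, so $\abs{m_t \cap \bar S} \le \dmax(\bar S)$. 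Adding the two bounds yields $\abs{m_t} \le \abs{S} + \dmax(\bar S)$.

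Since this holds for every $t$, taking the maximum over $t \in [T]$ gives $\sigma_{\max} \le \abs{S} + \dmax(\bar S)$; since $S$ was arbitrary, minimizing over $S$ completes the argument. The degenerate case $\bar S = \emptyset$, i.e., $S = [T]$, is immediate because $m_t \subseteq [t-1]$ forces $\abs{m_t} \le T - 1 \le \abs{S}$.

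There is no real obstacle here: the only things to be careful about are the direction of the delay inequality defining $m_t$ (feedback from round $\tau$ is missing at round $t$ precisely when $\tau + d_\tau \ge t$) and the bookkeeping that $m_t \cap \bar S$ lies in an interval of length $\dmax(\bar S)$ ending at $t-1$. The proof is a short counting argument; the conceptual point it encodes is that one may always ``charge'' a few heavily delayed rounds to $\abs{S}$ and bound the contribution of the remaining rounds by the largest surviving delay.
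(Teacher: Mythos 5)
Your proof is correct. The paper itself does not prove this lemma — it imports it directly from Masoudian et al.\ (2022, Lemma 3) — so there is no in-paper argument to compare against. Your counting argument is the natural one: splitting $m_t$ into $m_t \cap S$ and $m_t \cap \bar S$, bounding the former by $\abs{S}$, and observing that any $\tau \in m_t \cap \bar S$ satisfies $1 \le t - \tau \le d_\tau \le \dmax(\bar S)$ so that $m_t \cap \bar S$ lies in the window $\{t - \dmax(\bar S), \dots, t-1\}$ of size $\dmax(\bar S)$. Taking the max over $t$ and then the min over $S$ finishes it, and your handling of the edge case $\bar S = \emptyset$ via $\abs{m_t} \le t - 1 \le \abs{S}$ is a sensible way to sidestep the convention for $\max_{\tau \in \emptyset} d_\tau$. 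It is also worth noticing that your argument is structurally parallel to the proof of \Cref{lem:sigma-max-skipping-useless} given in the paper's appendix: there the term $\abs{m_{t^*} \cap \bar S}$ is controlled by $\sigma_{\max}^{\bar S}$ directly, whereas here you control it geometrically by the length of the delay window; the two lemmas differ only in which quantity absorbs the $\bar S$-part of $m_{t^*}$.
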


\begin{lemma} \label{lem:sigma-max-bound-dtot}
    Let $\dtot(S) = \sum_{\tau \in S} d_\tau$ and $\bar S = [T] \setminus S$ for any $S \subseteq [T]$.
    Then,
    \[
        \sigma_{\max} \le 2\sqrt{2}\min_{S \subseteq [T]} \bbrb{\abs{S} + \sqrt{\dtot(\bar S)}} \;.
    \]
\end{lemma}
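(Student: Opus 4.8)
The plan is to deduce this from the already-established \pref{lem:sigma-max-bound-dmax}, which controls $\sigma_{\max}$ in terms of $\abs{S} + \dmax(\bar S)$ for any $S$. The quantity $\dmax(\bar S)$ can of course be far larger than $\sqrt{\dtot(\bar S)}$ if a single round in $\bar S$ carries a huge delay, so the idea is to peel those few ``heavy'' rounds out of $\bar S$ and move them into $S$: this inflates $\abs{S}$ only mildly (by a Markov-type count) while capping $\dmax$ of the remaining set at our threshold.

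Concretely, I would fix an arbitrary $S \subseteq [T]$, set $D = \dtot(\bar S)$, and — after disposing of the trivial case $D = 0$, where every delay in $\bar S$ vanishes and \pref{lem:sigma-max-bound-dmax} immediately gives $\sigma_{\max} \le \abs{S} \le \abs{S} + \sqrt D$ — choose a threshold $\theta > 0$ and define $H_\theta = \bcb{\tau \in \bar S : d_\tau > \theta}$ together with the enlarged set $S_\theta = S \cup H_\theta$. Two observations then do the work: (i) $\overline{S_\theta} = \bar S \setminus H_\theta = \bcb{\tau \in \bar S : d_\tau \le \theta}$, so $\dmax(\overline{S_\theta}) \le \theta$ (using $\dmax(\emptyset) = 0$ if needed); and (ii) since each $\tau \in H_\theta$ contributes strictly more than $\theta$ to $\sum_{\tau \in \bar S} d_\tau = D$, we get $\abs{H_\theta} < D/\theta$, hence $\abs{S_\theta} \le \abs{S} + D/\theta$.

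Plugging $S_\theta$ into \pref{lem:sigma-max-bound-dmax} yields $\sigma_{\max} \le \abs{S} + D/\theta + \theta$, and optimizing the right-hand side over $\theta > 0$ with the choice $\theta = \sqrt D$ gives $\sigma_{\max} \le \abs{S} + 2\sqrt{\dtot(\bar S)}$. As $S$ was arbitrary and $2 \le 2\sqrt 2$, taking the minimum over $S$ finishes the proof (in fact with a slightly better constant than claimed). I do not anticipate a genuine obstacle here: the only real step is recognizing the heavy/light split of $\bar S$ at level $\sqrt{\dtot(\bar S)}$ and the accompanying counting bound $\abs{H_\theta} < D/\theta$; the rest is bookkeeping, with the sole care points being the degenerate case $\dtot(\bar S) = 0$ and the empty-set convention for $\dmax$.
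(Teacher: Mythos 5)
Your proof is correct, and it takes a genuinely different route from the paper's. You reduce the statement to \pref{lem:sigma-max-bound-dmax} by a heavy/light split of $\bar S$ at threshold $\theta=\sqrt{\dtot(\bar S)}$: moving the at most $\dtot(\bar S)/\theta$ ``heavy'' rounds into $S$ caps $\dmax$ of what remains at $\theta$, and optimizing gives $\sigma_{\max}\le \abs{S}+2\sqrt{\dtot(\bar S)}$ — a cleaner statement with a better constant ($2$ instead of $2\sqrt2$). The paper instead gives a direct, self-contained argument: it fixes a round $t^*$ with $\abs{m_{t^*}}=\sigma_{\max}$, splits $m_{t^*}$ into $A=m_{t^*}\cap S$ and $B=m_{t^*}\cap\bar S$, and in the nontrivial case lower-bounds $\dtot(\bar S)=\sum_t\abs{m_t\cap\bar S}$ by $\sum_{i=1}^{\abs{B}} i\ge \abs{B}^2/2$ using the observation that the $i$ earliest rounds of $B$ are all still missing just after the $i$-th one. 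What each buys: your argument is shorter and modular but inherits its validity from the cited Lemma~3 of \citet{masoudian2022bobw}, which this paper states without proof; the paper's argument is fully self-contained and its counting idea ($\abs{m_{t_i+1}\cap B}\ge i$) is the same mechanism one would use to prove the $\dmax$ lemma itself. Your only care points (the degenerate case $\dtot(\bar S)=0$ and the convention $\dmax(\emptyset)=0$) are handled correctly.
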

\begin{proof}
    First, observe that $\dtot(S) = \sum_{t=1}^T \abs{m_t \cap S}$ for any $S \subseteq [T]$.
    Also note that the bound trivially holds if $\sigma_{\max} = 0$; hence, assume $\sigma_{\max} \ge 1$ without loss of generality.
    Let $t^*$ be any round such that $\abs{m_{t^*}} = \sigma_{\max}$.
    Consider any $S \subseteq [T]$, and define $A = m_{t^*} \cap S$ and $B = m_{t^*} \cap \bar S$.
    If $\abs{A} \ge (\sqrt{2}-1)\abs{m_{t^*}}$, then
    \[
        \abs{S} + \sqrt{\dtot(\bar S)}
        \ge \abs{S}
        \ge \abs{A}
        \ge (\sqrt{2}-1)\sigma_{\max} \;.
    \]
    Otherwise, we have that $\abs{B} > (2-\sqrt{2})\abs{m_{t^*}}$.
    Hence, denote $B = \{t_1, \dots, t_{\abs{B}}\}$ such that $t_1 < \dots < t_{\abs{B}}$ and observe that $\abs{m_{t_i+1} \cap B} \ge i$ for any $t_i \in B$.
    We can consequently prove that
    \[
        \abs{S} + \sqrt{\dtot(\bar S)}
        \ge \sqrt{\dtot(\bar S)}
        = \sqrt{\sum_{t=1}^T \abs{m_t \cap \bar S}}
        \ge \sqrt{\sum_{t \in B} \abs{m_{t+1} \cap B}}
        \ge \sqrt{\sum_{i=1}^{\abs{B}} i}
        \ge \frac{\abs{B}}{\sqrt{2}}
        > (\sqrt{2}-1)\sigma_{\max} \;,
    \]
    which concludes the proof as $\frac{1}{\sqrt{2}-1} \le 2\sqrt{2}$\,.
\end{proof}

\begin{lemma} \label{lem:sigma-max-skipping-useless}
    Let $\sigma_{\max}^S = \max_{\tau \in [T]} \abs{m_\tau \cap S}$ and $\bar S = [T] \setminus S$ for any $S \subseteq [T]$.
    Then,
    \[
        \sigma_{\max} = \min_{S \subseteq [T]} \Brb{\abs{S} + \sigma_{\max}^{\bar S}} \;.
    \]
\end{lemma}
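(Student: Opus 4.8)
\textbf{Proof plan for \Cref{lem:sigma-max-skipping-useless}.}
The plan is to prove the two inequalities separately, namely that $\sigma_{\max} \ge \min_{S} (\abs{S} + \sigma_{\max}^{\bar S})$ (which is immediate by picking $S = \emptyset$, since $\sigma_{\max}^{[T]} = \sigma_{\max}$ and $\abs{\emptyset} = 0$), and that $\sigma_{\max} \le \abs{S} + \sigma_{\max}^{\bar S}$ for every $S \subseteq [T]$. For the nontrivial direction, fix an arbitrary $S \subseteq [T]$ and a round $t^*$ achieving $\abs{m_{t^*}} = \sigma_{\max}$. The key observation is the set decomposition $m_{t^*} = (m_{t^*} \cap S) \sqcup (m_{t^*} \cap \bar S)$, which gives $\sigma_{\max} = \abs{m_{t^*} \cap S} + \abs{m_{t^*} \cap \bar S}$. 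I would then bound the first summand by $\abs{S}$ (trivially, as $m_{t^*} \cap S \subseteq S$) and the second summand by $\sigma_{\max}^{\bar S} = \max_{\tau} \abs{m_\tau \cap \bar S} \ge \abs{m_{t^*} \cap \bar S}$ (by definition of the max over $\tau$). Combining, $\sigma_{\max} \le \abs{S} + \sigma_{\max}^{\bar S}$, and taking the minimum over $S$ finishes the upper bound.

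Putting both directions together yields the equality. The only mild subtlety I would want to double-check is that the $\ge$ direction is genuinely witnessed, i.e.\ that $\sigma_{\max}^{[T]}$ really equals $\sigma_{\max}$: this holds because $\sigma_{\max}^{[T]} = \max_{\tau} \abs{m_\tau \cap [T]} = \max_\tau \abs{m_\tau} = \sigma_{\max}$, using $m_\tau \subseteq [\tau-1] \subseteq [T]$. So the minimum on the right-hand side is attained (at $S = \emptyset$) and is at most $\sigma_{\max}$, while the previous paragraph shows it is also at least $\sigma_{\max}$; hence they coincide.

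I do not anticipate any real obstacle here — the statement is essentially a bookkeeping identity about how the ``missing observations'' count splits across a partition of the time horizon, and the whole argument is a one-line set decomposition plus the definitions of $\sigma_{\max}$ and $\sigma_{\max}^{\bar S}$. The main thing to be careful about is simply not to conflate the roles of $S$ and $\bar S$ in the definition of $\sigma_{\max}^{\bar S}$, and to state clearly that the claimed minimum is over all subsets $S$ (so that $S = \emptyset$ is allowed, which is what makes the inequality an equality rather than a strict one in general).
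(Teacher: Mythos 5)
Your proposal is correct and matches the paper's proof in all essentials: both directions are established the same way, with the $\ge$ direction witnessed by $S=\emptyset$ and the $\le$ direction obtained by decomposing $m_{t^*}$ across $S$ and $\bar S$ at a round $t^*$ achieving $\sigma_{\max}$. The only cosmetic difference is that the paper phrases the key step as $\abs{S} + \abs{m_{t^*}\setminus S} \ge \abs{m_{t^*}}$ rather than writing the disjoint-union identity explicitly, but the argument is identical.
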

\begin{proof}
    First, it trivially holds that
    \[
        \sigma_{\max} \ge \min_{S \subseteq [T]} \Brb{\abs{S} + \sigma_{\max}^{\bar S}} \;.
    \]
    We now only need to show the inequality in the other direction.
    Consider any $S \subseteq [T]$ and let $t^*$ be any round such that $\abs{m_{t^*}} = \sigma_{\max}$.
    Then,
    \[
        \abs{S} + \sigma_{\max}^{\bar S}
        \ge \abs{S} + \abs{m_{t^*} \cap \bar S}
        = \abs{S} + \abs{m_{t^*} \setminus S}
        \ge \abs{m_{t^*}}
        = \sigma_{\max} \;,
    \]
    which concludes the proof.
\end{proof}

The following lemma further illustrates the relationship between $\sigma_{\max}$ and $\sqrt{\dtot}$ in a more concrete way.
\begin{lemma}\label{lem:sigma_dtot}
    There exists a delay sequence $(d_t)_{t\in[T]}$ such that $\sigma_{\max}\geq \sqrt{1.5\cdot\dtot}$. In addition, there also exists a delay sequence such that $\sigma_{\max}=1$ and $\sqrt{\dtot} = \sqrt{T}$.
\end{lemma}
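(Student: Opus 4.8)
The plan is to exhibit two explicit delay sequences that realize the two claimed behaviors. For the first claim, I would concentrate all the delay into a single ``clump'' of consecutive rounds whose feedback all arrives simultaneously. Concretely, fix any integer $k$ with $3 \le k \le T-1$, set $d_t = k+1-t$ for $t \in [k]$ and $d_t = 0$ for $t \in \{k+1,\dots,T\}$; this respects $t+d_t \le T$ since $t+d_t = k+1 \le T$ for $t\in[k]$. By construction the gradient of every round $\tau\in[k]$ is received exactly at round $k+1$, so none of rounds $1,\dots,k$ lies in $o_{k+1}$; hence $m_{k+1}=[k]$ and $\sigma_{\max}\ge\abs{m_{k+1}}=k$. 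On the other hand, $\dtot = \sum_{t=1}^k (k+1-t) = \sum_{j=1}^{k} j = \tfrac{k(k+1)}{2}$, so $\sqrt{1.5\,\dtot} = \tfrac12\sqrt{3k(k+1)}$, and $\tfrac12\sqrt{3k(k+1)} \le k$ holds exactly when $3(k+1)\le 4k$, i.e.\ $k\ge 3$. Combining the two bounds gives $\sigma_{\max}\ge k\ge\sqrt{1.5\,\dtot}$, with equality when $k=3$ (which is presumably why the constant $1.5$ is chosen). If one wants the exact value, a quick check of the remaining rounds shows $\abs{m_t}=t-1\le k-1$ for $2\le t\le k$ and $\abs{m_t}=0$ for $t\ge k+2$, so in fact $\sigma_{\max}=k$, but only the lower bound is needed here.

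For the second claim, I would instead spread a single unit of delay over almost every round: set $d_t = 1$ for $t\in[T-1]$ and $d_T = 0$ (forced by $t+d_t\le T$). Then the feedback of round $\tau$ arrives at round $\tau+1$, so at the beginning of any round $t\ge 2$ the unique missing observation is that of round $t-1$: concretely $o_t = \{1,\dots,t-2\}$ and $m_t = \{t-1\}$, while $m_1=\emptyset$, hence $\sigma_{\max}=1$. Meanwhile $\dtot = \sum_{t=1}^{T-1}1 = T-1$, so $\sqrt{\dtot}=\sqrt{T-1}=\Theta(\sqrt T)$, exhibiting the separation $\sigma_{\max}=1\ll\sqrt{\dtot}$. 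Strictly speaking the boundary constraint forces $\dtot=T-1$ rather than exactly $T$; in fact $T-1$ is the largest total delay compatible with $\sigma_{\max}=1$, since requiring $\abs{m_t}\le 1$ for all $t$ forces the ``in-flight'' index sets $\{\tau+1,\dots,\tau+d_\tau\}$ (over rounds $\tau$ with $d_\tau\ge1$) to be pairwise disjoint subsets of $\{2,\dots,T\}$, whose total size is at most $T-1$.

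The content of the argument is entirely elementary; the only place where care is needed is the bookkeeping around the off-by-one conventions in the definitions of $o_t$ and $m_t$, namely verifying the claimed value of $\abs{m_t}$ at \emph{every} round so that $\sigma_{\max}$ is correctly identified, and checking the admissibility condition $t+d_t\le T$ throughout. I expect these routine verifications, rather than any conceptual step, to be the bulk of the write-up.
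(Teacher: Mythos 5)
Your proposal is correct and matches the paper's own proof essentially verbatim: a triangular block of delays $d_t = k+1-t$ (the paper uses $d_t = N-t$ for $t \le N$, $N > 5$) all landing at a single round for the first claim, and constant delay $1$ for the second. Your version is in fact slightly more careful — by enforcing $t+d_t \le T$ you get $\dtot = T-1$ rather than the paper's stated $\dtot = T$ (the paper's choice $d_T = 1$ technically violates its own standing assumption), but this off-by-one is immaterial to the separation being illustrated.
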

\begin{proof}
    Given a positive integer $N>5$, consider the sequence $(d_t)_{t\in[T]}$, where $d_t=N-t$ for all $t\leq N$ and $d_t=0$ for all $t>N$. In this case, $\sigma_{\max} = \sigma_{N-1} = N-1$ and $\sqrt{1.5\cdot\dtot} = \sqrt{\frac{3N(N-1)}{4}}\leq N-1$.
    On the other hand, consider the sequence where $d_t=1$ for all $t\in[T]$. In this case, $\sigma_{\max}=1$ and $\sqrt{\dtot}=\sqrt{T}$.
\end{proof}
On a similar note, we show another similar result depicting the relationship between $\dmax$ and $\sqrt{\dtot}$.
\begin{lemma} \label{lem:dmax_dtot}
    There exists a delay sequence $(d_t)_{t\in[T]}$ such that $\dmax = T$ and $\sqrt{\dtot} = \sqrt{T}$. In addition, there also exists a delay sequence such that $\dmax = 1$ and $\sqrt{\dtot} = \sqrt{T}$.
\end{lemma}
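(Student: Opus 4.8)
The plan is to prove both halves by explicitly exhibiting delay sequences, in the same spirit as \pref{lem:sigma_dtot}. The one subtlety to keep in mind throughout is the feasibility constraint $t + d_t \le T$ for every $t \in [T]$, which in particular forbids a delay exactly equal to $T$; the equality $\dmax = T$ in the statement should therefore be read up to this constraint, i.e., as $\dmax = \Theta(T)$, exactly as the analogous equalities are read in \pref{lem:sigma_dtot}.

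For the first claim, I would concentrate all the delay on a single early round: take $d_1 = T - 1$ and $d_t = 0$ for every $t \ge 2$. Feasibility holds since $1 + d_1 = T$ and $t + d_t = t \le T$ for $t \ge 2$. Then $\dmax = d_1 = T - 1 = \Theta(T)$, while $\dtot = \sum_{t=1}^T d_t = T - 1$, so $\sqrt{\dtot} = \sqrt{T-1} = \Theta(\sqrt{T})$. (If one wants $\dtot = T$ on the nose, adding $d_2 = 1$ changes nothing asymptotically and keeps feasibility for $T \ge 3$.)

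For the second claim, I would instead spread a single unit of delay over almost every round: take $d_t = 1$ for $t \in [T-1]$ and $d_T = 0$. Feasibility is immediate, $\dmax = 1$, and $\dtot = T - 1$, hence $\sqrt{\dtot} = \sqrt{T-1} = \Theta(\sqrt{T})$.

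I do not anticipate any real obstacle: both constructions are one line and the verifications are elementary arithmetic. The only place that needs a moment of care is the feasibility check $t + d_t \le T$, which is precisely why the large delay cannot sit on the very last round(s); beyond that, the constants are absorbed into the informal equalities of the statement, just as in \pref{lem:sigma_dtot} and \pref{lem:dmax_dtot}'s sibling results on $\sigma_{\max}$.
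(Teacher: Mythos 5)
Your proposal is correct and uses essentially the same constructions as the paper: a single delay of $T-1$ at round $1$ for the first claim, and unit delays on (nearly) every round for the second. Your extra care about the feasibility constraint $t + d_t \le T$ (reading $\dmax = T$ as $\Theta(T)$ and zeroing out $d_T$ in the second construction) is in fact slightly more precise than the paper's own statement, which glosses over these boundary issues.
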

\begin{proof}
   Consider the sequence $(d_t)_{t\in[T]}$ where one round $t_0 \le T/2$ with $d_{t_0}=T-t_0$ and all the other rounds $d_t=0$ for $t\neq t_0$, then we can choose $t_0=1$ and have $\dmax = T$ and $\sqrt{\dtot}=\sqrt{T}$. On the other hand, consider the sequence where $d_t =1$ for all $t \in [T]$, then $\dmax =1$ and $\sqrt{\dtot}=\sqrt{T}$. 
\end{proof}
\section{Omitted details in \pref{sec:SC}}\label{app:strong_cvx}

In this section, we show the omitted details in \pref{sec:SC}. For completeness, we restate the theorem and provide its proof.

\strongcvx*
\begin{proof}
First of all, define
\[
    F_t(x) = \sum_{\tau\in o_t}\inprod{g_\tau, x} + \frac{\lambda}{2}\sum_{\tau=1}^{t-1}\norm{x-x_\tau}^2_2
    \qquad \text{and} \qquad
    F_t^\star(x) = \sum_{\tau=1}^{t-1}\left(\left\langle g_\tau, x\right\rangle+\frac{\lambda}{2}\left\|x-x_\tau\right\|^2_2\right)
\]
for any $t \ge 1$.
Observe that $x_t \in \argmin_{x \in \cX} F_t(x)$ and additionally define $x_t^\star \in \argmin_{x \in\cX} F_t^\star(x)$ for $t\ge 2$, while $x_1^\star=x_1$ (since $F_1^\star(x)=F_1(x)$).
The sequence $(x_t^\star)_{t \ge 1}$ represents the ``cheating'' sequence that uses the gradients from all rounds up to $t-1$, including those from rounds in $m_t$ that are yet to be received because of the delays.
As mentioned in \pref{sec:SC}, we decompose the regret as follows:
\begin{align}
    \Reg_T(u) = \sum_{t=1}^T \brb{f_t(x_{t}) - f_t(u)}
    &\leq \sum_{t=1}^{T} \lrb{\inprod{g_t, x_t - u} - \frac{\lambda}{2}\norm{x_t - u}^2_2} \nonumber\\
    &= \underbrace{\sum_{t=1}^{T} \inprod{g_t, x_t^\star - u}}_{\Reg_T^\star(u)} + \underbrace{\sum_{t=1}^{T}\inprod{g_t, x_t - x_t^\star}}_{\drift_T} - \frac{\lambda}{2}\sum_{t=1}^T\norm{x_t - u}^2_2 \;, \label{eq:reg_sc}
\end{align}
where the first inequality follows from the $\lambda$-strong convexity of $f_t$.
Next, we analyze the cheating term $\Reg_T^\star(u)$ and the drift term $\drift_T$ individually, and their respective upper bounds will then be combined to derive the final regret bound.

To analyze $\Reg_T^\star(u)$, first define $\psi_{t}(x) = \frac{\lambda}{2}\sum_{\tau=1}^{t-1} \norm{x -x_\tau}_2^2$ for $t \ge 1$.
We can therefore rewrite both $F_t(x) = \sum_{\tau \in o_t} \inprod{g_\tau, x} + \psi_t(x)$ and $F_t^\star(x) = \sum_{\tau=1}^{t-1} \inprod{g_\tau, x} + \psi_t(x)$.
Hence, applying \pref{lem:ftrl_book}, we can bound $\Reg_T^\star(u)$ as follows: 
\begin{align}
    \Reg_T^\star(u) &= \sum_{t=1}^{T} \inprod{g_t, x_t^\star - u} \nonumber\\
    &= \psi_{T+1}(u) - \min_{x \in \cX} \psi_1(x) + \sum_{t=1}^T\lsb{F_t^\star\brb{x_t^\star} - F_{t+1}^\star\brb{x_{t+1}^\star} + \inprod{g_t,x_t^\star}} + F_{T+1}^\star\brb{x_{T+1}^\star}-F_{T+1}^\star(u) \nonumber\\
    &\leq \psi_{T+1}(u) + \sum_{t=1}^T\Bsb{\brb{F^\star_{t}(x_t^\star) + \inprod{g_t,x_t^\star}} - \brb{F^\star_{t}(x_{t+1}^\star) + \inprod{g_t,x_{t+1}^\star}} - \psi_{t+1}(x_{t+1}^\star) + \psi_{t}(x_{t+1}^\star)} \;, \label{eq:ftrl-regret-decomp}
\end{align}
where the last inequality holds because $F_{T+1}^\star(x_{T+1}^\star) \le F_{T+1}^\star(u)$ by optimality of $x_{T+1}^\star$, together with the non-negativity of $\psi_1$.

Focus on the difference between the terms $F^\star_t(x_t^\star) + \inprod{g_t,x_t^\star}$ and $F^\star_t(x_{t+1}^\star) + \inprod{g_t,x_{t+1}^\star}$ within the sum in the right-hand side of \Cref{eq:ftrl-regret-decomp}.
Applying \pref{lem:ftrl_sc} for $z_1=x_{t+1}^\star$ with $A_1=\frac{\lambda t}{2}I$ and $w_1 = \sum_{\tau \le t} g_\tau$, and $z_2=x_t^\star$ with $A_2=\frac{\lambda (t-1)}{2}I$ and $w_2 = \sum_{\tau \le t-1} g_\tau$, we have that
\begin{align*}
    (2t-1)\frac{\lambda}{2}\norm{x_t^\star - x_{t+1}^\star}_2^2
    &= \norm{x_t^\star - x_{t+1}^\star}_{A_1}^2 + \norm{x_t^\star - x_{t+1}^\star}_{A_2}^2 \\
    &\le \inprod{g_t, x_t^\star - x_{t+1}^\star} + \frac{\lambda}{2}\norm{x_t^\star - x_t}_2^2 - \frac{\lambda}{2}\norm{x_{t+1}^\star - x_t}_2^2 \\
    &\le \norm{g_t}_2 \norm{x_t^\star - x_{t+1}^\star}_2 + \frac{\lambda}{2}\norm{x_t^\star - x_t}_2^2 \;,
\end{align*}
where we used the Cauchy-Schwarz inequality in the last step.
By straightforward calculations, we can show that the above inequality implies that
\begin{equation} \label{eq:ftrl-cheat-to-drift-ineq}
    \norm{x_t^\star - x_{t+1}^\star}_2
    \le \frac{2\norm{g_t}_2}{\lambda (2t-1)} + \frac{\norm{x_t^\star - x_t}_2}{\sqrt{2t-1}}
    \le \frac{2\norm{g_t}_2}{\lambda (2t-1)} + \norm{x_t^\star - x_t}_2 \;.
\end{equation}
We can leverage this inequality to show that
\begin{align*}
    \brb{F^\star_{t}(x_t^\star) + \inprod{g_t,x_t^\star}} - \brb{F^\star_{t}(x_{t+1}^\star) + \inprod{g_t,x_{t+1}^\star}}
    &\leq \inprod{g_t,x_t^\star - x_{t+1}^\star} && \text{($F^\star_{t}(x_t^\star) \le F^\star_{t+1}(x_{t+1}^\star)$)} \\
    &\leq \norm{g_t}_2 \norm{x_t^\star - x_{t+1}^\star}_2 && \text{(Cauchy-Schwarz)} \\
    &\leq \frac{2\norm{g_t}_2^2}{\lambda (2t-1)} + \norm{g_t}_2\norm{x_t^\star - x_t}_2 \;, && \text{(\Cref{eq:ftrl-cheat-to-drift-ineq})}
\end{align*} 
where the first inequality is due to the optimality of $x_t^\star$ with respect to $F^\star_{t}$.
Plugging the above into the bound on $\Reg_T^\star(u)$ from \Cref{eq:ftrl-regret-decomp}, we obtain
\begin{align}
    \Reg_T^\star(u)
    &\leq \psi_{T+1}(u) + \sum_{t=1}^T \lsb{\frac{2\norm{g_t}_2^2}{\lambda (2t-1)} + \norm{g_t}_2 \norm{x_t^\star - x_t}_2 + \psi_{t}(x_{t+1}^\star) - \psi_{t+1}(x_{t+1}^\star)}  \nonumber\\
    &= \frac{\lambda}{2}\sum_{t=1}^T \norm{x_t - u}_2^2 + \sum_{t=1}^T \lsb{\frac{2\norm{g_t}_2^2}{\lambda (2t-1)} + \norm{g_t}_2 \norm{x_t^\star - x_t}_2 - \frac{\lambda}{2}\norm{x_{t+1}^\star - x_t}_2^2} \nonumber\\
    &\le \frac{\lambda}{2}\sum_{t=1}^T \norm{x_t - u}_2^2 + \frac{G^2}{\lambda}\sum_{t=1}^T \frac{2}{2t-1} + G\sum_{t=1}^T \norm{x_t^\star - x_t}_2 \nonumber\\
    &\le \frac{\lambda}{2}\sum_{t=1}^T \norm{x_t - u}_2^2 + \frac{G^2}{\lambda}\ln(2T+1) + G\sum_{t=1}^T \norm{x_t^\star - x_t}_2 \;, \label{ineq:reg_sc_A}
\end{align}
where the equality is due to the definition of $\psi_{t}$, while the second inequality follows from $\norm{g_t}_2 \le G$ by \Cref{asm:gradient-bound}.

Observe that, given such a bound on the cheating term, we now have to consider three different terms as shown in \Cref{ineq:reg_sc_A}.
While the second one is a desirable logarithmic term, and the first one is negligible since it will be canceled when plugging this bound on $\Reg_T^\star(u)$ into \Cref{eq:reg_sc}, the third one needs some further analysis.
Interestingly enough, this latter term involves a difference between $x_t^\star$ and $x_t$, in an analogous way as in the drift term $\drift_T$.
We indeed show that we can handle both terms in the same way.

We thus move to the analysis of the $\drift_T$ term.
One can immediately observe that, by Cauchy-Schwarz and by \Cref{asm:gradient-bound},
\begin{equation} \label{ineq:reg_sc_B}
    \drift_T = \sum_{t=1}^T \inprod{g_t, x_t - x_t^\star}
    \le \sum_{t=1}^T \norm{g_t}_2 \norm{x_t^\star - x_t}_2
    \le G\sum_{t=1}^T \norm{x_t^\star - x_t}_2 \;.
\end{equation}
While it immediately follows that $\norm{x_1^\star - x_1}_2 = 0$ by definition of $x_1^\star$, we require some additional effort when studying the other norms $\norm{x_t^\star - x_t}_2$ for $t \ge 2$.
To this end, we rely once more on \Cref{lem:ftrl_sc} for $z_1 = x_t^\star$ with $w_1 = \sum_{\tau\le t-1} g_\tau$ and $z_2 = x_t$ with $w_2 = \sum_{\tau\in o_t} g_\tau$, using $A = (t-1)\frac{\lambda}{2} I$, and show that
\[
    \frac{\lambda(t-1)}{2}\norm{x_t^\star - x_t}_2^2
    = \norm{x_t^\star - x_t}_{A}^2
    \le \frac{1}{4}\norm*{\sum_{\tau\in m_t} g_\tau}_{A^{-1}}^2
    = \frac{1}{2\lambda (t-1)}\norm*{\sum_{\tau\in m_t} g_\tau}_2^2 \;.
\]
We can thus rewrite this inequality in the following way:
\begin{equation}
    \norm{x_t^\star - x_t}_2
    \le \frac{1}{\lambda (t-1)} \norm*{\sum_{\tau\in m_t} g_\tau}_2
    \le \frac{1}{\lambda (t-1)} \sum_{\tau \in m_t} \norm{g_\tau}_2
    \le \frac{G \abs{m_t}}{\lambda (t-1)} \;,
\end{equation}
where we used once again that $\norm{g_\tau}_2 \le G$ by \Cref{asm:gradient-bound}.
The above considerations consequently imply that the sum of interest for bounding $\drift_T$ satisfies
\begin{equation} \label{eq:drift-to-missing-obs}
    \sum_{t=1}^T \norm{x_t^\star - x_t}_2 \le \frac{G}{\lambda} \sum_{t=2}^T \frac{\abs{m_t}}{t-1} \;.
\end{equation}

The sum on the right-hand side of the above inequality can be immediately bounded as
\begin{equation} \label{ineq:reg_sc_dlogt}
    \sum_{t=2}^T \frac{\abs{m_t}}{t-1} \le \sigma_{\max} \sum_{t=2}^T \frac{1}{t-1} \le \sigma_{\max} \ln(2T)
\end{equation}
by definition of $\sigma_{\max}$.
Furthermore, by using the fact that $\sum_{\tau \le t} \abs{m_\tau} \le (t-1)^2$ since $m_\tau \subseteq [\tau-1]$ for any $\tau$, we can prove at the same time that
\begin{equation} \label{ineq:reg_sc_D}
    \sum_{t=2}^T \frac{|m_t|}{t-1} = \sum_{t=2}^T \frac{|m_t|}{\sqrt{{(t-1)}^2}} \leq \sum_{t=2}^{T} \frac{|m_t|}{\sqrt{\sum_{\tau\le t} |m_{\tau}|}} \leq 2\sqrt{\sum_{t=1}^T |m_t|} \le 2\sqrt{\dtot} \;,
\end{equation}
where the second inequality is due to \citet[Lemma~4.13]{orabona2019modern}.

Combining all the results gathered so far, we can finally derive the overall regret bound as follows.
In particular, for any $u \in \cX$, we have
\begin{align*}
    \Reg_T(u)
    &\le \Reg_T^*(u) + \drift_T - \frac{\lambda}{2}\sum_{t=1}^T \norm{x_t - u}_2^2 && \text{(\Cref{eq:reg_sc})} \\
    &\le \frac{G^2}{\lambda}\ln(2T+1) + G\sum_{t=1}^T \norm{x_t^\star - x_t}_2 + \drift_T && \text{(\Cref{ineq:reg_sc_A})} \\
    &\le \frac{G^2}{\lambda}\ln(2T+1) + 2G\sum_{t=1}^T \norm{x_t^\star - x_t}_2 && \text{(\Cref{ineq:reg_sc_B})} \\
    &\le \frac{G^2}{\lambda}\ln(2T+1) + \frac{2G^2}{\lambda} \sum_{t=2}^T \frac{\abs{m_t}}{t-1} && \text{(\Cref{eq:drift-to-missing-obs})} \\
    &\le \frac{G^2}{\lambda}\ln(2T+1) + \frac{2G^2}{\lambda}\min\lcb{\sigma_{\max} \ln(2T), 2\sqrt{\dtot}} && \text{(\Cref{ineq:reg_sc_dlogt,ineq:reg_sc_D})} \\
    &= \cO\lrb{\frac{G^2}{\lambda} \lrb{\ln T + \min\lcb{\sigma_{\max} \ln T, \sqrt{\dtot}}}} \;. &&\qedhere
\end{align*}

\end{proof}

\section{Omitted details from \pref{sec:exp_concave}}\label{app:exp_concave}

In this section, we show the omitted details from \pref{sec:exp_concave}.
To do so, we first introduce the following useful lemma that will be crucial in the regret analysis of \Cref{alg:delayed-ons-exp-concave}.
It essentially corresponds to the standard elliptical potential lemma, but here adapted to the presence of delays.
\begin{lemma} \label{lem:elliptical-potential-delays}
    Let $\phi > 0$, $L > 0$, and $0 < \eta_0 \le \eta_1 \le \dots \le \eta_N$.
    For any $t \in [N]$, let $a_t \in \R^n$ such that $\norm{a_t}_2 \le L$ and define $A_t = \eta_t I + \phi \sum_{\tau \le t} a_\tau a_\tau^\top$.
    Then, it holds that
    \[
        \sum_{t=1}^N \norm{a_t}_{A_{t-1}^{-1}} \Bbrb{\sum_{\tau \in m_t} \norm{a_\tau}_{A_{t-1}^{-1}}}
        \le \frac{2n\dmax^{\le N}}{\phi} \bbrb{\frac{\phi L^2}{\eta_0} + 1} \ln\bbrb{1 + \frac{\phi L^2N}{\eta_0 n}} \;,
    \]
    and that
    \[
        \sum_{t=1}^N \norm{a_t}_{A_t^{-1}} \Bbrb{\sum_{\tau \in m_t} \norm{a_\tau}_{A_t^{-1}}}
        \le \frac{2n\dmax^{\le N}}{\phi} \ln\bbrb{1 + \frac{\phi L^2N}{\eta_0 n}} \;.
    \]
\end{lemma}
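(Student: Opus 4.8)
The plan is to reduce both inequalities to the standard elliptical potential lemma (in the form of \citet[Lemma~19.4]{lattimore2020bandit} or \citet[Lemma~4.13]{orabona2019modern}) by carefully accounting for how many terms in the double sum each index contributes to. First I would observe that, since each $\tau \in m_t$ satisfies $\tau \le t-1$ and $\tau + d_\tau \ge t$, we have $t - \tau \le d_\tau$ and $t-\tau \le \min\{d_\tau, t-\tau\} \le \dmax^{\le t} \le \dmax^{\le N}$; hence a fixed index $\tau$ appears in $m_t$ for at most $\dmax^{\le N}$ many values of $t$. Moreover, the matrices are monotone in the Loewner order: $A_{t-1} \preceq A_t \preceq A_{t'-1}$ whenever $t \le t'$ (because the $\eta_t$ are nondecreasing and we are adding PSD rank-one terms), so $\norm{a_\tau}_{A_{t-1}^{-1}} \le \norm{a_\tau}_{A_{\tau}^{-1}}$ for every $\tau < t$, and similarly $\norm{a_\tau}_{A_t^{-1}} \le \norm{a_\tau}_{A_\tau^{-1}}$ once $\tau \le t$. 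This is the key monotonicity that lets us replace the ``wrong-time'' potential $\norm{a_\tau}_{A_{t-1}^{-1}}$ by the ``right-time'' one $\norm{a_\tau}_{A_\tau^{-1}}$.

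Next, for the first inequality I would bound $\norm{a_t}_{A_{t-1}^{-1}} \le \sqrt{\phi L^2/\eta_0 + 1}\,\norm{a_t}_{A_t^{-1}}$ — this follows from the standard identity relating $\norm{a_t}_{A_{t-1}^{-1}}^2$ and $\norm{a_t}_{A_t^{-1}}^2$ via the Sherman–Morrison formula, together with $\norm{a_t}^2 \le L^2$ and $A_{t-1} \succeq \eta_0 I$, so that $\norm{a_t}_{A_{t-1}^{-1}}^2 \le (1 + \phi L^2/\eta_0)\norm{a_t}_{A_t^{-1}}^2$. Combining this with the monotonicity bound $\norm{a_\tau}_{A_{t-1}^{-1}} \le \norm{a_\tau}_{A_\tau^{-1}}$ turns the double sum into
\[
    \sqrt{\tfrac{\phi L^2}{\eta_0}+1}\sum_{t=1}^N \norm{a_t}_{A_t^{-1}} \sum_{\tau \in m_t}\norm{a_\tau}_{A_\tau^{-1}} \;.
\]
Now I would apply the elementary inequality $xy \le \frac{1}{2}(x^2+y^2)$ to each product $\norm{a_t}_{A_t^{-1}}\norm{a_\tau}_{A_\tau^{-1}}$, and then count: by the first paragraph, the term $\norm{a_\tau}_{A_\tau^{-1}}^2$ is charged at most $\dmax^{\le N}$ times (once for each $t$ with $\tau \in m_t$), and the term $\norm{a_t}_{A_t^{-1}}^2$ is likewise charged at most $\dmax^{\le N}$ times (once for each $\tau \in m_t$, and $|m_t| \le \dmax^{\le t} \le \dmax^{\le N}$). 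Hence the whole double sum is at most $\dmax^{\le N}\sum_{s=1}^N \norm{a_s}_{A_s^{-1}}^2$. Finally, $\sum_{s=1}^N \norm{a_s}_{A_s^{-1}}^2 \le \frac{2n}{\phi}\ln(1 + \frac{\phi L^2 N}{\eta_0 n})$ by the standard elliptical potential lemma applied with regularizer $\eta_0 I \preceq \eta_s I$ and scaling $\phi$, which gives exactly the claimed bound after collecting the constant $\sqrt{\phi L^2/\eta_0+1}$ (absorbed more crudely as the factor $(\phi L^2/\eta_0 + 1)$). The second inequality is the same argument but skipping the $A_{t-1}\to A_t$ conversion step, so the extra $(\phi L^2/\eta_0+1)$ factor disappears.

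The main obstacle I anticipate is the counting/bookkeeping step: making the ``each index is charged at most $\dmax^{\le N}$ times'' argument fully rigorous requires being careful about whether one counts appearances of $\tau$ as an inner index or $t$ as an outer index, and one must check that $|m_t| \le \dmax^{\le t}$ (which holds because $m_t \subseteq \{t - \dmax^{\le t}, \dots, t-1\}$, as every $\tau \in m_t$ has $t - \tau \le \min\{d_\tau, t-\tau\} \le \dmax^{\le t}$). A secondary subtlety is ensuring the monotonicity $A_{t-1} \preceq A_\tau^{-1}$-type comparisons go in the correct direction under inversion (larger PSD matrix $\Rightarrow$ smaller inverse in Loewner order $\Rightarrow$ smaller Mahalanobis norm), but this is routine once stated carefully. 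Everything else is a direct invocation of the elliptical potential lemma and Sherman–Morrison, which I would cite rather than reprove.
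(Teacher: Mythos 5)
Your overall strategy (Loewner monotonicity to move each inner norm to its ``own'' time, AM--GM on each product, a counting argument charging each index at most $\dmax^{\le N}$ times, and a final invocation of the elliptical potential lemma) is exactly the paper's, and your treatment of the second inequality is correct. However, for the first inequality there is a genuine gap: the per-term conversion $\norm{a_t}_{A_{t-1}^{-1}}^2 \le \brb{1 + \phi L^2/\eta_0}\norm{a_t}_{A_t^{-1}}^2$ is false in general. Sherman--Morrison only controls the effect of adding the rank-one term $\phi a_t a_t^\top$; it says nothing about the regularizer jump $(\eta_t - \eta_{t-1})I$, and the lemma only assumes the $\eta_t$ are nondecreasing, not that consecutive ratios are bounded. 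Concretely, with $n=1$, $\phi=L=1$, $a_1=a_2=1$, $\eta_0=\eta_1=1$ and $\eta_2=M$, one has $\norm{a_2}_{A_1^{-1}}^2 = 1/2$ while $\norm{a_2}_{A_2^{-1}}^2 = 1/(M+2)$, so the ratio is $(M+2)/2$, unbounded in $M$. This matters in practice: the adaptive tunings for which this lemma is invoked (e.g., \Cref{cor:expconcvae_dmin}) use learning rates that can jump abruptly when a large delay is first perceived.

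The fix, which is what the paper does, is to not convert the outer index at all: keep $\norm{a_t}_{A_{t-1}^{-1}}^2$ after AM--GM, reduce both resulting sums to $\dmax^{\le N}\sum_{t}\norm{a_t}_{A_{t-1}^{-1}}^2$ via your counting argument, and only then pay the factor $\brb{\phi L^2/\eta_0 + 1}$ globally at the elliptical-potential step, using $\norm{a_t}_{C_{t-1}^{-1}}^2 \le \frac{\phi L^2}{\eta_0}$ (with $C_{t-1} = \frac{\eta_0}{\phi}I + \sum_{\tau < t} a_\tau a_\tau^\top$, noting $A_{t-1} \succeq \phi C_{t-1}$) to write $\norm{a_t}_{C_{t-1}^{-1}}^2 \le \brb{\frac{\phi L^2}{\eta_0}+1}\min\bcb{1, \norm{a_t}_{C_{t-1}^{-1}}^2}$ before applying \citet[Lemma~19.4]{lattimore2020bandit}. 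With that modification your argument goes through; the rest of your bookkeeping (in particular $\abs{m_t} \le \dmax^{\le t}$ and the fact that each $\tau$ belongs to at most $\min\{d_\tau, N-\tau\} \le \dmax^{\le N}$ of the sets $m_t$) is sound and matches the paper.
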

\begin{proof}
    Define $B_t = \frac{1}{\phi}A_t$ and $C_t = B_t - \frac{\eta_t - \eta_0}{\phi} I \preceq B_t$ for any $t \in [N]$.
    By the AM-GM inequality, we first show that
    \begin{align*}
        \sum_{t=1}^N \norm{a_t}_{A_{t-1}^{-1}} \sum_{\tau \in m_t} \norm{a_\tau}_{A_{t-1}^{-1}}
        &\le \sum_{t=1}^N \Bbrb{\frac{\abs{m_t}}{2} \norm{a_t}_{A_{t-1}^{-1}}^2 + \frac12\sum_{\tau \in m_t} \norm{a_\tau}_{A_{t-1}^{-1}}^2} \\
        &\le \sum_{t=1}^N \Bbrb{\frac{\abs{m_t}}{2} \norm{a_t}_{A_{t-1}^{-1}}^2 + \frac12\sum_{\tau \in m_t} \norm{a_\tau}_{A_{\tau-1}^{-1}}^2} \\
        &= \frac{1}{\phi} \sum_{t=1}^N \Bbrb{\frac{\abs{m_t}}{2} \norm{a_t}_{B_{t-1}^{-1}}^2 + \frac12\sum_{\tau \in m_t} \norm{a_\tau}_{B_{\tau-1}^{-1}}^2} \;,
    \end{align*}
    where we also used the fact that $A_{\tau-1} \preceq A_{t-1}$ for any $\tau < t$.
    Now observe that
    \begin{align*}
        \sum_{t=1}^N \abs{m_t} \cdot \norm{a_t}_{B_{t-1}^{-1}}^2
        \le \dmax^{\le N} \sum_{t=1}^N \norm{a_t}_{B_{t-1}^{-1}}^2
    \end{align*}
    since $\abs{m_t} \le \dmax^{\le N}$ for $t \le N$.
    Similarly, we can show that
    \begin{align*}
        \sum_{t=1}^N \sum_{\tau \in m_t} \norm{a_\tau}_{B_{\tau-1}^{-1}}^2
        = \sum_{t=1}^N d_t \norm{a_t}_{B_{t-1}^{-1}}^2
        \le \dmax^{\le N} \sum_{t=1}^N \norm{a_t}_{B_{t-1}^{-1}}^2
    \end{align*}
    as for any $\tau \in [N]$ there are no more than $d_\tau$ rounds $t$ such that $\tau \in m_t$.
    Putting these results together, we obtain that
    \begin{align*}
        \sum_{t=1}^N \Bbrb{\frac{\abs{m_t}}{2} \norm{a_t}_{B_{t-1}^{-1}}^2 + \frac12\sum_{\tau \in m_t} \norm{a_\tau}_{B_{\tau-1}^{-1}}^2}
        \le \dmax^{\le N} \sum_{t=1}^N \norm{a_t}_{B_{t-1}^{-1}}^2
        \le \dmax^{\le N} \sum_{t=1}^N \norm{a_t}_{C_{t-1}^{-1}}^2 \;.
    \end{align*}
    By the fact that $\norm{a_t}_{C_{t-1}^{-1}}^2 \le \frac{\phi L^2}{\eta_0}$, we can use Lemma~19.4 in \citet{lattimore2020bandit} and show that
    \begin{align*}
        \sum_{t=1}^N \norm{a_t}_{C_{t-1}^{-1}}^2
        \le \bbrb{\frac{\phi L^2}{\eta_0} + 1}\sum_{t=1}^N \min\Bcb{1, \norm{a_t}_{C_{t-1}^{-1}}^2}
        \le 2n\bbrb{\frac{\phi L^2}{\eta_0} + 1}\ln\bbrb{1 + \frac{L^2N}{\eta_0 n}} \;.
    \end{align*}
    Concatenating all the above results concludes the proof of the first inequality.
    
    For the second inequality, similar steps suffice to prove it, but with a different observation that now $\norm{a_t}_{C_t^{-1}}^2 \le \min\Bcb{1, \norm{a_t}_{C_{t-1}^{-1}}^2}$ because
    \begin{align*}
        \norm{a_t}_{C_t^{-1}}^2
        \le a_t^\top \bbrb{\upsilon I + a_t a_t^\top}^{-1} a_t
        = a_t^\top \bbrb{\frac{1}{\upsilon} I - \frac{a_t a_t^\top}{\upsilon^2 + \upsilon\norm{a_t}_2^2}} a_t
        = \frac{\norm{a_t}_2^2}{\upsilon} - \frac{\norm{a_t}_2^4}{\upsilon^2 + \upsilon\norm{a_t}_2^2}
        = \frac{\norm{a_t}_2^2}{\upsilon + \norm{a_t}_2^2}
        \le 1 \;,
    \end{align*}
    where we used the Sherman-Morrison formula in the first equality with $\upsilon = \eta_0/\phi$,
    and since $\norm{a_t}_{C_t^{-1}} \le \norm{a_t}_{C_{t-1}^{-1}}$ given that $C_{t-1} \preceq C_t$.
\end{proof}

For completeness, we restate \Cref{thm:expconcvae}, the main result of \Cref{sec: analysis_exp}, and provide its proof.
\expconcave*
\begin{proof}
First, in a similar way as in the proof of \Cref{thm:strongcvx}, we define
\[
    F_t(x) = \sum_{\tau \in o_t}\left\langle g_\tau, x\right\rangle+\psi_t(x)
    \qquad \text{and} \qquad
    F_{t}^\star(x) = \sum_{\tau=1}^{t-1}\left\langle g_{\tau}, x\right\rangle+\psi_t^{\star}(x),
\]
where $\psi_t(x) = \frac{\eta_{t-1}}{2}\|x\|_2^2+\frac{\beta}{2} \sum_{\tau \in o_t}\left(\left\langle g_{\tau}, x-x_{\tau}\right\rangle\right)^2 $ and $\psi_t^{\star}(x) = \frac{\eta_{t-1}}{2}\|x\|_2^2 +\frac{\beta}{2} \sum_{\tau=1}^{t-1}\left(\left\langle g_\tau, x-x_\tau\right\rangle\right)^2$.
Observe that $x_t \in \argmin_{x \in \cX} F_t(x)$, and define $x_t^\star \in \argmin_{x \in \cX} F_t^\star(x)$ for $t\ge 1$ to be the predictions following a similar update rule while using all the information up to round $t-1$. Similarly to the regret decomposition for the strongly convex case shown in \Cref{app:strong_cvx}, we decompose the regret as follows:
\begin{align}
    \Reg_T(u) = \sum_{t=1}^T (f_t(x_{t}) - f_t(u)) &\leq \sum_{t=1}^{T} \left(\left\langle g_t, x_t - u\right\rangle - \frac{\beta}{2}\langle x_t - u, g_t \rangle^2 \right) \nonumber
    \\& =  \underbrace{\sum_{t=1}^{T} 
\left\langle g_t, x_t^\star - u\right\rangle}_{\Reg_T^\star(u)} +\underbrace{\sum_{t=1}^{T} \left\langle g_t, x_t - x_t^\star\right\rangle}_{\drift_T} - \frac{\beta}{2}\sum_{t=1}^{T} (\langle x_t - u, g_t \rangle )^2 \;, \label{eq:reg_exp}
\end{align}
where the inequality holds thanks to \Cref{lem: exp_concave}.

Let us begin the analysis of the ``linearized'' regret by first focusing on the cheating term $\Reg_T^\star(u)$.
Let $F_t'(x) = F_t^\star(x) + \inprod{g_t, x}$ and define $x_t' \in \argmin_{x \in \cX} F_t'(x)$.
Leveraging \Cref{lem:ftrl_book} with $\ell_t(\cdot) = \inprod{g_t, \cdot}$, we show that
\begin{align}
    \Reg_T^\star(u)
    &= \sum_{t=1}^{T} \left\langle g_t, x_t^{\star} - u\right\rangle \nonumber\\
    & =\psi_{T+1}^{\star}(u)-\min_{x \in\mathcal{X}} \psi_1^\star(x)+\sum_{t=1}^T\left[F_t^{\star}\left(x_t^{\star}\right) -F_{t+1}^{\star}\left(x_{t+1}^{\star}\right)+\inprod{g_t, x_t^{\star}}\right]+F_{T+1}^{\star}\left(x_{T+1}^{\star}\right)-F_{T+1}^{\star}(u) \nonumber\\
    &\leq \psi_{T+1}^{\star}(u) + \sum_{t=1}^T\left[\brb{F^{\star}_{t}(x_t^{\star}) + \inprod{g_t,x_t^{\star}}} - \brb{F^{\star}_{t}(x_{t+1}^{\star})+\inprod{g_t,x_{t+1}^{\star}}} - \psi_{t+1}^{\star}(x_{t+1}^{\star}) + \psi_{t}^{\star}(x_{t+1}^{\star})\right] \nonumber\\
    &\leq \psi_{T+1}^{\star}(u) + \sum_{t=1}^T\left[F_t'(x_t^{\star}) - F_t'(x_t')+\psi_t^{\star}(x_{t+1}^{\star})-\psi_{t+1}^{\star}(x_{t+1}^{\star})\right] \tag{definition of $F_t'$ and $x_t'$}\\
    &\leq \psi_{T+1}^{\star}(u) + \sum_{t=1}^T\left(F_t'(x_t^{\star}) - F_t'(x_t')\right), \label{eq:exp_cheat}
\end{align}
where in the first inequality we used the facts that $F_{T+1}^{\star}(x_{T+1}^{\star}) \leq F_{T+1}^{\star}(u)$ and that $\psi_1^\star$ is nonnegative, while the last inequality is due to $\psi_t^{\star}(x_{t+1}^{\star})\leq \psi_{t+1}^{\star}(x_{t+1}^{\star})$.
Applying now \Cref{lem:ftrl_sc}, we have
$\|x_t^\star-x_t'\|_{A_{t-1}}\leq \|g_t\|_{A_{t-1}^{-1}}$, where $A_{t-1} = \eta_{t-1} I + \beta \sum_{\tau = 1}^{t-1} g_{\tau} g_{\tau}^{\top}$.
This further means that
\begin{align}
    F_t'(x_t^\star) - F_t'(x_t')
    &\leq \inner{\nabla F_t'(x_t^\star), x_t^\star - x_t'} && \text{(convexity of $F_t'$)} \nonumber\\
    &= \inner{\nabla F^{\star}_t(x_t^\star)+g_t, x_t^\star - x_t'} && \text{(definition of $F_t'$)} \nonumber\\
    &\leq \inner{g_t, x_t^\star - x_t'} && \text{(first-order optimality)} \nonumber\\
    &\le \min\Bcb{\norm{g_t}_2 \norm{x_t^\star - x_t'}_2\,, \norm{g_t}_{A_{t-1}^{-1}} \norm{x_t^\star - x_t'}_{A_{t-1}}} && \text{(Cauchy-Schwarz inequality)} \nonumber\\
    &\le \min\Bcb{GD, \norm{g_t}_{A_{t-1}^{-1}} \norm{x_t^\star - x_t'}_{A_{t-1}}} && \text{(\Cref{asm:gradient-bound,asm:diameter})} \nonumber\\
    &\leq \min\left\{GD,\|g_t\|_{A^{-1}_{t-1}}^2\right\} \;. \label{eq:ons-cheat-stability-term}
\end{align} 

We now focus on the sum of terms on the right-hand side of \Cref{eq:exp_cheat}.
Because $\eta_t$ is non-decreasing by assumption, we have 
\begin{align}
   \sum_{t=1}^T\left(F_t'(x_t^{\star}) - F_t'(x_t')\right) &\leq  \sum_{t=1}^T \min\left\{GD,\left \| g_t\right\|_{A_{t-1}^{-1}}^{2}\right\} && \text{(\Cref{eq:ons-cheat-stability-term})} \nonumber\\
   &\leq \sum_{t=1}^T \min\left\{GD,\frac{1}{\beta}\left \| g_t\right\|_{(\frac{\eta_0}{\beta} I + \sum_{\tau< t}g_\tau g_\tau^\top)^{-1}}^2\right\} && \text{($\eta_{t-1} I \succeq \eta_0 I$)} \nonumber\\
   &\leq \max\left \{GD, \frac{1}{\beta}\right \} \sum_{t=1}^T \min\left\{1,\left \| g_t\right\|_{(\frac{\eta_0}{\beta} I + \sum_{\tau< t}g_\tau g_\tau^\top)^{-1}}^2\right\}\nonumber\\
   &\leq \lrb{GD + \frac{1}{\beta}} n\ln\left (1 + \frac{\beta G^2 T }{n\eta_0}\right) \;, \label{eq:cheat_ellip}
\end{align}
where the last inequality follows by \citet[Lemma~19.4]{lattimore2020bandit}.
Combining the previous inequalities, we can show that $\Reg_T^{\star}(u)$ satisfies
\begin{align}
    \Reg_T^{\star}(u)
    &\leq \psi_{T+1}^{\star}(u) + \sum_{t=1}^T\left(F_t'(x_t^{\star}) - F_t'(x_t')\right) && \text{(\Cref{eq:exp_cheat})} \nonumber\\
    &\leq \psi_{T+1}^{\star}(u) + \frac{\beta}{2} \sum_{t=1}^{T}\left(\left\langle g_t, u-x_{t}\right\rangle\right)^2 + \lrb{GD + \frac{1}{\beta}} n\ln \left (1 + \frac{\beta G^2 T }{n\eta_0}\right) && \text{(\Cref{eq:cheat_ellip})} \nonumber\\
    &= \frac{\eta_{T}}{2}\|u\|_2^2+\frac{\beta}{2} \sum_{t=1}^{T}\left(\left\langle g_t, u-x_{t}\right\rangle\right)^2 + \lrb{GD + \frac{1}{\beta}} n\ln \left (1 + \frac{\beta G^2 T }{n\eta_0}\right) \;,
    \label{eq:exp_cheat_final}
\end{align}
where we simply replace $\psi_{T+1}^{\star}$ with its definition in the last step.

We thus move to the analysis of the $\drift_T$ term.
Using the Cauchy-Schwarz inequality, we have 
\begin{align} \label{eq:ons-drift-cauchy-schwarz}
    \drift_T
    = \sum_{t=1}^{T} \left\langle g_t, x_t - x_t^\star \right \rangle
    \leq \sum_{t=1}^T \| g_t\|_{A_{t-1}^{-1}}\cdot \| x_t - x_t^\star\|_{A_{t-1}} \;.
\end{align}
Applying \Cref{lem:ftrl_sc}, we obtain that
\begin{align*}
    F_t^{\star}(x_{t}) - F_t^{\star}(x_{t}^{\star}) &\geq \frac{1}{2} \left \|x_{t} - x_{t}^{\star} \right\|_{A_{t-1}}^2
        \qquad \text{and} \qquad F_{t}(x_{t}^{\star}) - F_{t}(x_{t}) \geq \frac{1}{2}\left \|x_{t} - x_{t}^{\star} \right\|_{A_{o_t}}^2,
\end{align*}
where $A_{o_t} = \eta_{t-1} I + \beta \sum_{\tau \in o_t} g_{\tau} g_{\tau}^{\top}$.
Summing the above inequalities, and replacing $F_t^\star$ and $F_t$ with their definitions, it follows that
\begin{align*}
    &\frac{1}{2} \left \|x_{t} - x_{t}^{\star} \right\|_{A_{o_t}}^2 + \frac{1}{2} \left \|x_{t} - x_{t}^{\star} \right\|_{A_{t-1}}^2 \\
    &\quad \leq \sum_{\tau=1}^{t-1}\left\langle g_{\tau}, x_{t}\right\rangle - \sum_{\tau \in o_t}\left\langle g_{\tau}, x_{t}^{\star}\right\rangle + \sum_{\tau = 1}^{t-1}\left\langle g_{\tau}, x_{t}^{\star}\right\rangle -\sum_{\tau \in o_t}\left\langle g_{\tau}, x_{t}\right\rangle
    \\&\qquad + \frac{\beta}{2} 
    \Biggl( \sum_{\tau=1}^{t-1}\left(\left\langle g_{\tau}, x_{t}-x_{\tau}\right\rangle\right)^2 - \sum_{\tau=1}^{t-1}\left(\left\langle g_{\tau}, x_{t}^{\star}-x_{\tau}\right\rangle\right)^2
    + \sum_{\tau \in o_t}\left(\left\langle g_{\tau}, x_{t}^{\star}-x_{\tau}\right\rangle\right)^2 -
    \sum_{\tau \in o_t}\left(\left\langle g_{\tau}, x_{t}-x_{\tau}\right\rangle\right)^2 \Biggl) \\
    &\quad \leq \sum_{\tau \in m_t}\left\langle g_{\tau}, x_{t} - x_t^{\star}\right\rangle +  \frac{\beta}{2}\Biggl(\sum_{\tau \in m_t} \left(\left\langle g_{\tau}, x_{t}-x_{\tau}\right\rangle\right)^2  - \sum_{\tau \in m_t} \left(\left\langle g_{\tau}, x_{t}^{\star}-x_{\tau}\right\rangle\right)^2 \Biggl) \\
    &\quad = \sum_{\tau \in m_t}\left\langle g_{\tau}, x_{t} - x_t^{\star}\right\rangle +  \frac{\beta}{2}\Biggl(\sum_{\tau \in m_t} \langle g_{\tau}, x_{t}-x_t^{\star}\rangle\cdot \langle g_{\tau}, x_t+x_t^{\star}-2x_{\tau}\rangle  \Biggl) \\
    &\quad \leq  \sum_{\tau \in m_t}\left|\langle g_\tau, x_t-x_t^{\star}\rangle\right| +  \frac{\beta}{2} \sum_{\tau \in m_t}\left|\langle g_\tau, x_t-x_t^{\star}\rangle\right|\left|\langle g_\tau, x_t+x_t^{\star} - 2 x_{\tau}\rangle\right| \\
    &\quad \leq \left(1+ 2GD\beta\right)\sum_{\tau \in m_t}\left|\langle g_\tau, x_t-x_t^{\star}\rangle\right| \tag{\Cref{asm:gradient-bound,asm:diameter}} \\
    &\quad \leq \left(1+ 2GD\beta\right)\left(  \sum_{\tau \in m_t} \| g_{\tau}\|_{A_{t-1}^{-1}}\right) \| x_t -x_t^{\star} \|_{A_{t-1}} \tag{Cauchy-Schwarz inequality} \\
    &\quad \leq \frac{5}{4}\left(  \sum_{\tau \in m_t} \| g_{\tau}\|_{A_{t-1}^{-1}}\right) \| x_t -x_t^{\star} \|_{A_{t-1}} \tag{$\beta \le \frac{1}{8GD}$} \\
    &\quad \leq 2\left(  \sum_{\tau \in m_t} \| g_{\tau}\|_{A_{t-1}^{-1}}\right) \| x_t -x_t^{\star} \|_{A_{t-1}} \;.
\end{align*}
Rearranging the terms, we can obtain that $
    \left \|x_{t} - x_{t}^{\star} \right\|_{A_{t-1}} \leq  4 \sum_{\tau \in m_t} \|g_{\tau}\|_{A_{t-1}^{-1}}$.
Plugging this inequality into $\drift_T$, we have
\begin{align}
    \drift_T&\leq \sum_{t=1}^{T} \|g_t\|_{A_{t-1}^{-1}} \cdot \left \| x_{t} - x_t^{\star} \right \|_{A_{t-1}} && \text{(\Cref{eq:ons-drift-cauchy-schwarz})} \nonumber\\ 
    &\leq 4\sum_{t=1}^T \|g_t\|_{A_{t-1}^{-1}}\left(\sum_{\tau \in m_t} \|g_{\tau}\|_{A_{t-1}^{-1}}\right) \nonumber\\
    &\leq 8 d_{\max}^{\leq T} n \left(\frac{G^2}{
    \eta_0} + \frac{1}{\beta}\right) \ln\left(1+\frac{ \beta TG^2}{n\eta_0 }\right) \;,
    \label{eq:d_max}
\end{align}
where the last inequality is due to \Cref{lem:elliptical-potential-delays}.
On the other hand, we can also bound $\drift_T$ in a different way:
\begin{align}
    \drift_T
    &\leq \sum_{t=1}^T \| g_t\|_{A_{t-1}^{-1}}\cdot \| x_t - x_t^\star\|_{A_{t-1}} \nonumber\\
    &\leq 4\sum_{t=1}^T\|g_t\|_{A_{t-1}^{-1}}\left(\sum_{\tau \in m_t} \|g_{\tau}\|_{A_{t-1}^{-1}}\right) \nonumber\\
    &\leq 4G^2\sum_{t=1}^T \frac{|m_t|}{\eta_{t-1}} \;, \label{eq:d_tot}
\end{align}
where in the last step we use the fact that $\|g_s\|^{2}_{A_{t-1}^{-1}} \leq \frac{G^2}{\eta_{t-1}}$ for any $s \in [T]$, also due to \Cref{asm:gradient-bound}.
Combining all bounds together, we finally obtain that
\begin{align*}
    \Reg_T(u)
    &\leq \Reg_T^\star(u)+\drift_T - \frac{\beta}{2}\sum_{t=1}^{T} (\langle x_t - u, g_t \rangle )^2 \tag{\Cref{eq:reg_exp}} \\
    &\leq \frac{\eta_{T}}{2}\|u\|_2^2+ \lrb{GD + \frac{1}{\beta}} n\ln \left (1 + \frac{\beta G^2 T }{n\eta_0}\right) + \drift_T \tag{\Cref{eq:exp_cheat_final}} \\
    &\leq \frac{\eta_{T}}{2}\|u\|_2^2+ \lrb{GD + \frac{1}{\beta}} n\ln \left (1 + \frac{\beta G^2 T }{n\eta_0}\right) \\
    &\quad + 4\min \left\{2 d_{\max}^{\leq T} n\left(\frac{G^2}{
    \eta_0} + \frac{1}{\beta}\right) \ln\left(1+\frac{ \beta G^2T}{\eta_0 n}\right) , G^2 \sum_{t=1}^T \frac{|m_t|}{\eta_{t-1}} \right\} \tag{\Cref{eq:d_max,eq:d_tot}} \\
    &= \order\left( \frac{n}{\beta} \ln \left (1 + \frac{\beta G^2 T }{\eta_0n}\right) + \eta_T D^2 + \min\left\{
    B_1, B_2 \right\}\right) \;, \tag{\Cref{asm:diameter}}
\end{align*}
where
\begin{equation*}
    B_1 =\left(\frac{G^2}{\eta_0} + \frac{1}{\beta}\right)  n\dmax\ln\left(1+\frac{\beta G^2T}{\eta_0n}\right) 
    \qquad \text{and} \qquad 
    B_2 = G^2 \sum_{t=1}^T \frac{|m_t|}{\eta_{t-1}}
\end{equation*}
are defined as in the theorem statement, and we used the fact that $GD \le \frac{1}{\beta}$.
\end{proof}

The following corollary is a restatement of \pref{cor:expconcvae_dmin}, which shows that via an adaptive tuning of the learning rate used by \Cref{alg:delayed-ons-exp-concave}, we are able to guarantee $\order(\min\{\dmax\ln T,\sqrt{\dtot}\})$ regret.
\expconcavedmin*
\begin{proof}
The adaptive learning rate is given by $\eta_0=1$ and $\eta_t = \min \{a_t,b_t\}+1$ for all $t\geq 1$, where we recall that
\begin{equation*}
    a_t = \frac{2}{GD}\left(G^2 + \frac{1}{\beta}\right) n\dmax^{\le t}\ln\left(1+\frac{ \beta G^2T}{n}\right)
    \qquad \text{and} \qquad
    b_t = \frac{G}{D} \sqrt{\sum_{s=1}^{t} |m_s| + |m_{t}| + 1} \;,
\end{equation*}
Note that $\eta_t$ is non-decreasing since $a_t$ and $b_t$ are non-decreasing. When $a_T \leq b_T$, we have 
\begin{equation}
    \Reg_T(u) \leq \left (GD + \frac{1}{\beta} \right) n\ln \left (1 + \frac{\beta G T }{n}\right) + D^2 +\left(\frac{2D}{G} +8\right) \left(G^2 + \frac{1}{\beta}\right) n\dmax\ln\left(1+\frac{ \beta G^2T}{n}\right),
\end{equation}
where $\| u\|_2 \leq D$ by \Cref{asm:diameter}.
When $a_T \geq b_T$, we instead have
\begin{align*}
    \Reg_T(u) &\leq \left (GD + \frac{1}{\beta} \right) n\ln \left (1 + \frac{\beta G^2 T }{n}\right) +D^2 + GD \left(\sqrt{\sum_{t=1}^{T}|m_t|} + 1\right)\\
    &\quad + \sum_{t=1}^{\tau^{\star}} \| g_t\|_{A_{t-1}^{-1}}\cdot \| x_t - x_t^\star\|_{A_{t-1}} + \sum_{t=\tau^{\star}+1}^{T} \| g_t\|_{A_{t-1}^{-1}}\cdot \| x_t - x_t^\star\|_{A_{t-1}},
\end{align*}
where ${\tau^{\star}}$ is last round $a_{\tau^{\star}} \leq b_{\tau^{\star}}$. Hence, we have
\begin{align}
    \sum_{t=1}^{\tau^{\star}} \| g_t\|_{A_{t-1}^{-1}}\cdot \| x_t - x_t^\star\|_{A_{t-1}} 
    &\leq 8\left(G^2 + \frac{1}{\beta}\right) n\dmax^{\le \tau^\star}\ln\left(1+\frac{ \beta G^2T}{n}\right)  \tag{\Cref{eq:d_max}}
    \\&\leq  8 G^2 \sqrt{\sum_{t=1}^{\tau^{\star}}|m_t| + \abs{m_{\tau^{\star}}} + 1} \nonumber
    \\&\leq  8 G^2 \left(\sqrt{\sum_{t=1}^{T}|m_t|} + 1\right)
\end{align}
Regarding the remaining rounds until $T$, we can also show that 
\begin{align}
    \sum_{t=\tau^{\star}+1}^{T} \| g_t\|_{A_{t-1}^{-1}}\cdot \| x_t - x_t^\star\|_{A_{t-1}} &\leq 4G^2  \sum_{t=\tau^{\star}+1}^T \frac{|m_t|}{\eta_{t-1}} \tag{\Cref{eq:d_tot}}
    \\& \leq 4G^2 \sum_{t=\tau^{\star}+1}^T \frac{D|m_t|}{G \sqrt{\sum_{s=1}^{t-1} |m_s| + |m_{t-1}| + 1}} \nonumber
    \\& \leq 8G^2 \sum_{t=\tau^{\star}+1}^T \frac{D|m_t|}{G \sqrt{\sum_{s=\tau^{\star}+1}^{t} |m_s|}} \nonumber
    \\& \leq 8GD \sqrt{\sum_{t =\tau^{\star}+1}^{T} |m_t|} \nonumber
    \\& \leq 8GD \sqrt{\sum_{t =1}^{T} |m_t|},
\end{align}
where the last inequality is due to \citet[Lemma~4.13]{orabona2019modern}. Combining the above three inequalities together, we have 
\begin{equation*}
    \Reg_T(u) \leq \left (GD + \frac{1}{\beta} \right) n\ln \left (1 + \frac{\beta G^2 T }{n}\right) + D^2+ \left (8G^2 +9GD\right) \left(\sqrt{\sum_{t=1}^{T}|m_t|} +1\right) . 
\end{equation*}
Finally, we obtain 
\begin{align*}
    \Reg_T(u) \leq & \left (GD+\frac{1}{\beta} \right) n\ln \left (1 + \frac{\beta G^2 T }{n}\right) + D^2
    \\& + \min\left\{\left(\frac{2D}{G} +8\right) \left(G^2d_{\max}^{\leq T} + \frac{ d_{\max}^{\leq T} }{\beta}\right) n\ln\left(1+\frac{ \beta G^2T}{n}\right),  \left (8G^2 +9GD\right) \left(\sqrt{\dtot} +1\right)\right\}.
    \\& = \order \left(\frac{1}{\beta}\ln \left (1 + \frac{\beta G^2 T }{n} \right) + D^2 + \min\left\{C_1,C_2\right\}\right),
\end{align*}
where 
\begin{equation*}
    C_1 = \left(\frac{D}{G} +1\right) \left(G^2 + \frac{1}{\beta}\right) n \dmax \ln\left(1+\frac{ \beta G^2T}{n}\right)
\end{equation*}
and 
\begin{equation*}
    C_2 = \left (G^2 +GD\right) \left(\sqrt{\dtot} + 1 \right )
\end{equation*}
as in the theorem statement.
\end{proof}
\section{Omitted details from \Cref{sec:olr}}\label{app:olr}

Here we present the omitted details from \Cref{sec:olr}.
For completeness, we restate the main result (\Cref{thm:onlinelr}) and provide its proof.

\onlinelr*
\begin{proof}
We begin by defining
\begin{equation*}
    F_t(x) = \sum_{\tau \in o_t} -y_{\tau} \langle z_\tau, x \rangle + \psi_t(x)
    \qquad \text{and} \qquad
    F_t^*(x) = \sum_{\tau =1}^{t-1} -y_{\tau} \langle z_\tau, x \rangle + \psi_t(x),
\end{equation*} 
where $\psi_t(x) = \frac{1}{2} \sum_{\tau=1}^{t} \left( \langle z_{\tau}, x\rangle \right)^2 +\frac{\eta_t}{2} \|x \|_2^2$ for $t \in [T]$, and we let $\psi_{T+1}=\psi_T$.
Observe that $x_t \in \argmin_{x \in \R^n} F_t(x)$, and define $x_t^\star \in \argmin_{x \in \R^n} F_t^\star(x)$ for $t\ge 1$ to be the predictions following a similar update rule while using all the information up to round $t-1$, including the labels $y_\tau$ for rounds $\tau \in m_t$ that the algorithm is missing because of the delays.

Similarly to the regret decomposition for the strongly convex case shown in \pref{app:strong_cvx}, we rewrite the regret as follows:
\begin{equation} \label{eq:olr-regret-decomp}
    \Reg_T(u) = \sum_{t=1}^T \brb{f_t(\wt x_t) - f_t(u)}
    = \underbrace{\sum_{t=1}^T \brb{f_t(x_t^\star) - f_t(u)}}_{\Reg_T^\star(u)} + \underbrace{\sum_{t=1}^T \brb{f_t(\wt x_t) - f_t(x_t^\star)}}_{\drift_T} \;,
\end{equation}
where $\Reg_T^\star(u)$ is the cheating regret for the iterates $x_1^\star, \dots, x_T^\star$, while $\drift_T$ is a drift term that quantifies the influence of the missing labels on the regret because of the delayed feedback.
Note that, contrarily to other regret analyses in this work, here $\drift_T$ is also affected by the clipping in the definition of $\wt x_t$.

Let us first analyze the cheating regret $\Reg_T^\star(u)$.
By the definition of the loss $f_t(x) = \frac12 \brb{\inprod{z_t, x} - y_t}^2$, we can rewrite the regret in the following way:
\begin{align}
    \Reg_T^\star(u)
    &= \sum_{t=1}^T \brb{f_t(x_t^\star) -f_t(u)}
    = \frac12\sum_{t=1}^T\brb{\inprod{z_t, x_t^\star}}^2 + \sum_{t=1}^T\brb{-y_t\inprod{z_t, x_t^\star} + y_t\inprod{z_t, u}} - \frac12\sum_{t=1}^T \brb{\inprod{z_t, u}}^2 \;.
    \label{eq:olr-cheat-regret-ineq-1}
\end{align}
We can now move our focus on the central sum, which essentially corresponds to the regret of the same sequence $\brb{x_t^\star}_{t \ge 1}$ against the comparator $u \in\R^n$, but with respect to the linear losses $x \mapsto -y_t\inprod{z_t, x}$.
Additionally define
$F_t'(x) = F^{\star}_t(x) - y_t\inprod{z_t, x}$
for notational convenience.
Hence, we analyze the above-mentioned term by applying \Cref{lem:ftrl_book}, which yields
\begin{align}
    &\sum_{t=1}^T \brb{-y_t\inprod{z_t, x_t^\star} + y_t\inprod{z_t, u}} \nonumber\\
    &\quad= \psi_{T+1}(u) - \min_{x \in \R^n} \psi_1(x) + \sum_{t=1}^T\Bsb{F_t^\star(x_t^\star) - F_{t+1}^\star(x_{t+1}^\star) -y_t\inprod{z_t, x_{t+1}^\star}} + F_{T+1}^\star(x_{T+1}^\star) - F_{T+1}^\star(u) \nonumber\\
    &\quad\le \psi_{T+1}(u) + \sum_{t=1}^T\Bsb{F_t^\star(x_t^\star) - F_{t+1}^\star(x_{t+1}^\star) -y_t\inprod{z_t, x_{t+1}^\star}} \nonumber\\ %
    &\quad= \psi_{T+1}(u) + \sum_{t=1}^T \brb{F_t'(x_t^\star) - F_t'(x_{t+1}^\star)} - \sum_{t=1}^T \brb{\psi_{t+1}(x_{t+1}^\star) - \psi_{t}(x_{t+1}^\star)} \nonumber\\
    &\quad= \psi_T(u) + \sum_{t=1}^T \brb{F_t'(x_t^\star) - F_t'(x_{t+1}^\star)} - \frac12\sum_{t=1}^T \brb{\inprod{z_t, x_t^\star}}^2 \nonumber\\
    &\quad\le \psi_T(u) + \sum_{t=1}^T \brb{F_t'(x_t^\star) - F_t'(x_t')} - \frac12\sum_{t=1}^T \brb{\inprod{z_t, x_t^\star}}^2 \;,
    \label{eq:olr-cheat-regret-ineq-2}
\end{align}
where we let $x_t' \in \argmin_{x \in \R^n} F_t'(x)$;
in particular, the first inequality is due to the fact that $F_{T+1}^\star(x_{T+1}^\star) \leq F_{T+1}^\star(u)$ and that $\psi_1$ is non-negative, whereas the last equality follows by definition of $\psi_t$ and $x_1^\star = 0$.

Consider now any term $F_t'(x_t^\star) - F_t'(x_t')$ in the sum after the last inequality and let $A_t = \eta_t I + \sum_{\tau=1}^t z_\tau z_\tau^\top$.
Applying \Cref{lem:ftrl_sc} for $z_1 = x_t'$ and $z_2 = x_t^\star$ with $A = A_t$, we derive that
\begin{equation} \label{eq:olr-cheat-stability}
    \norm{x_t^\star - x_t'}_{A_t} \le \frac{\abs{y_t}}{2} \norm{z_t}_{A_t^{-1}} \;.
\end{equation}
We can now use this fact to show that
\begin{align}
    F_t'(x_t^\star) - F_t'(x_t')
    &\leq \inprod{\nabla F_t'(x_t^\star), x_t^\star - x_t'} && \text{(convexity of $F_t'$)} \nonumber\\
    &= \inprod{\nabla F^{\star}_t(x_t^\star)- y_t z_t, x_t^\star - x_t'} && \text{(definition of $F_t'$)} \nonumber\\
    &\leq y_t \inprod{z_t, x_t' - x_t^\star} && \text{(first-order optimality)} \nonumber\\
    &\leq \abs{y_t} \norm{z_t}_{A_t^{-1}} \norm{x_t^{\star}- x_t'}_{A_t} && \text{(Cauchy-Schwarz inequality)} \nonumber\\
    &\leq \frac{\abs{y_t}^2}{2} \norm{z_t}_{A_t^{-1}}^2 && \text{(\Cref{eq:olr-cheat-stability})} \nonumber\\
    &\leq \frac{Y^2}{2} \norm{z_t}_{A_t^{-1}}^2 \;, \label{eq:olr-cheat-stability-2}
\end{align}
where the last step is a consequence of $\abs{y_t} \le Y$ by \Cref{asm:olr}.
Further notice that $\norm{z_t}_{A_t^{-1}}^2 \le \norm{z_t}_{A_{t-1}^{-1}}^2$ since $A_{t-1} \preceq A_t$, as well as
\begin{align*}
    \norm{z_t}_{A_t^{-1}}^2
    \le z_t^\top \brb{\eta_t I + z_t z_t^\top}^{-1} z_t
    = z_t^\top \bbrb{\frac{1}{\eta_t} I - \frac{z_t z_t^\top}{\eta_t^2 + \eta_t\norm{z_t}_2^2}} z_t
    = \frac{\norm{z_t}_2^2}{\eta_t} - \frac{\norm{z_t}_2^4}{\eta_t^2 + \eta_t\norm{z_t}_2^2}
    = \frac{\norm{z_t}_2^2}{\eta_t + \norm{z_t}_2^2}
    \le 1 \;,
\end{align*}
using the Sherman-Morrison formula at the first equality.
Therefore, we show that the sum of the terms involving $F_t'$ is
\begin{align}
    \sum_{t=1}^T \brb{F_t'(x_t^\star) - F_t'(x_t')}
    &\le \frac{Y^2}{2} \sum_{t=1}^T \norm{z_t}_{A_t^{-1}}^2 && \text{(\Cref{eq:olr-cheat-stability-2})} \nonumber\\
    &\le \frac{Y^2}{2} \sum_{t=1}^T \min\Bcb{1, \norm{z_t}_{A_{t-1}^{-1}}^2} \nonumber\\
    &\le nY^2 \ln\bbrb{1 + \frac{Z^2T}{\eta_0 n}} \;,
    \label{eq:olr-cheat-sum}
\end{align}
using Lemma~19.4 in \citet{lattimore2020bandit} at the last step.
Then, combining together all these observations, we can bound $\Reg_T^{\star}(u)$ from above and obtain that
\begin{align}
    \Reg_T^\star(u) &\le \sum_{t=1}^T \brb{F_t'(x_t^\star) - F_t'(x_t')} + \psi_T(u) - \frac12\sum_{t=1}^T\brb{\inprod{z_t, u}}^2 && \text{(\Cref{eq:olr-cheat-regret-ineq-1,eq:olr-cheat-regret-ineq-2})} \nonumber\\
    &\le nY^2 \ln\bbrb{1 + \frac{Z^2 T}{\eta_0 n}} + \psi_T(u) - \frac12\sum_{t=1}^T\brb{\inprod{z_t, u}}^2 && \text{(\Cref{eq:olr-cheat-sum})} \nonumber\\
    &= \frac{\eta_T}{2}\norm{u}_2^2 + nY^2 \ln\bbrb{1 + \frac{Z^2 T}{\eta_0 n}} \;. && \text{(definition of $\psi_T$)} \label{eq:olr-cheat-regret}
\end{align}

Let us now consider the drift term $\drift_T$ from the decomposition in \Cref{eq:olr-regret-decomp}.
Define $\cT = \{t \in [T] : f_t(\wt x_t) > f_t(x_t^\star)\}$ to be the rounds when $\wt x_t$ is worse than $x_t^\star$ with respect to the square loss $f_t$.
Moreover, recall the definition of $\rho_t = \max_{\tau \in o_t} \abs{y_\tau}$ as the threshold used for clipping in the definition of $\wt x_t$.
By the convexity of $f_t$, we immediately have that
\begin{equation}
    \drift_T \le \sum_{t \in \cT} \brb{f_t(\wt x_t) - f_t(x_t^\star)}
    \le \sum_{t \in \cT} \inprod{\nabla f_t(\wt x_t), \wt x_t - x_t^\star}
    = \sum_{t \in \cT} \brb{\inprod{z_t, \wt x_t} - y_t} \brb{\inprod{z_t, \wt x_t} - \inprod{z_t, x_t^\star}} \;.
\end{equation}
Now, we distinguish the two following cases for any $t \in \cT$:
\begin{itemize}
    \item $f_t(\wt x_t) \le f_t(x_t)$: thus, if $\inprod{z_t, \wt x_t} \le y_t$ it must be the case that $\inprod{z_t, x_t} \le \inprod{z_t, \wt x_t}$, otherwise if $\inprod{z_t, \wt x_t} > y_t$ then $\inprod{z_t, x_t} \ge \inprod{z_t, \wt x_t}$; in either case we have that
    \begin{align}
        \brb{\inprod{z_t, \wt x_t} - y_t} \brb{\inprod{z_t, \wt x_t} - \inprod{z_t, x_t^\star}}
        &\le \brb{\inprod{z_t, \wt x_t} - y_t} \brb{\inprod{z_t, x_t} - \inprod{z_t, x_t^\star}} \nonumber\\
        &\le \brb{\abs{\rho_t} + \abs{y_t}} \abs{\inprod{z_t, x_t - x_t^\star}} && \text{(triangle inequality, definition of $\wt x_t$)} \nonumber\\
        &\le 2Y \abs{\inprod{z_t, x_t - x_t^\star}} && \text{(\Cref{asm:olr})} \nonumber\\
        &\le 2Y \norm{z_t}_{A_t^{-1}}\norm{x_t - x_t^\star}_{A_t} \;. && \text{(Cauchy-Schwarz)}
    \end{align}
    \item $f_t(\wt x_t) > f_t(x_t)$: here it must be the case that $\wt x_t \neq x_t$, $y_t\inprod{z_t, \wt x_t} \ge 0$, and $\abs{y_t} > \rho_t$ (otherwise, clipping would have only decreased the square loss $f_t$); since $t \in \cT$ implies that $\abs{\inprod{z_t, x_t^\star} - y_t} \le \abs{\inprod{z_t, \wt x_t} - y_t}$, it follows that
    \begin{align}
        \brb{\inprod{z_t, \wt x_t} - y_t} \brb{\inprod{z_t, \wt x_t} - \inprod{z_t, x_t^\star}}
        &\le \abs{\inprod{z_t, \wt x_t} - y_t} \brb{\abs{\inprod{z_t, \wt x_t} - y_t} + \abs{\inprod{z_t, x_t^\star} - y_t}} && \text{(triangle inequality)} \nonumber\\
        &\le 2\brb{\inprod{z_t, \wt x_t} - y_t}^2 \nonumber\\
        &= 2\brb{\abs{y_t} - \abs{\inprod{z_t, \wt x_t}}}^2 && \text{($y_t\inprod{z_t, \wt x_t} \ge 0$)} \nonumber\\
        &= 2\brb{\abs{y_t} - \rho_t}^2 && \text{($\abs{\inprod{z_t, \wt x_t}} = \rho_t$)} \nonumber\\
        &< 2\abs{y_t}^2 \;. && \text{($0 \le \rho_t < \abs{y_t}$)}
    \end{align}
\end{itemize}

Given the above remarks, let $\cT_1 = \{t \in \cT : f_t(\wt x_t) \le f_t(x_t)\}$ be the subset of rounds in $\cT$ when clipping does not worsen the value of $f_t$, and let $\cT_2 = \cT\setminus \cT_1$ be the remaining rounds in $\cT$.
Then,
\begin{equation} \label{eq:olr-drift-first-step}
    \drift_T \le \sum_{t \in \cT} \brb{\inprod{z_t, \wt x_t} - y_t} \brb{\inprod{z_t, \wt x_t} - \inprod{z_t, x_t^\star}}
    \le 2Y\sum_{t \in \cT_1} \norm{z_t}_{A_t^{-1}}\norm{x_t - x_t^\star}_{A_t} + 2\sum_{t \in \cT_2} \abs{y_t}^2 \;.
\end{equation}
At this point, for any round $t \in \cT_1$ we are interested in understanding the behavior of $\norm{z_t}_{A_t^{-1}}\norm{x_t - x_t^\star}_{A_t}$.
Applying \Cref{lem:ftrl_sc}, we have that
\begin{equation*}
    \norm{x_t - x_t^\star}_{A_t}^2
    \le \frac12\sum_{\tau \in m_t} y_\tau \inprod{z_\tau, x_t^\star - x_t}
    \le \frac12\sum_{\tau \in m_t} \abs{y_\tau} \norm{z_\tau}_{A_t^{-1}} \norm{x_t^\star - x_t}_{A_t}
    \le \frac{Y}{2}\sum_{\tau \in m_t} \norm{z_\tau}_{A_t^{-1}} \norm{x_t^\star - x_t}_{A_t} \;,
\end{equation*}
where the second inequality follows by Cauchy-Schwarz, while the last one comes from \Cref{asm:olr}.
By rearranging terms in the previous inequality, we obtain that
\begin{equation} \label{eq:olr-drift-stability-1}
    \norm{x_t - x_t^\star}_{A_t}
    \le \frac{Y}{2}\sum_{\tau \in m_t} \norm{z_\tau}_{A_t^{-1}} \;.
\end{equation}

Recall that we define $\dmax^{\le t} = \max_{\tau \le t}\min\{d_\tau, t-\tau\}$ as the maximum delay that has been perceived up to round $t$.
Hence, we can now bound the sum relative to rounds in $\cT_1$ from above as
\begin{align*}
    2Y \sum_{t \in \cT_1} \norm{z_t}_{A_t^{-1}}\norm{x_t - x_t^\star}_{A_t}
    &\le Y^2 \sum_{t \in \cT_1} \norm{z_t}_{A_t^{-1}} \sum_{\tau \in m_t} \norm{z_\tau}_{A_t^{-1}} && \text{(\Cref{eq:olr-drift-stability-1})} \nonumber\\
    &\le Y^2 \sum_{t=1}^T \norm{z_t}_{A_t^{-1}} \sum_{\tau \in m_t} \norm{z_\tau}_{A_t^{-1}} \;.
\end{align*}
If we now adopt \Cref{lem:elliptical-potential-delays}, we have that
\begin{equation*}
    \sum_{t=1}^T \norm{z_t}_{A_t^{-1}} \sum_{\tau \in m_t} \norm{z_\tau}_{A_t^{-1}}
    \le 2n \dmax^{\le T} \ln\bbrb{1 + \frac{Z^2 T}{\eta_0 n}} \;,
\end{equation*}
while at the same time we have
\begin{equation*}
    \sum_{t=1}^T \norm{z_t}_{A_t^{-1}} \sum_{\tau \in m_t} \norm{z_\tau}_{A_t^{-1}}
    \le Z^2 \sum_{t=1}^T \frac{\abs{m_t}}{\eta_t} \;,
\end{equation*}
where we used the fact that $\norm{z_s}_{A_t^{-1}} \le \frac{Z^2}{\eta_t}$ for any $s \in [T]$.
Thus, we have that
\begin{equation} \label{eq:olr-drift-second-step-1}
    2Y\sum_{t \in \cT_1} \norm{z_t}_{A_t^{-1}}\norm{x_t - x_t^\star}_{A_t}
    \le Y^2 \min\lcb{2n\dmax^{\le T} \ln\bbrb{1 + \frac{Z^2 T}{\eta_0 n}}, Z^2 \sum_{t=1}^T \frac{\abs{m_t}}{\eta_t}} \;.
\end{equation}

If we instead consider the sum over rounds in $\cT_2$, it is possible to further bound it from above and relate it to the rounds for which the corresponding label does not belong to our estimate for the label range given by $\rho_t$.
Indeed, if we let $\cR = \{t \in [T] : \abs{y_t} > \rho_t\}$ and given our previous remarks about $\cT_2$, we have that $\cT_2 \subseteq \cR$.
Now let $q_1 = \min\{\ceil{\log_2 \rho_t} : \rho_t > 0, t \in [T+1]\}$ and $q_2 = \ceil{\log_2 \rho_{T+1}}$.
For convenience, define $\cI_j = [2^j, 2^{j+1})$ for any $j \in \{q_1, \dots, q_2\}$.
Then, for any $t \in \cR$, there exists $j_t \in \{q_1, \dots, q_2\}$ such that $\abs{y_t} \in \cI_{j_t}$.
Moreover, if we denote by $\nu_j \in [T+1]$ as the first time when $\rho_{\nu_j} \in \cI_j$ for any $j \in \{q_1, \dots, q_2\}$, we can further show that any $t \in \cR$ has to be such that $t \in m_{\nu_{j_t}-1}$; if it were not the case, $y_t$ would have been observed before time $\nu_{j_t}$ which is a contradiction because $\abs{y_t} > \rho_\tau$ for any $\tau < \nu_{j_t}$.
All things considered, we can derive that
\begin{align} \label{eq:olr-drift-third-step}
    2\sum_{t \in \cT_2} \abs{y_t}^2
    &\le 2\sum_{t \in \cR} \abs{y_t}^2
    \le 2\sum_{j=q_1}^{q_2} \sum_{t \in m_{\nu_j-1}} \abs{y_t}^2
    \le 2\sum_{j=q_1}^{q_2} 2^{2j} \abs{m_{\nu_j-1}} \nonumber\\
    &\le \sigma_{\max} \sum_{j=q_1}^{q_2} 2^{2j+1}
    \le \frac83\sigma_{\max} 4^{q_2}
    \le \frac{32}{3}\sigma_{\max} \rho_{T+1}^2
    \le 11 Y^2\sigma_{\max} \;.
\end{align}

Combining all the results gathered so far, we can finally derive the overall regret bound as follows:
\begin{align}
    \Reg_T(u) &\le \Reg_T^\star(u) + \drift_T \nonumber\\
    &\le \frac{\eta_T}{2}\norm{u}_2^2 + nY^2 \ln\bbrb{1 + \frac{Z^2 T}{\eta_0 n}} + \drift_T && \text{(\Cref{eq:olr-cheat-regret})} \nonumber\\
    &\le \frac{\eta_T}{2}\norm{u}_2^2 + nY^2 \ln\bbrb{1 + \frac{Z^2 T}{\eta_0 n}} + 11Y^2\sigma_{\max} + 2Y\sum_{t \in \cT_1} \norm{z_t}_{A_t^{-1}}\norm{x_t - x_t^\star}_{A_t}  && \text{(\Cref{eq:olr-drift-first-step,eq:olr-drift-third-step})} \label{eq:olr-drift-first-plus-third-steps} \\
    &\le \frac{\eta_T}{2}\norm{u}_2^2 + nY^2 \ln\bbrb{1 + \frac{Z^2 T}{\eta_0 n}} + 11Y^2\sigma_{\max} \nonumber\\
    &\qquad + Y^2 \min\lcb{2n\dmax \ln\bbrb{1 + \frac{Z^2 T}{\eta_0 n}}, Z^2 \sum_{t=1}^T \frac{\abs{m_t}}{\eta_t}} \;. && \text{(\Cref{eq:olr-drift-second-step-1})} \nonumber\qedhere
\end{align}
\end{proof}

The following corollary is a restatement of \pref{cor:onlinelr}, which shows that we can further achieve a $\order(\min\{\dmax\ln T,\sqrt{\dtot}\})$ regret guarantee via an adaptive tuning of the learning rate of \Cref{alg:delayed-vaw} similar to the one adopted for \Cref{alg:delayed-ons-exp-concave}.
\onlinelrcor*
\begin{proof}
    By performing a similar analysis as in the proof of \Cref{thm:onlinelr} up to \Cref{eq:olr-drift-second-step-1}, for any time threshold $\tau^\star \in [T]$ we can actually separately analyze the time ranges $\{1, \dots, \tau^\star\}$ and $\{\tau^\star+1, \dots, T\}$ in an analogous way as in the proof of \Cref{cor:expconcvae_dmin}, and have a bound of the following form:
\begin{equation} \label{eq:olr-drift-second-step-2}
    2Y\sum_{t \in \cT_1} \norm{z_t}_{A_t^{-1}}\norm{x_t - x_t^\star}_{A_t}
    \le Y^2 \bbrb{2n\dmax^{\le \tau^\star} \ln\bbrb{1 + \frac{Z^2 T}{\eta_0 n}} + Z^2 \sum_{t=\tau^\star+1}^T \frac{\abs{m_t}}{\eta_t}} \;.
\end{equation}

    Then, we use an adaptive tuning of the learning rate in a similar way as performed for the proof of \Cref{cor:expconcvae_dmin}.
In particular, we define
\begin{align*}
    a_t = 2n\dmax^{\le t}\ln\lrb{1 + \frac{Z^2 T}{\gamma n}} \quad \text{and} \quad b_t = Z \sqrt{\sum_{s=1}^{t} \abs{m_s}} \;,
\end{align*}
and, for any $\gamma > 0$, we set $\eta_0 = \gamma$ and
$\eta_t = \gamma\brb{\min\{a_t, b_t\} + 1}$ 
for any $t \ge 1$.
First, when $a_T \le b_T$ we have that
\begin{align*}
    \Reg_T(u)
    &\le \frac{\eta_T}{2}\norm{u}_2^2 + nY^2 \ln\bbrb{1 + \frac{Z^2 T}{\gamma n}} + Y^2\lrb{11 \sigma_{\max} + 2n\dmax \ln\lrb{1 + \frac{Z^2 T}{\gamma n}}} \tag{\Cref{eq:olr-drift-first-plus-third-steps,eq:olr-drift-second-step-1}} \nonumber\\
    &\le \frac{\norm{u}_2^2}{2}\eta_T + nY^2 \ln\bbrb{1 + \frac{Z^2 T}{\gamma n}} + Y^2 \dmax \lrb{11 + 2n\ln\lrb{1 + \frac{Z^2 T}{\gamma n}}} \tag{$\sigma_{\max} \le \dmax$} \\
    &\le \frac{\gamma \norm{u}_2^2}{2} + nY^2 \ln\bbrb{1 + \frac{Z^2 T}{\gamma n}} + 11 Y^2 \dmax + \brb{\gamma \norm{u}_2^2 + 2Y^2} n \dmax \ln\lrb{1 + \frac{Z^2 T}{\gamma n}} \\
    &\le \frac{\gamma \norm{u}_2^2}{2} + nY^2 \ln\bbrb{1 + \frac{Z^2 T}{\gamma n}} + \brb{\gamma \norm{u}_2^2 + 13Y^2} n \dmax \ln\lrb{1 + \frac{Z^2 T}{\gamma n}} \;.
\end{align*}
On the contrary, when $a_T > b_T$, we let $\tau^\star$ be the last round such that $a_{\tau^\star} \le b_{\tau^\star}$ and show that
\begin{align*}
    \Reg_T(u)
    &\le \frac{\norm{u}_2^2}{2}\eta_T + nY^2 \ln\bbrb{1 + \frac{Z^2 T}{\gamma n}} + 11Y^2\sigma_{\max} + Y^2 \bbrb{2n\dmax^{\le \tau^\star} \ln\bbrb{1 + \frac{Z^2 T}{\gamma n}} + Z^2 \sum_{t=\tau^\star+1}^T \frac{\abs{m_t}}{\eta_t}} && \tag{\Cref{eq:olr-drift-first-plus-third-steps,eq:olr-drift-second-step-2}} \\
    &\le \frac{\norm{u}_2^2}{2}\eta_T + nY^2 \ln\bbrb{1 + \frac{Z^2 T}{\gamma n}} + 11Y^2\sigma_{\max} + Z Y^2 \Bbrb{\sqrt{\sum_{t=1}^{\tau^\star} \abs{m_t}} + Z\sum_{t=\tau^\star+1}^T \frac{\abs{m_t}}{\eta_t}} && \tag{$a_{\tau^\star} \le b_{\tau^\star}$} \\
    &\le \frac{\norm{u}_2^2}{2}\eta_T + nY^2 \ln\bbrb{1 + \frac{Z^2 T}{\gamma n}} + 11Y^2\sigma_{\max} + \frac{Z Y^2}{\gamma} \lrb{\sqrt{\sum_{t=1}^{\tau^\star} \abs{m_t}} + \sum_{t=\tau^\star+1}^T \frac{\abs{m_t}}{\sqrt{\sum_{s=1}^{t} \abs{m_s}}}} && \tag{definition of $\eta_t$} \\
    &\le \frac{\norm{u}_2^2}{2}\eta_T + nY^2 \ln\bbrb{1 + \frac{Z^2 T}{\gamma n}} + 11Y^2\sigma_{\max} + Z Y^2 \lrb{\sqrt{\sum_{t=1}^{\tau^\star} \abs{m_t}} + 2\sqrt{\sum_{t=\tau^\star+1}^{T} \abs{m_s}}} && \tag{\citet[Lemma~4.13]{orabona2019modern}} \\
    &\le \frac{\norm{u}_2^2}{2}\eta_T + nY^2 \ln\bbrb{1 + \frac{Z^2 T}{\gamma n}} + 11Y^2\sigma_{\max} + 2Z Y^2 \sqrt{2\dtot} \\
    &\le \frac{\norm{u}_2^2}{2}\eta_T + nY^2 \ln\bbrb{1 + \frac{Z^2 T}{\gamma n}} + 2(11+Z)Y^2 \sqrt{2\dtot} && \tag{\Cref{lem:sigma-max-bound-dtot}} \\
    &\le \frac{\gamma\norm{u}_2^2}{2}\brb{1+Z\sqrt{\dtot}} + nY^2 \ln\bbrb{1 + \frac{Z^2 T}{\gamma n}} + 2(11+Z)Y^2 \sqrt{2\dtot} \;. && \tag{definition of $\eta_T$}
\end{align*}
Considering the conditions in each of the two cases together with the definitions of $a_t$ and $b_t$, this concludes the proof.
\end{proof}
\section{Online mirror descent for delayed OCO with strongly convex losses}\label{app:exps}

In this section, we prove that the following online mirror descent (OMD) algorithm achieves a regret guarantee whose dependence on the delays is of order $\min\bcb{\sigma_{\max}\ln T,\sqrt{\dtot}}$, similarly to \Cref{alg:delayed-ftrl-strongly-convex}.
To be precise, an OMD-based algorithm which handles delays was initially proposed by \citet{wu2024online} in their Algorithm 6.
However, \citet{wu2024online} only manage to show that this algorithm achieves regret $\order\left(\frac{\dmax (G^2+D)}{\lambda}\ln T + \frac{\dmax G}{\lambda^2}\right)$ under \Cref{asm:gradient-bound,asm:diameter}.
Here, we report its pseudocode in \Cref{alg:delayed-omd-strongly-convex} and we provide an improved regret analysis for it.
Not only do we provide a significantly better guarantee, but we also manage to lift \Cref{asm:diameter} and only require the boundedness of the gradient norms via \Cref{asm:gradient-bound}.
The key to achieve these improvements simultaneously is a fundamentally different and more careful regret analysis.

\begin{algorithm}
    \caption{Delayed OMD for strongly convex functions}
    \label{alg:delayed-omd-strongly-convex}
    \begin{algorithmic}[1]
        \INPUT strong convexity parameter $\lambda > 0$, learning rates $\eta_t=\frac{2}{t\lambda}$ for all $t\in[T]$
        \INITIALIZE $x_1 \in \cX$
        \FOR{$t = 1, 2, \dots$}
            \STATE Play $x_t$
            \STATE Receive $g_\tau = \nabla f_\tau(x_\tau)$ for all $\tau \in o_{t+1} \setminus o_{t}$
            \STATE Update $
                x_{t+1} = \argmin\limits_{x \in \cX} \sum\limits_{\tau \in o_{t+1}\backslash o_t} \inner{g_\tau, x} + \frac{1}{\eta_t}\|x-x_t\|_2^2$.
        \ENDFOR
    \end{algorithmic}
\end{algorithm}
\begin{theorem}\label{thm:omd_strongly_cvx}
    Assume that $f_1, \dots, f_T$ are $\lambda$-strongly convex functions with respect to the Euclidean norm $\norm{\cdot}_2$.
    Then, under \Cref{asm:gradient-bound}, \Cref{alg:delayed-omd-strongly-convex} guarantees
    \[
        \Reg_T=\cO\lrb{\frac{G^2}{\lambda} \lrb{\ln T + \min\lcb{\sigma_{\max} \ln T, \sqrt{\dtot}}}} \;.
    \]
\end{theorem}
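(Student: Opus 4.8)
The plan is to follow the template of the proof of \pref{thm:strongcvx}, but \emph{without} introducing a ``cheating'' comparator sequence; instead I would reason directly about the mirror-descent iterates and group the feedback by the round in which it arrives. First, $\lambda$-strong convexity of each $f_t$ gives $\Reg_T(u) \le \sum_{t=1}^{T}\brb{\inner{g_t, x_t - u} - \tfrac{\lambda}{2}\norm{x_t-u}_2^2}$. Write $h_s = \sum_{\tau \in o_{s+1}\setminus o_s} g_\tau = \sum_{\tau:\,\tau+d_\tau = s} g_\tau$ for the aggregate gradient processed by the update, which (completing the square) equals the projected step $x_{s+1} = \Pi_{\cX}\brb{x_s - \tfrac{\eta_s}{2}h_s}$ with $\Pi_{\cX}$ the Euclidean projection onto $\cX$. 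Splitting $\inner{g_t,x_t-u} = \inner{g_t, x_{t+d_t}-u} + \inner{g_t, x_t - x_{t+d_t}}$ and re-indexing the first part by arrival time (so that $\sum_t\inner{g_t, x_{t+d_t}-u} = \sum_s\inner{h_s, x_s - u}$, using $t+d_t\le T$) yields
\[
  \Reg_T(u) \le \sum_{s=1}^{T}\inner{h_s, x_s - u} \;+\; \sum_{t=1}^{T}\inner{g_t, x_t - x_{t+d_t}} \;-\; \frac{\lambda}{2}\sum_{t=1}^{T}\norm{x_t-u}_2^2\,.
\]

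Next I would bound the first sum by the three-point inequality for mirror descent (\pref{lem: OGD useful lemma}) with the regularizer $\tfrac1{\eta_s}\norm{\cdot}_2^2$, followed by a Fenchel step, to get $\inner{h_s, x_s-u} \le \tfrac1{\eta_s}\norm{u-x_s}_2^2 - \tfrac1{\eta_s}\norm{u-x_{s+1}}_2^2 + \tfrac{\eta_s}{4}\norm{h_s}_2^2$. Summing and telescoping with $\eta_s = \tfrac{2}{s\lambda}$ (so $\tfrac1{\eta_s}-\tfrac1{\eta_{s-1}} = \tfrac{\lambda}{2}$) produces exactly $\tfrac{\lambda}{2}\sum_s\norm{u-x_s}_2^2$, which cancels the negative strong-convexity term. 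For the drift sum I would combine Cauchy--Schwarz with $\norm{g_t}_2\le G$ (\pref{asm:gradient-bound}), the telescoping bound $\norm{x_t - x_{t+d_t}}_2 \le \sum_{s=t}^{t+d_t-1}\norm{x_{s+1}-x_s}_2$, the non-expansiveness estimate $\norm{x_{s+1}-x_s}_2 \le \tfrac{\eta_s}{2}\norm{h_s}_2$, and the re-indexing identity $\abs{\{t : t \le s < t+d_t\}} = \abs{m_{s+1}}$, which gives $\sum_t\inner{g_t, x_t - x_{t+d_t}} \le \tfrac{G}{2}\sum_s \eta_s\abs{m_{s+1}}\norm{h_s}_2$. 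Bounding $\norm{h_s}_2 \le G\,\abs{o_{s+1}\setminus o_s}$ and substituting $\eta_s = \tfrac2{s\lambda}$ leads to
\[
  \Reg_T(u) \le \frac{G^2}{2\lambda}\sum_{s=1}^{T}\frac{\abs{o_{s+1}\setminus o_s}^2}{s} \;+\; \frac{G^2}{\lambda}\sum_{s=1}^{T}\frac{\abs{o_{s+1}\setminus o_s}\,\abs{m_{s+1}}}{s}\,.
\]

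The step I expect to be the main obstacle is that bounding these two sums crudely (e.g.\ $\abs{o_{s+1}\setminus o_s}\le\sigma_{\max}$ in both) yields only $\order(\sigma_{\max}^2\ln T)$, which misses the target. The key fact I would exploit is the exact combinatorial identity $\abs{o_{s+1}\setminus o_s} = \abs{m_s} + 1 - \abs{m_{s+1}}$, which follows from $o_{s+1}\setminus o_s \subseteq m_s\cup\{s\}$ and $m_{s+1} = (m_s\cup\{s\})\setminus(o_{s+1}\setminus o_s)$. Setting $a_s = \abs{m_s}$, this turns the numerators above into $\abs{o_{s+1}\setminus o_s}^2 + 2\abs{o_{s+1}\setminus o_s}\abs{m_{s+1}} = (a_s+1)^2 - a_{s+1}^2$, so the two sums collapse to $\tfrac{G^2}{2\lambda}\sum_{s=1}^{T}\tfrac{(a_s+1)^2 - a_{s+1}^2}{s}$. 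Using $a_1 = 0$ and $a_{T+1} = 0$ (all feedback arrives by round $T+1$), a direct telescoping bounds this by $\tfrac{G^2}{2\lambda}\brb{1 + \sum_{s=2}^{T}\tfrac{2a_s+1}{s}} \le \tfrac{G^2}{2\lambda}\brb{1 + \ln T + 2\sum_{s=2}^{T}\tfrac{\abs{m_s}}{s}}$, after dropping a nonnegative term.

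Finally, the residual sum $\sum_{s=2}^{T}\tfrac{\abs{m_s}}{s}$ is controlled exactly as in the proof of \pref{thm:strongcvx}: bounding $\abs{m_s}\le\sigma_{\max}$ gives $\sigma_{\max}\ln T$, whereas using $\sum_{\tau\le s}\abs{m_\tau}\le\binom{s}{2}\le s^2$ together with \citet[Lemma~4.13]{orabona2019modern} gives $\sum_{s}\tfrac{\abs{m_s}}{s} \le \sum_{s}\tfrac{\abs{m_s}}{\sqrt{\sum_{\tau\le s}\abs{m_\tau}}} \le 2\sqrt{\sum_s\abs{m_s}} \le 2\sqrt{\dtot}$. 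Combining these with the previous display yields $\Reg_T = \order\brb{\tfrac{G^2}{\lambda}\brb{\ln T + \min\{\sigma_{\max}\ln T,\sqrt{\dtot}\}}}$, with no dependence on the domain diameter, as claimed. The combinatorial identity in the third paragraph is the only genuinely new ingredient compared to the FTRL argument; everything else is a careful but routine adaptation of standard mirror-descent tools.
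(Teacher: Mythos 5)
Your proof is correct, and it reaches the stated bound by a genuinely different route from the paper's. The paper treats the missing gradients as optimistic hints: it introduces $\wt g_t = -\sum_{s\in m_t}g_s$, decomposes the linearized regret around these hints, and after telescoping reduces everything to $\sum_t\brb{\inprod{g_t-\wt g_t,\, x_t-x_{t+1}} - \norm{x_t-x_{t+1}}_2^2/\eta_t}$, which it controls with the norm-constrained conjugate bound (\Cref{lem: Norm-constrained conjugate}); this produces $\tfrac{2G^2}{\lambda}\sum_t\tfrac{(\abs{m_t}+1)(\abs{o_{t+1}}-\abs{o_t})}{t}$, and the two delay-dependent bounds then require two separate manipulations of that sum --- the $\sqrt{\dtot}$ one via an Abel-summation argument exploiting $\abs{o_t}+\abs{m_t}=t-1$ and $m_{t+1}\subseteq m_t\cup\{t\}$, the $\sigma_{\max}\ln T$ one via \citet[Lemma~4.13]{orabona2019modern} applied to $\abs{o_{t+1}}-\abs{o_t}$. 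You instead split $\inprod{g_t,x_t-u}$ at the arrival time $t+d_t$, re-index by arrival so that the standard OMD three-point analysis applies directly to the aggregated gradients $h_s$, and handle the drift by counting $\abs{\{t: t\le s<t+d_t\}}=\abs{m_{s+1}}$; your identity $\abs{o_{s+1}\setminus o_s}=\abs{m_s}+1-\abs{m_{s+1}}$ then collapses the stability and drift sums into the single telescoping quantity $\sum_s\tfrac{(\abs{m_s}+1)^2-\abs{m_{s+1}}^2}{s}$, so that both final bounds follow from the one estimate $\sum_s\abs{m_s}/s\le\min\{\sigma_{\max}\ln T,\,2\sqrt{\dtot}\}$ already used for \pref{thm:strongcvx}. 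What each approach buys: the paper's is structurally uniform with its optimism-based framing and makes the hint interpretation explicit, while yours is more elementary (plain AM--GM in place of \Cref{lem: Norm-constrained conjugate}) and unifies the two delay-dependent bounds into a single telescoping step, which is arguably cleaner; both are diameter-free, and the constants are of the same order.
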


\begin{proof}
We begin with a decomposition of the regret that, similarly to the proof of \Cref{thm:strongcvx}, leverages the strong convexity of losses $f_1, \dots, f_T$ and attempts to isolate the discrepancy in the information available to the learner because of the delayed gradients.
However, this decomposition differs from the one in \Cref{thm:strongcvx} since the algorithm updates its predictions differently via mirror descent.
Our approach follows the idea of framing such an information discrepancy via optimism \citep{flaspohler2021online}.
For notational convenience, define $\wt g_1 = 0$ and $\wt g_{t+1} = \wt g_t + \sum_{\tau \in o_{t+1}\setminus o_t} g_\tau - g_t$ for any $t \ge 1$.
Note that, by definition, each $\wt g_t$ is equal to
\begin{equation} \label{eq:omd-regret-gtilde}
    \wt g_t = \sum_{\tau = 1}^{t-1} \brb{\wt g_{\tau+1} - \wt g_\tau}
    = \sum_{s=1}^{t-1} \Bbrb{\sum_{\tau \in o_{s+1}\setminus o_s} g_\tau - g_s}
    = \sum_{s \in o_t} g_s - \sum_{s=1}^{t-1} g_s
    = - \sum_{s \in m_t} g_s
\end{equation}
and consequently $\wt g_{T+1} = 0$ since $m_{T+1} = \emptyset$.
This definition of $\wt g_t$ allows to rewrite the ``linearized'' regret as
\begin{equation}\label{eqn:omd-1}
    \sum_{t=1}^T \inprod{g_t, x_t - u}
    = \sum_{t=1}^T \inprod*{\sum_{\tau \in o_{t+1}\setminus o_t} g_\tau, x_t - u} + \sum_{t=1}^T \inprod{\wt g_t - \wt g_{t+1}, x_t}
\end{equation}
and to have that, for every round $t$,
\begin{equation}\label{eqn:omd-2}
   \inprod*{\sum_{\tau \in o_{t+1}\setminus o_t} g_\tau, x_t - x_{t+1}}
   = \inprod{g_t - \wt g_t + \wt g_{t+1}, x_t - x_{t+1}}
   = \inprod{g_t - \wt g_t, x_t-x_{t+1}} + \inprod{\wt g_{t+1}, x_t - x_{t+1}} \;.
\end{equation}
Moreover, according to the standard regret analysis of OMD (\pref{lem: OGD useful lemma}), we know that
\begin{align} \label{eqn:omd-3}
    \inprod*{\sum_{\tau \in o_{t+1}\setminus o_t} g_\tau, x_t - u}
    &\leq \frac{1}{\eta_t}\Brb{\norm{u - x_t}_2^2 - \norm{u - x_{t+1}}_2^2 - \norm{x_t - x_{t+1}}_2^2} + \inprod*{\sum_{\tau \in o_{t+1}\backslash o_t} g_\tau, x_t - x_{t+1}} \;.
\end{align}
The above observations then make it possible to bound the first sum in the right-hand side of \Cref{eqn:omd-1} as
\begin{align}
     \sum_{t=1}^T \inprod*{\sum_{\tau \in o_{t+1}\setminus o_t} g_\tau, x_t - u}
     &\le \sum_{t=1}^T \frac{1}{\eta_t}\Brb{\norm{u - x_t}_2^2 - \norm{u - x_{t+1}}_2^2 - \norm{x_t - x_{t+1}}_2^2} \nonumber\\
     &\quad + \sum_{t=1}^T \inprod*{\sum_{\tau \in o_{t+1}\backslash o_t} g_\tau, x_t - x_{t+1}} && \text{(\Cref{eqn:omd-3})} \nonumber\\
     &= \sum_{t=1}^T \frac{1}{\eta_t}\Brb{\norm{u - x_t}_2^2 - \norm{u - x_{t+1}}_2^2 - \norm{x_t - x_{t+1}}_2^2} \nonumber\\
     &\quad + \sum_{t=1}^T \inprod{g_t - \wt g_t, x_t-x_{t+1}} + \sum_{t=1}^T \inprod{\wt g_{t+1}, x_t - x_{t+1}} && \text{(\Cref{eqn:omd-2})} \nonumber\\
     &= \sum_{t=1}^T \frac{1}{\eta_t}\Brb{\norm{u - x_t}_2^2 - \norm{u - x_{t+1}}_2^2 - \norm{x_t - x_{t+1}}_2^2} \nonumber\\
     &\quad + \sum_{t=1}^T \inprod{g_t - \wt g_t, x_t-x_{t+1}} + \sum_{t=1}^T \inprod{\wt g_{t+1} - \wt g_t, x_t} \nonumber\\
     &\quad + \inprod*{\wt g_1, x_1} - \inprod*{\wt g_{T+1}, x_{T+1}} \nonumber\\
     &= \sum_{t=1}^T \frac{1}{\eta_t}\Brb{\norm{u - x_t}_2^2 - \norm{u - x_{t+1}}_2^2 - \norm{x_t - x_{t+1}}_2^2} \nonumber\\
     &\quad + \sum_{t=1}^T \inprod{g_t - \wt g_t, x_t-x_{t+1}} + \sum_{t=1}^T \inprod{\wt g_{t+1} - \wt g_t, x_t} \;, \label{eqn:omd-4}
\end{align}
where the second equality follows by carefully rearranging the terms in the sum $\sum_{t=1}^T\inprod{\wt g_{t+1}, x_t - x_{t+1}}$, while the last equality is due to $\wt g_1 = \wt g_{T+1} = 0$ by definition.

At this point, we can rewrite the regret in the following way:
\begin{align}
    \Reg_T(u)
    &= \sum_{t=1}^T \brb{f_t(x_t) - f_t(u)} \nonumber\\
    &\le \sum_{t=1}^T \inprod{g_t, x_t - u} - \frac{\lambda}{2} \sum_{t=1}^T \norm{x_t - u}_2^2 \nonumber\\
    &= \sum_{t=1}^T \inprod*{\sum_{\tau \in o_{t+1}\setminus o_t} g_\tau, x_t - u} + \sum_{t=1}^T \inprod{\wt g_t - \wt g_{t+1}, x_t} - \frac{\lambda}{2} \sum_{t=1}^T \norm{x_t - u}_2^2 && \tag{\Cref{eqn:omd-1}} \nonumber\\
    &\le \sum_{t=1}^T \frac{\norm{u - x_t}_2^2 - \norm{u - x_{t+1}}_2^2 - \norm{x_t - x_{t+1}}_2^2}{\eta_t} + \sum_{t=1}^T \inprod{g_t - \wt g_t, x_t-x_{t+1}} - \frac{\lambda}{2}\sum_{t=1}^T \norm{x_t - u}_2^2 \tag{\Cref{eqn:omd-4}} \nonumber\\
    &= \sum_{t=1}^T \lrb{\frac{\norm{u - x_t}_2^2 - \norm{u - x_{t+1}}_2^2}{\eta_t} -\frac{\lambda}{2}\sum_{t=1}^T \norm{x_t - u}_2^2}+ \sum_{t=1}^T \lrb{\inprod{g_t-\wt g_t, x_t-x_{t+1}} - \frac{\norm{x_t - x_{t+1}}_2^2}{\eta_t}} \nonumber\\
    &= \frac{\lambda}{2}\sum_{t=1}^T \Brb{\brb{\norm{x_t-u}_2^2 - \norm{x_{t+1}-u}_2^2}t - \norm{x_t-u}_2^2} + \sum_{t=1}^T \lrb{\inprod{g_t-\wt g_t, x_t-x_{t+1}} - \frac{\norm{x_t - x_{t+1}}_2^2}{\eta_t}} \tag{definition of $\eta_t$}\\
    &= -\frac{\lambda T}{2}\norm{x_{T+1}-u}_2^2 + \sum_{t=1}^T \lrb{\inprod{g_t-\wt g_t, x_t-x_{t+1}} - \frac{\norm{x_t - x_{t+1}}_2^2}{\eta_t}} \nonumber\\
    &\le \sum_{t=1}^T \lrb{\inprod{g_t-\wt g_t, x_t-x_{t+1}} - \frac{\norm{x_t - x_{t+1}}_2^2}{\eta_t}} \;,
    \label{eq:omd-regret-bound-1}
\end{align}
where the first inequality holds because of the $\lambda$-strong convexity of $f_t$.

We now focus on the right-hand side of \Cref{eq:omd-regret-bound-1}.
Applying \Cref{lem:ftrl_sc}, we can bound from above the distance between subsequent iterates:
\begin{equation}
    \norm{x_t - x_{t+1}}_2
    \leq \eta_t \norm{g_t + \wt g_{t+1} - \wt g_t}_2
    = \eta_t \norm*{\sum_{\tau \in o_{t+1}\setminus o_t} g_\tau}_2
    \leq G\eta_t \brb{|o_{t+1}|-|o_{t}|} \;,
    \label{eq:bound_omd_iterate1}
\end{equation}
where the last inequality follows by jointly using the triangle inequality, the bound on the gradient norm (\Cref{asm:gradient-bound}), and the fact that $o_t \subseteq o_{t+1}$.

What remains to analyze now is the distance $\norm{g_t - \wt g_t}_2$, and a direct calculation allows us to show that
\begin{equation}
    \norm{g_t - \wt g_t}_2
    = \norm*{g_t + \sum_{\tau \in m_t} g_\tau}_2
    \le G(|m_t| +1) \;,
    \label{eq:bound_omd_iterate2}
\end{equation}
again by using the triangle inequality and \Cref{asm:gradient-bound}.

Applying \Cref{lem: Norm-constrained conjugate} with \Cref{eq:bound_omd_iterate1}, we show that the each term of the sum in the right-hand side of \Cref{eq:omd-regret-bound-1} satisfies
\begin{equation}
    \label{eq:omd_conjugate}
    \inprod{g_t-\wt g_t, x_t-x_{t+1}} - \frac{\norm{x_t - x_{t+1}}_2^2}{\eta_t}
    \leq \min \left \{G\eta_t \|g_t -\wt g_t \|_2 (|o_{t+1}| - |o_t|)\,, \eta_t \|g_t -\wt g_t \|_2^2 \right\} \;.
\end{equation}
Therefore, starting from \Cref{eq:omd-regret-bound-1}, we are able to derive the final regret bound:
\begin{align}
    \Reg_T(u)
    &\leq \sum_{t=1}^T \eta_t\norm{g_t -\wt g_t}_2 \norm{g_t +\wt g_{t+1} -\wt g_t}_2 \tag{\Cref{eq:omd-regret-bound-1,eq:omd_conjugate}} \\
    &= \frac{2G}{\lambda}\sum_{t=1}^T \frac{\norm{g_t -\wt g_t}_2 (|o_{t+1}|- |o_{t}|)}{t} \tag{definition of $\eta_t$} \\
    &\leq \frac{2G^2}{\lambda}\sum_{t=1}^T\frac{(|m_t| + 1)(|o_{t+1}|- |o_{t}|)}{t} \;. \tag{\Cref{eq:bound_omd_iterate2}}
\end{align}
Crucially, what remains to analyze is the sum in the right-hand side of the above inequality.
We can first show that
\begin{align}
    \sum_{t=1}^T\frac{(|m_t| + 1)(|o_{t+1}|- |o_{t}|)}{t}
    &\leq (\sigma_{\max} + 1) \sum_{t=1}^T\frac{|o_{t+1}|- |o_{t}|}{t} \tag{definition of $\sigma_{\max}$}\\
    &\le (\sigma_{\max} + 1) \sum_{t=1}^T\frac{|o_{t+1}|- |o_{t}|}{|o_{t+1}|} \tag{$o_{t+1} \subseteq [t]$}\\
    &= (\sigma_{\max} + 1) \sum_{t=1}^T\frac{(|o_{t+1}|- |o_{t}|)}{\sum_{s=1}^t (|o_{s+1}|-|o_s|)} \nonumber\\
    &\leq (\sigma_{\max} + 1) (1+\ln T) \;, \label{omd_dmax}
\end{align}
where the last inequality follows by \citet[Lemma~4.13]{orabona2019modern} and the fact that $\sum_{t=1}^T (|o_{t+1}| - |o_t|) = |o_{T+1}| = T$.
Second, we can also bound such a sum in an alternative way:
\begin{align*}
    \sum_{t=1}^T\frac{(|m_t| + 1)(|o_{t+1}|- |o_{t}|)}{t}
    &= \sum_{t=1}^T\frac{|m_t|(|o_{t+1}|- |o_{t}|)}{t} + \sum_{t=1}^T\frac{(|o_{t+1}|- |o_{t}|)}{t} \nonumber\\
    &\leq \sum_{t=1}^T\frac{|m_t|(|o_{t+1}|- |o_{t}|)}{t} + \sum_{t=1}^T\frac{(|o_{t+1}|- |o_{t}|)}{\sum_{s=1}^t (|o_{s+1}| - |o_s|)} \tag{definition of $o_t$}\\
    &\leq \sum_{t=1}^T\frac{|m_t|(|o_{t+1}|- |o_{t}|)}{t} + \ln T + 1 \nonumber\\
    &\leq \sum_{t=1}^T\frac{|m_t|(t-|m_{t+1}|- (t-1 - |m_{t}|))}{t} + \ln T + 1 \tag{$|o_t| + |m_t| = t-1$ for all $t$}\\
    &= \sum_{t=1}^T\frac{|m_t|(1+ |m_{t}| - |m_{t+1}|)}{t} + \ln T + 1 \nonumber\\
    &= \sum_{t=1}^T\frac{|m_t|}{t} + |m_1|^2 - \frac{|m_T||m_{T+1}|}{t}+ \sum_{t=2}^T \lrb{\frac{|m_t|^2}{t} - \frac{|m_{t-1}||m_t|}{t-1}} + \ln T + 1 \nonumber\\
    &= \sum_{t=1}^T\frac{|m_t|}{t} + \sum_{t=2}^T \lrb{\frac{|m_t|^2}{t} - \frac{|m_{t-1}||m_t|}{t-1}} + \ln T + 1 \tag{definition of $m_t$}\\
    &\leq \sum_{t=1}^T\frac{|m_t|}{t} + \sum_{t=2}^T \lrb{\frac{(|m_{t-1}|+1)|m_t|}{t-1} - \frac{|m_{t-1}||m_t|}{t-1}} + \ln T + 1 \tag{$m_{t+1} \subseteq m_{t} \cup \{t\}$ for all $t$}\\
    &\leq \sum_{t=1}^T\frac{|m_t|}{t} + \ln T + 1 \nonumber\\
    &\leq 2\sqrt{\dtot} + \ln T + 1 \;,
\end{align*}
where the last inequality follows by \Cref{ineq:reg_sc_D}.
Combing the above two inequalities, we finally obtain
\begin{align*}
    \Reg_T(u)
    &\le \frac{2G^2}{\lambda} (1+\ln T) + \frac{2G^2}{\lambda}\min\lcb{\sigma_{\max} (1+\ln T) \,, 2\sqrt{\dtot}}
    \\ &= \cO\lrb{\frac{G^2}{\lambda} \lrb{\ln T + \min\lcb{\sigma_{\max} \ln T, \sqrt{\dtot}}}} \;. \qedhere
\end{align*}
\end{proof}

\section{Additional experiments}
\label{appendix: additional_exps}

\begin{figure*}[ht]
    \centering
    \subfloat{%
        \centering
        \includegraphics[width=0.27\textwidth]{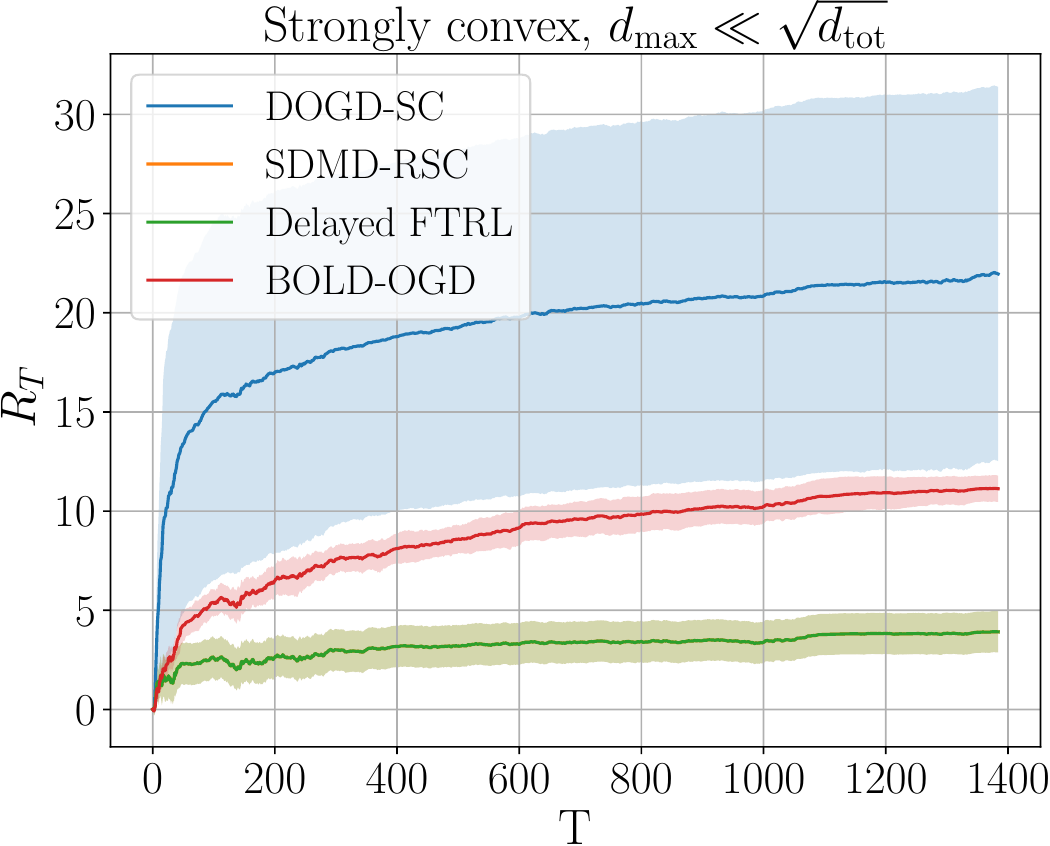}
        \label{fig:sc_dmax_r}
    }
    \hfill
    \subfloat{%
        \centering
        \includegraphics[width=0.27\textwidth]{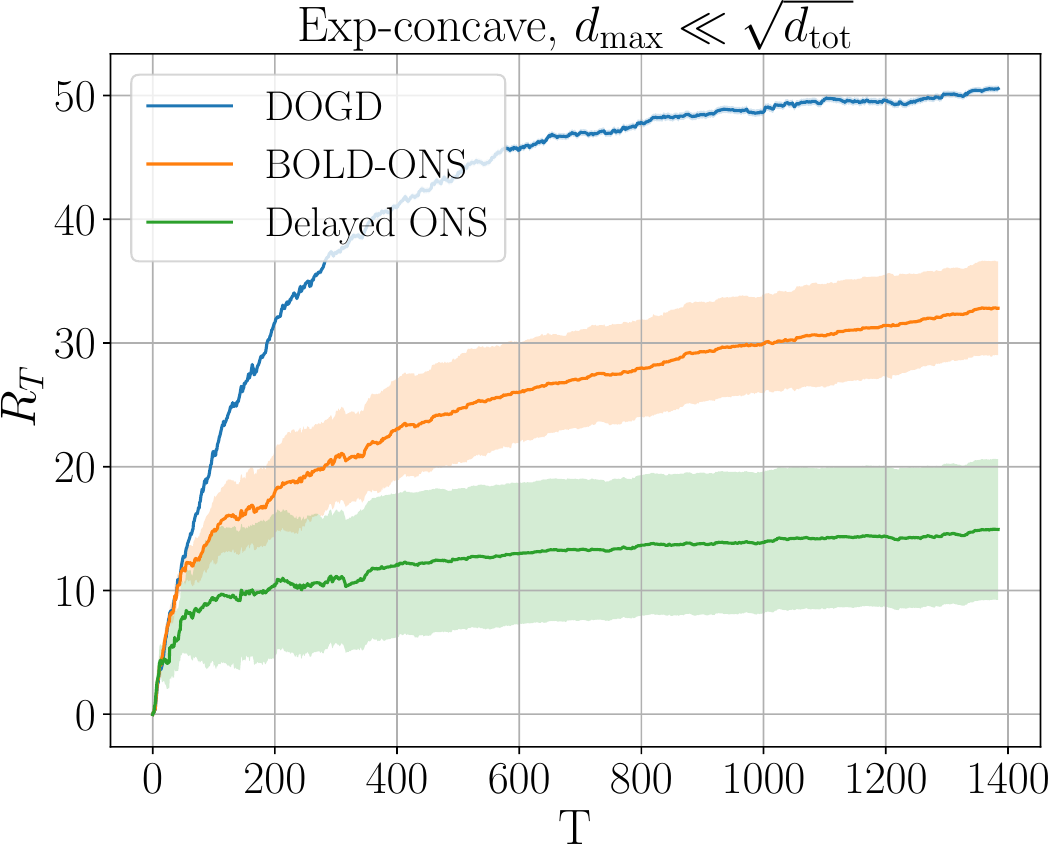}
        \label{fig:exp_dmax_r}
    }
    \hfill
    \subfloat{%
        \centering
        \includegraphics[width=0.27\textwidth]{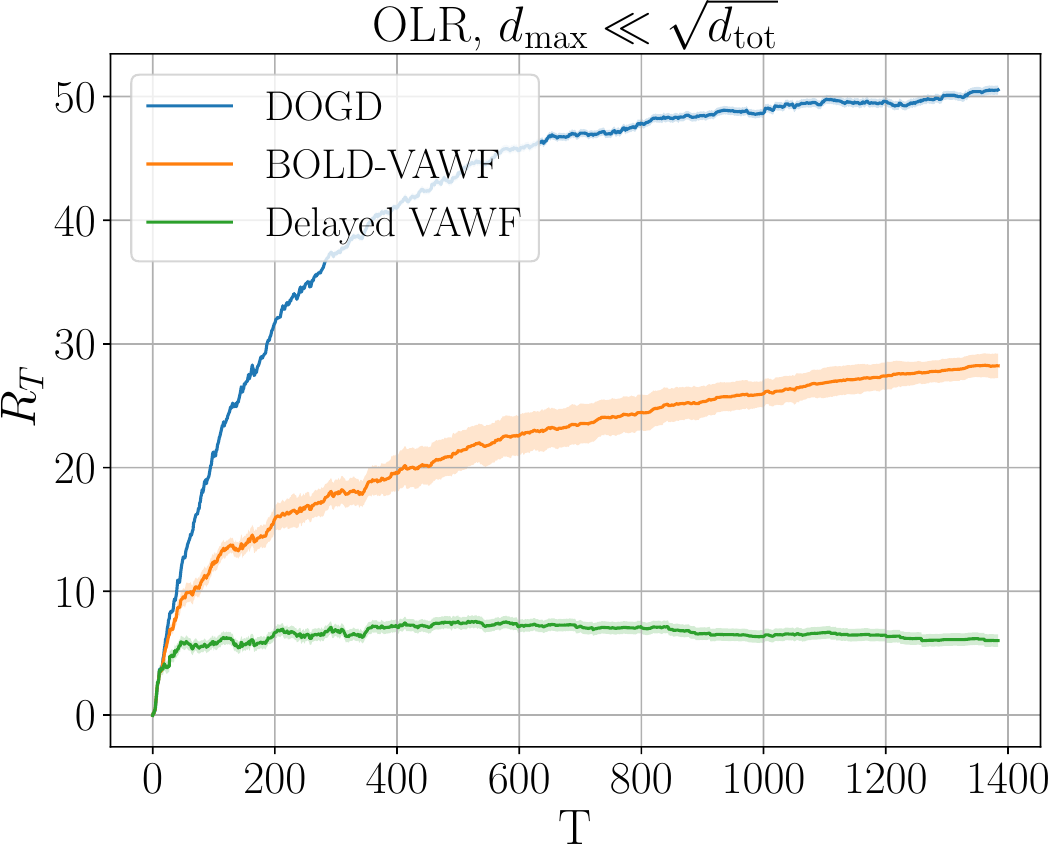}
        \label{fig:olr_dmax_r}
    }
    \vspace{3pt}
    
    \subfloat{%
        \centering
        \includegraphics[width=0.27\textwidth]{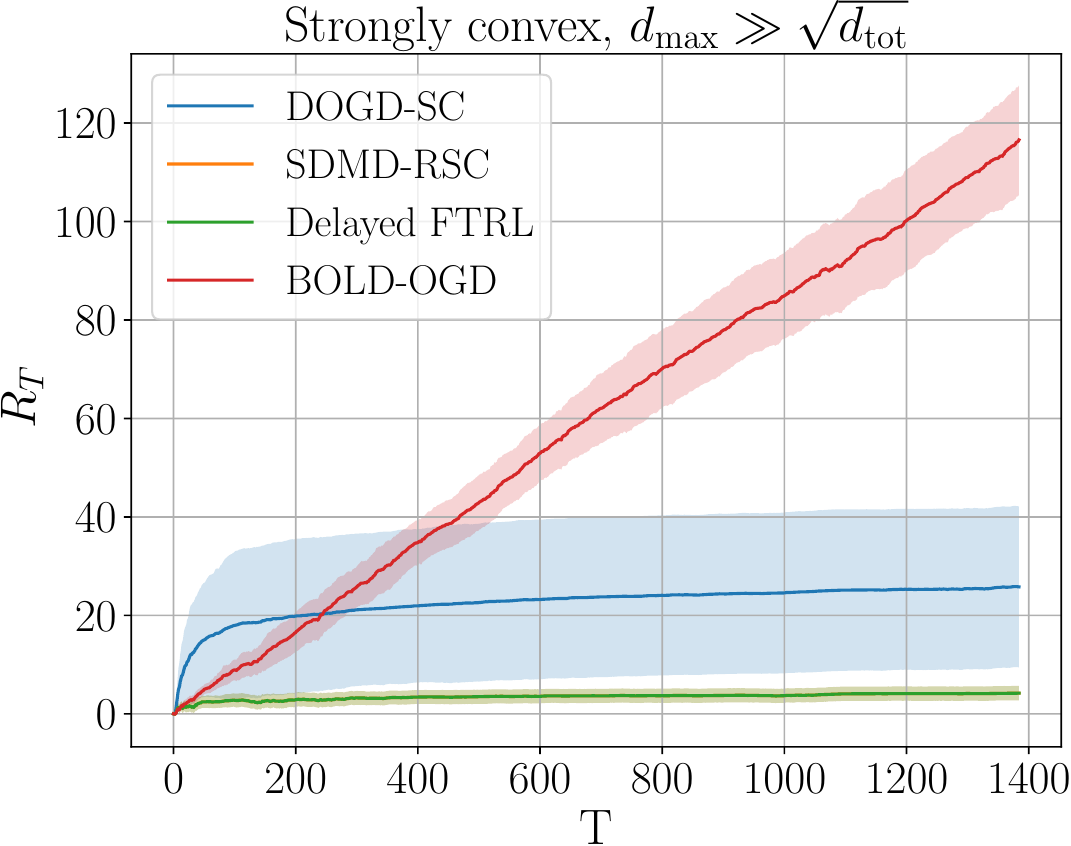}
        \label{fig:sc_dtot_r}
    }
    \hfill
    \subfloat{%
        \centering
        \includegraphics[width=0.27\textwidth]{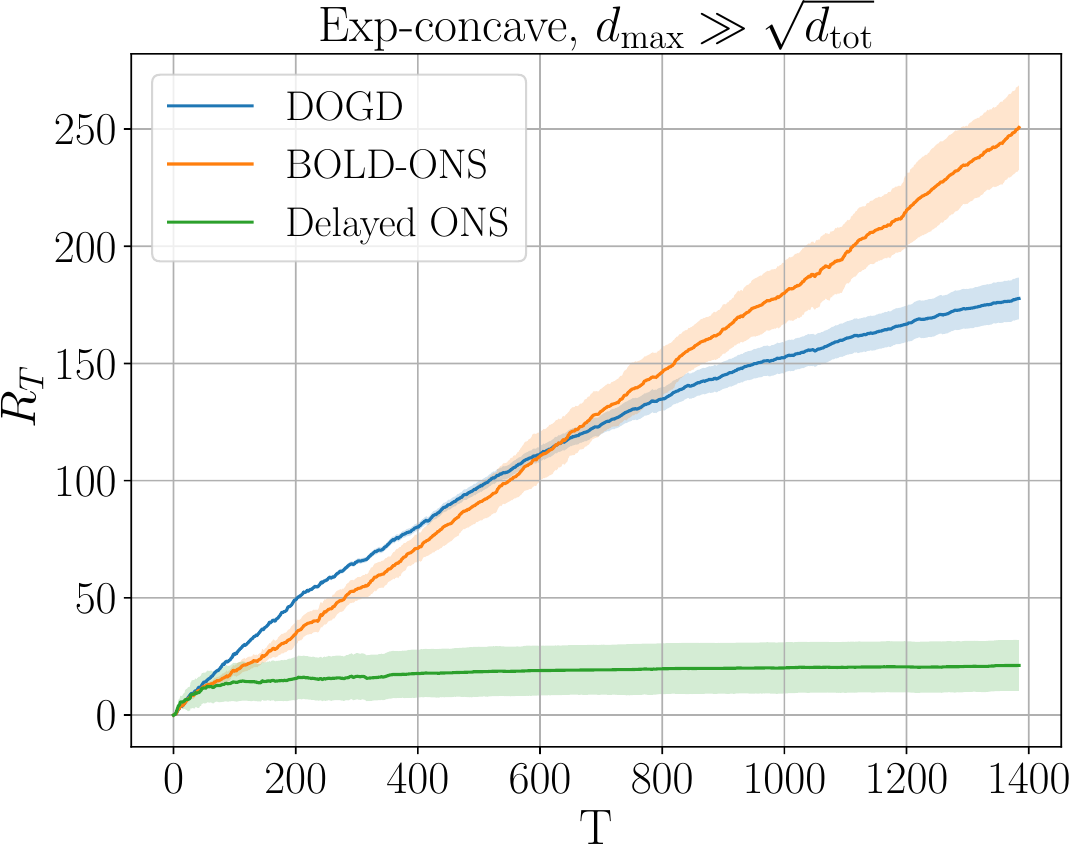}
        \label{fig:exp_dtot_r}
        }
         \hfill
    \subfloat{%
        \centering
        \includegraphics[width=0.27\textwidth]{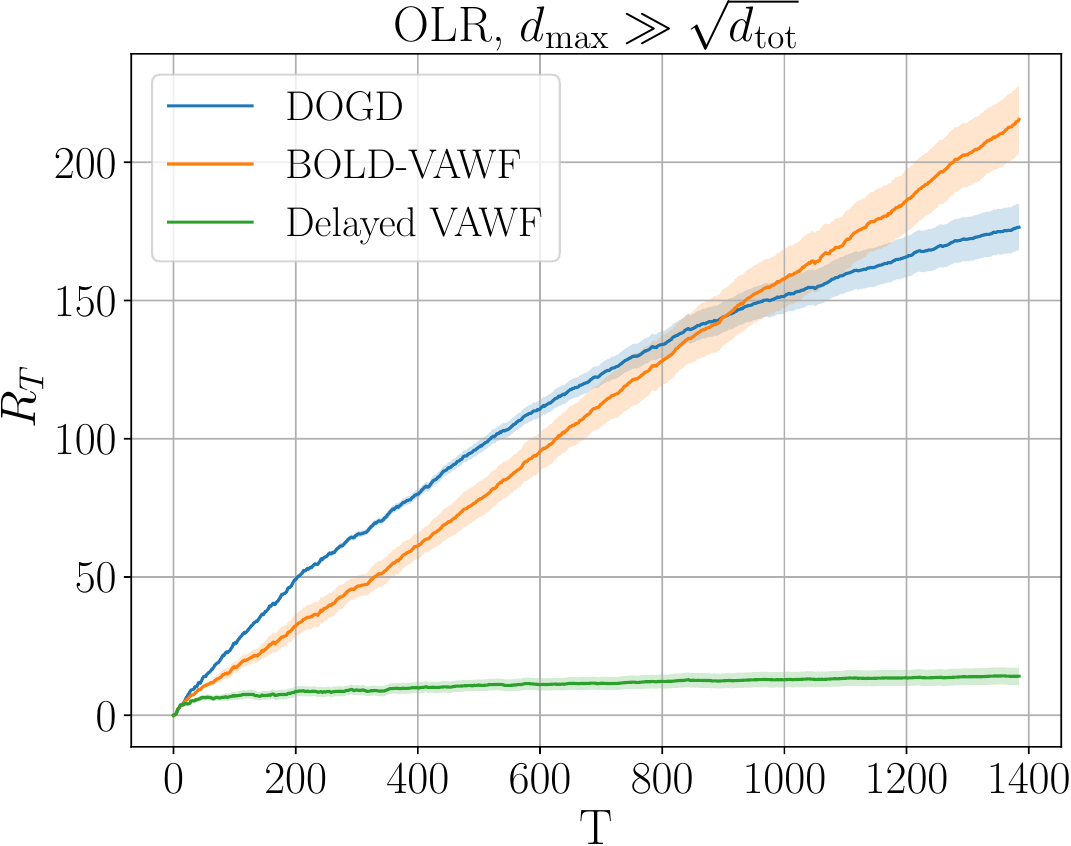}
        \label{fig:olr_dtot_r}}
    \caption{Comparison with relevant baselines. The shaded areas consider a range centered around the mean with half-width corresponding to the empirical standard deviation over $20$ repetitions.}
    \label{fig:exps_real_world}
\end{figure*}

\begin{figure*}[ht]
    \centering
    \subfloat{%
        \centering
        \includegraphics[width=0.27\textwidth]{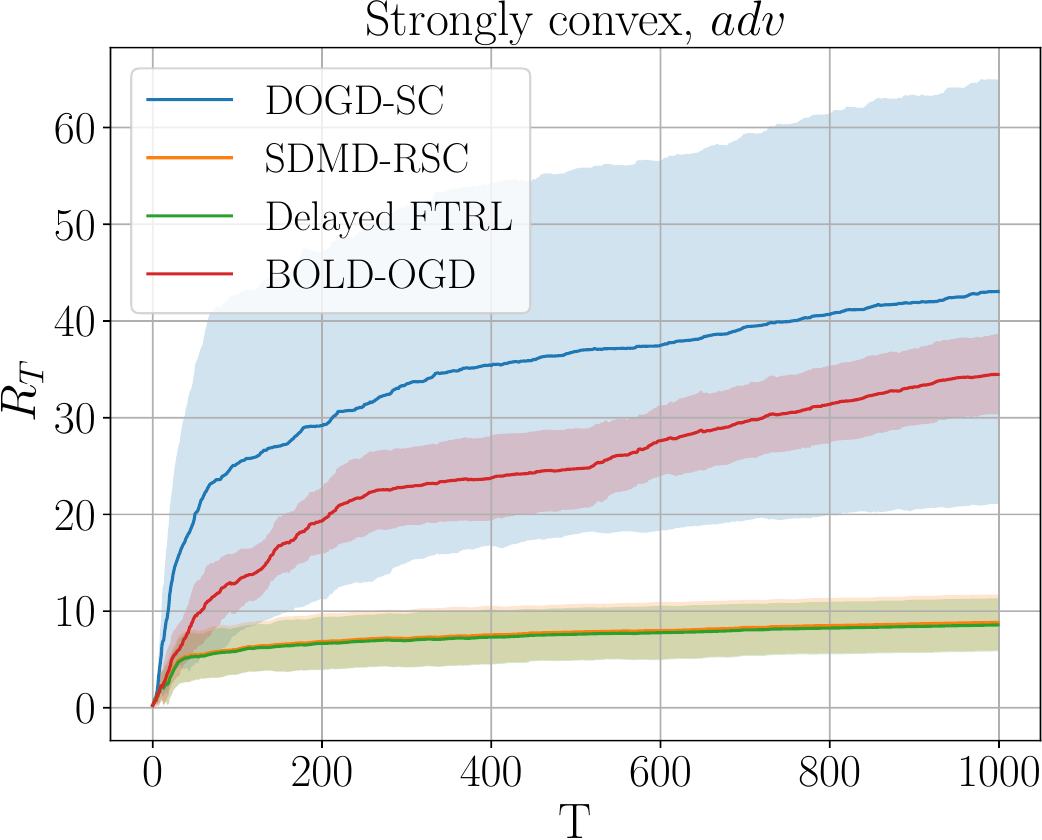}
        \label{fig:sc_dmax_non_stationary}
    }
    \hfill
    \subfloat{%
        \centering
        \includegraphics[width=0.27\textwidth]{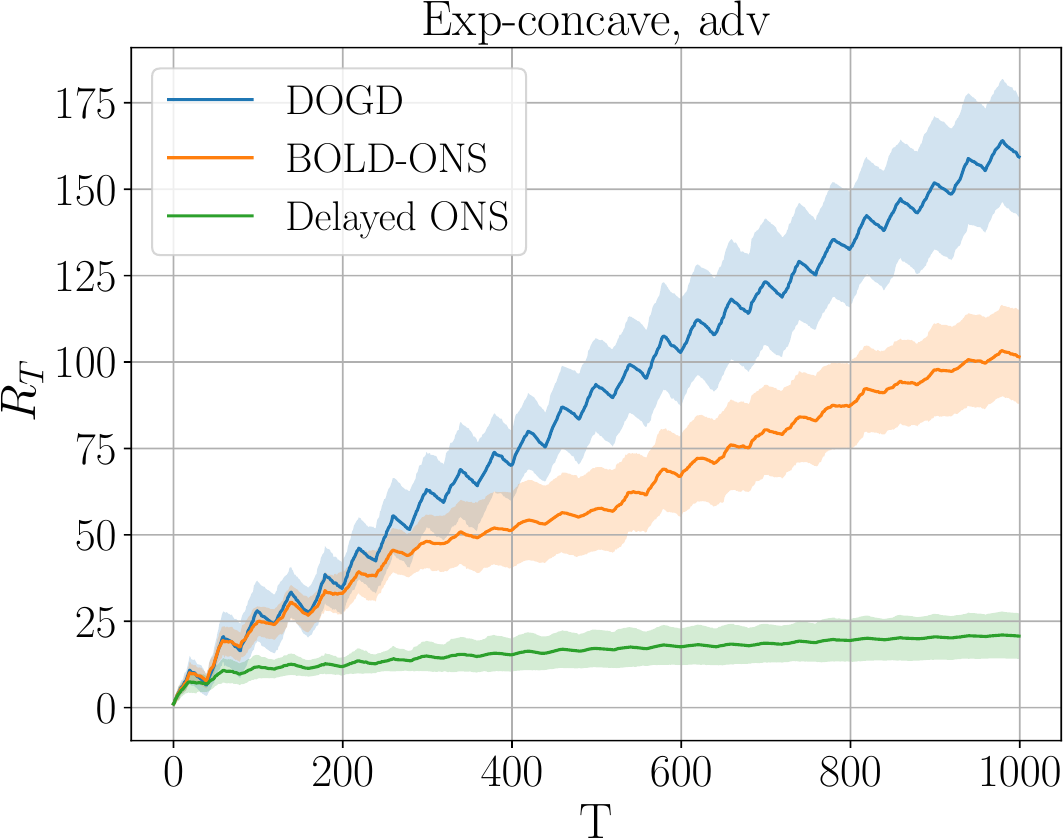}
        \label{fig:exp_dmax_non_stationary}
    }
    \hfill
    \subfloat{%
        \centering
        \includegraphics[width=0.27\textwidth]{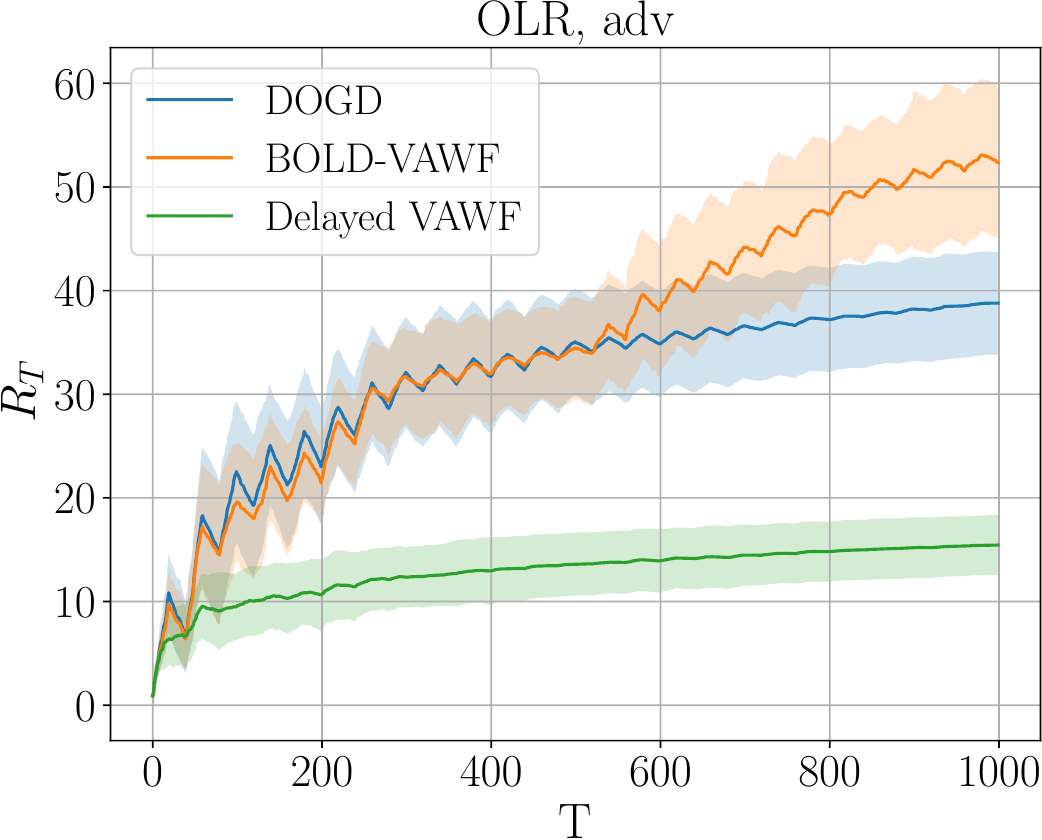}
        \label{fig:olr_dmax_non_stationary}
    }
    \vspace{3pt}
    
    \subfloat{%
        \centering
        \includegraphics[width=0.27\textwidth]{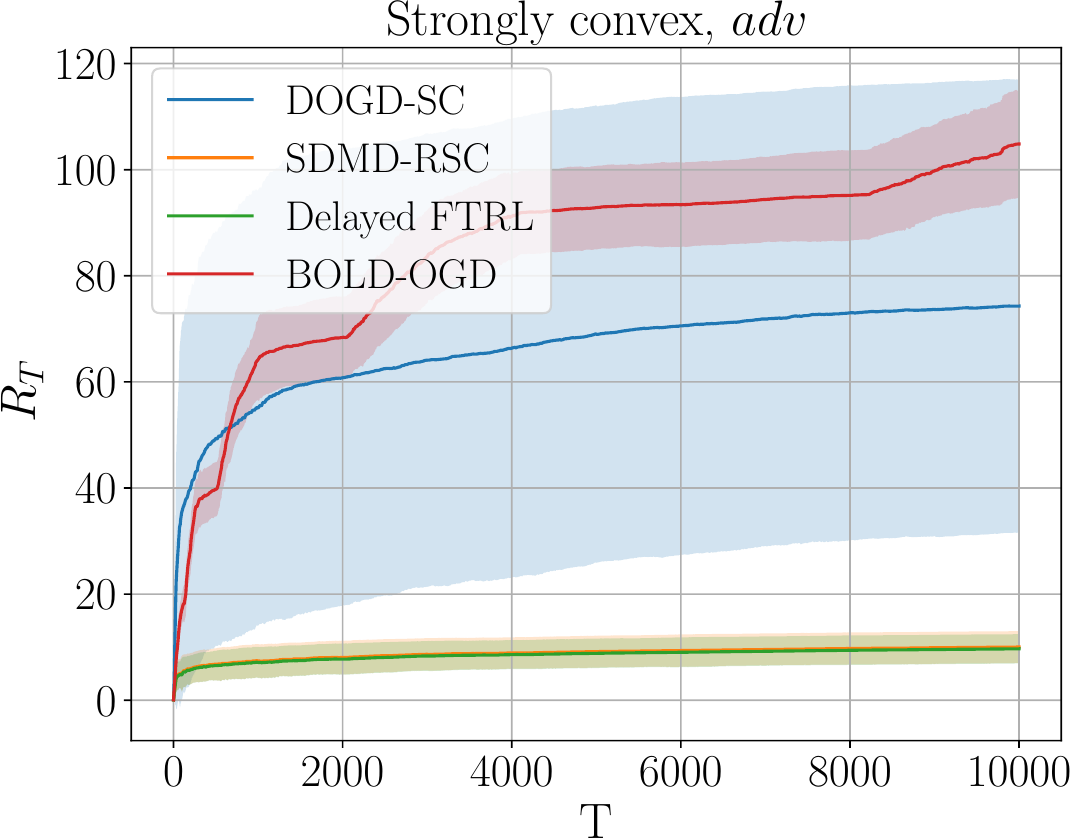}
        \label{fig:sc_dtot_non_stationary}
    }
    \hfill
    \subfloat{%
        \centering
        \includegraphics[width=0.27\textwidth]{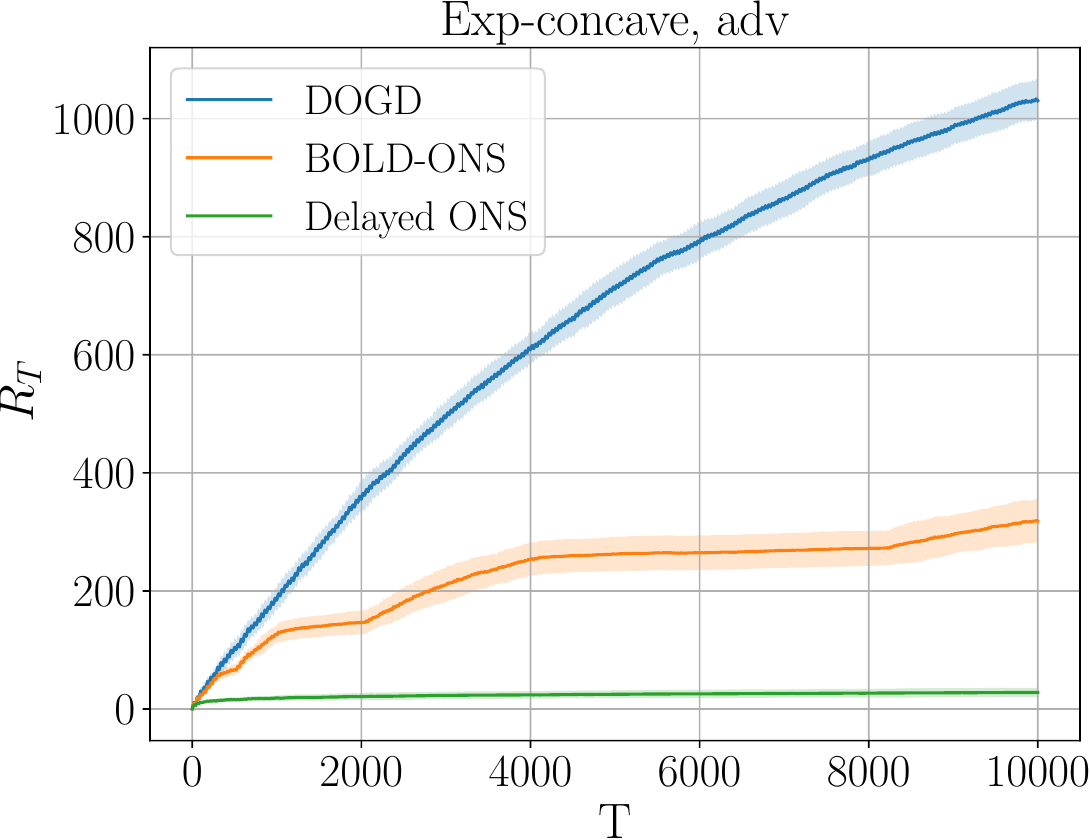}
        \label{fig:exp_dtot_non_stationary}
        }
         \hfill
    \subfloat{%
        \centering
        \includegraphics[width=0.27\textwidth]{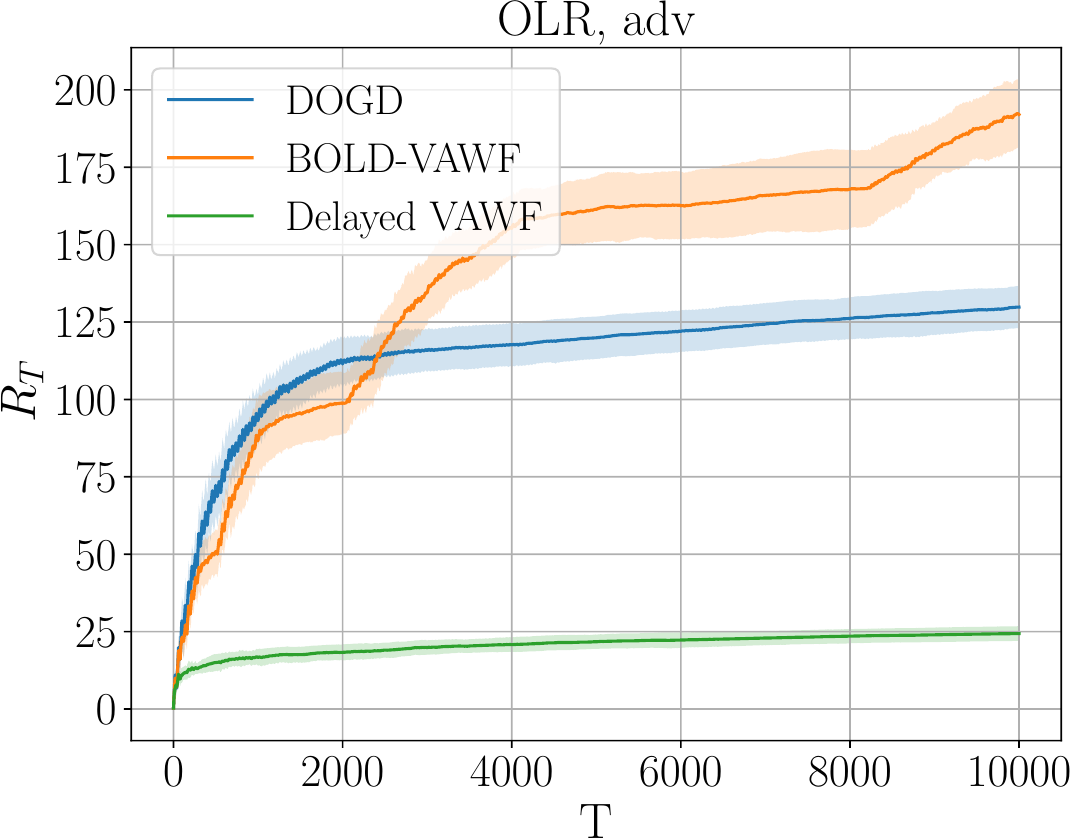}
        \label{fig:olr_dtot_non_stationary}}
    \caption{Comparison with relevant baselines. The shaded areas consider a range centered around the mean with half-width corresponding to the empirical standard deviation over $20$ repetitions. The top plots correspond to $T=1000$, while the bottom plots correspond to $T=10000$.}
    \label{fig:exps_non_stationary}
\end{figure*}

We consider a real-world dataset $mg\_scale$ from the LIBSVM repository \citep{chang2011libsvm}. This dataset has 1385 samples and each sample has $6$ features with values in $[-1,1]$ and a label in $[0,2]$. The experimental setup, including constructions of losses and delays, follows what already done for the experiments in \cref{sec:exps}.
\Cref{fig:exps_real_world} shows a similar behaviour of the algorithms as already shown in \cref{sec:exps}.

We also designed a non-stationary environment as follows. The generation processes for the feature vectors, as well as the definition of the loss function, remain the same as the environment in \cref{sec:exps}. However, we modified the generation of the label $y_t$:
\begin{equation}
y_t=\bigl\langle z_t,\theta_t \bigr\rangle+\epsilon_t\;,
\end{equation}
where the latent vector $\theta_t$ alternates every 30 rounds between the two vectors $\mathbf{1}$ and $\mathbf{0}$. This periodic change introduces non-stationarity, reflecting scenarios where the optimal action shifts over time. The delay $d_t$ is independently sampled from a distribution that alternates every 30 rounds between a geometric distribution with success probability $T^{-1 / 3}$ and a uniform distribution over the set $\{0,1, \ldots, 5\}$. Additionally, we also modify the noise term
$\epsilon_t$ inspired by \citet{pmlr-v202-xu23u}. Specifically, we flatten an abstract art piece by Jackson Pollock and take consecutive grayscale values in $[0,1]$ as the noise $\epsilon_t$.
\Cref{fig:exps_non_stationary} shows that our algorithms again perform the best among all the benchmark algorithms.



\end{document}